%% file: main.tex
\documentclass[11pt]{article} 
\input{dependencies/packages}
\input{dependencies/shortcuts}

\title{Efficient Active Auditing of Multi-Group Fairness with Bias Probes}

\author{
  \textbf{Ayoub Ajarra, Debabrota Basu}\\
  \'Equipe Scool, Univ. Lille, Inria, CNRS, Centrale Lille, UMR 9189- CRIStAL\\
  \small{\texttt{ayoub.ajarra@inria.fr}}
}

\date{}

\begin{document}

\maketitle

\begin{abstract}

Over the past decade, Machine Learning (ML) has been trained under dual objectives: minimizing prediction error via Empirical Risk Minimization (ERM) while controlling unfairness bias. In practice, however, fairness-aware training often yields limited improvements over standard ERM, making reliable post hoc auditing essential. Existing auditing approaches for black-box models either rely on model reconstruction --exposing systems to extraction attacks-- or directly estimate fairness metrics, offering limited insight into which regions of the data distribution drive bias. More fundamentally, property-specific auditing --aimed at extracting only targeted fairness information without reconstructing the model-- remains poorly understood. In this work, we introduce the bias probe framework, which enables targeted and adaptive querying to reveal bias structure while preserving model confidentiality. Building on this framework, we propose ALeBi, an active auditor that learns such probes to efficiently estimate multi-group fairness metrics. We establish novel sample complexity guarantees governed by a property-specific complexity measure, resolving a previously posed open question, and extend our analysis to adversarial settings where the model owner may strategically obscure bias. Our results uncover a fundamental trade-off between model confidentiality and reliable auditing, and show that property-specific probing enables both accurate estimation and interpretable identification of high and low-bias regions. Extensive experiments support our theoretical findings and demonstrate the practical effectiveness of our approach.
\end{abstract}


\tableofcontents

\newpage

\input{sections/introduction}

\input{sections/probdef}

\input{sections/manipfree}

\input{sections/manip}

\input{sections/experiments}

\input{sections/conclusion}

\acks{This work is supported by the Regalia project of Inria and French Ministry, ANR JCJC project REPUBLIC (ANR-22-CE23-0003-01), and
the PEPR project FOUNDRY (ANR23-PEIA-0003). D. Basu would also like to acknowledge the Inria-
ISI Kolkata associate team SeRAI for relevant motivation and discussions. }

\bibliographystyle{apalike}
\bibliography{refs}

\newpage
\appendix
\part*{Appendix}
\input{appendix/frameworkbis}
\input{appendix/manipfree}

\input{appendix/manip}
\newpage
\input{appendix/malliscious}

\input{appendix/experiments}

\end{document}

%% file: dependencies/packages.tex
\usepackage[margin=1in]{geometry}
\usepackage{hyperref}
\hypersetup{colorlinks=true,linkcolor=blue,citecolor=blue}

\usepackage{amsthm}
\usepackage{thmtools}
\usepackage{thm-restate}
\usepackage[mathscr]{eucal}
\usepackage{bbm}

\usepackage[most]{tcolorbox}
\usepackage{xcolor}

\definecolor{boxgray}{RGB}{245,245,245}
\definecolor{titlegray}{RGB}{70,70,70}

\usepackage[most]{tcolorbox}

\definecolor{theorybg}{RGB}{247,247,247}
\definecolor{theorytitle}{RGB}{75,75,75}

\newtcolorbox{theorybox}[1]{
    enhanced,
    breakable,
    colback=theorybg,
    colframe=theorytitle,
    boxrule=0.5pt,
    sharp corners,
    title={#1},
    coltitle=white,
    fonttitle=\bfseries,
    colbacktitle=theorytitle,
    left=3mm,
    right=3mm,
    top=2mm,
    bottom=2mm,
    before skip=10pt,
    after skip=10pt,
}
\newcommand{\acks}[1]{\section*{Acknowledgments}#1}
\newcommand{\nonl}
{\renewcommand{\nl}{\let\nl\oldnl}}

\usepackage{latexsym,tikz,graphicx}
\usetikzlibrary{calc}
\usetikzlibrary{decorations.pathreplacing}
\usetikzlibrary{patterns}

\usepackage{amsfonts}
\usepackage{xcolor}
\usepackage{amsthm}

\usepackage{amsmath}
\usepackage{amssymb}
\usepackage{amsthm}
\usepackage{hyperref}
\usepackage[capitalize]{cleveref}
\hypersetup{colorlinks=true,linkcolor=blue,citecolor=blue}
\usepackage{natbib}

\usepackage{multirow}
\usepackage{mathtools}
\usepackage{graphicx}
\usepackage{url}

\usepackage{xcolor}

\newtheorem{theorem}{Theorem}[section]
\newtheorem{lemma}{Lemma}[section]
\newtheorem{example}{Example}[section]
\newtheorem{definition}{Definition}[section]

\newtheorem{remark}{Remark}[section]
\newtheorem{corollary}{Corollary}[section]

\newtheorem{claim}{Claim}[section]
\newtheorem{assumption}{Assumption}[section]

\usepackage{thm-restate}

\usepackage{dsfont}
\usepackage{bm}
\usepackage{float,lipsum,tcolorbox}
\usepackage{xcolor}

\usepackage{subcaption}
\usepackage{graphicx}

\usepackage{multirow}
\usepackage{mathtools}
\usepackage{graphicx}
\usepackage{siunitx}
\usepackage{url}
\usepackage{algorithm}
\usepackage{algpseudocode}

\usepackage{bbm}

\usepackage{latexsym,tikz,graphicx}
\usetikzlibrary{calc}
\usetikzlibrary{decorations.pathreplacing}
\usetikzlibrary{patterns}

\renewenvironment{proof}[1][]{\par\noindent{\bf Proof #1\ }}{\hfill$\blacksquare$\\[2mm]}

\usepackage{adjustbox}
\usepackage{caption}
\usepackage{array}
\usepackage{threeparttable}
\usepackage{makecell}

\usepackage[utf8]{inputenc}
\usepackage{booktabs}
\usepackage{stmaryrd}
\usepackage{pifont}
\usepackage{xltabular}

\newtcbox{\resultbox}{
  colback=gray!10,
  colframe=black,
  boxrule=0.5pt,
  arc=4pt,
  boxsep=4pt
}

\usepackage[mathscr]{eucal}

%% file: dependencies/shortcuts.tex
\newcommand{\auditor}{\mathbf{A}}

\newcommand{\modelowner}{\mathbf{M}}
\newcommand{\shwartz}{\mathrm{DS}_k}

\newcommand{\E}{\mathbb{E}}
\newcommand \prob {\mathop{{{\mathbb{P}}}}\nolimits}
\newcommand{\cA}{\mathcal{A}}

\newcommand{\cD}{\mathcal{D}}
\newcommand{\cE}{\mathcal{E}}
\newcommand{\cF}{\mathcal{F}}

\newcommand{\cH}{\mathcal{H}}
\newcommand{\cI}{\mathcal{I}}
\newcommand{\cJ}{\mathcal{J}}

\newcommand{\cO}{\mathcal{O}}
\newcommand{\cP}{\mathcal{P}}
\newcommand{\cQ}{\mathcal{Q}}

\newcommand{\cT}{\mathcal{T}}

\newcommand{\cV}{\mathcal{V}}

\newcommand{\cX}{\mathcal{X}}
\newcommand{\cY}{\mathcal{Y}}
\newcommand{\cZ}{\mathcal{Z}}

\newcommand{\bX}{\mathbf{X}}
\newcommand{\bY}{\mathbf{Y}}

\newcommand{\bx}{\mathbf{x}}
\newcommand{\bb}{\mathbf{b}}
\newcommand{\by}{\mathbf{y}}

\newcommand{\bbH}{\mathbb{H}}

\newcommand{\R}{\mathbb{R}}


%% file: sections/introduction.tex
\section{Introduction}

Assessing ML models has become increasingly critical, as mounting evidence shows that deployed systems may exhibit harmful biases across a wide range of domains, including medical imaging \citep{seyyed2021underdiagnosis,daneshjou2022disparities,vrudhula2024machine,banerjee2021reading}, hiring and job screening \citep{harwell2022face,wilson2024gender}, credit lending \citep{garcia2024algorithmic}, and online platforms \citep{biswas2020machine}. Among these concerns, discriminatory bias has received particular attention, giving rise to the field of algorithmic fairness \citep{barocas2016big,hardt2016equality}. A central line of work focuses on designing fair learning procedures, where fairness constraints are incorporated directly into the training process, typically as optimization constraints trading off predictive accuracy and fairness guarantees \citep{zafar2017fairness,agarwal2018reductions}. However, empirical evidence suggests that such approaches may yield limited improvements over standard ERM in practice \citep{zong2022medfair}. 

These limitations, combined with increasing regulatory pressure, have shifted attention toward post-deployment auditing. In the United States, regulations such as New York City’s Local Law 144 \cite{ny_bill_s8612_2024} mandate bias audits of automated decision systems. In Europe, the Digital Services Act \cite{EC_DSA_2024} and the AI Act \cite{eu_ai_act_2024,ebers2025truly} impose transparency, risk management, and auditing requirements for large-scale and high-risk ML systems in online platforms. These developments reflect a broader shift, where auditing is becoming a central requirement for deploying ML systems in high-stakes settings. In this post-deployment regime, the model is typically treated as a black box. An external auditor seeks to assess the model’s behavior without access to its internal structure, interacting with the model owner through queries. This setting introduces a fundamental tension: the auditor aims to extract sufficient information to reliably evaluate fairness, while the model owner may wish to limit disclosure in order to protect model's sensitive information.

A large body of work has studied fairness auditing under this paradigm. Existing approaches typically fall into two categories. The first relies on \emph{model reconstruction}, using tools from active learning \citep{angluin1988queries,balcan2012active} to learn the model --or an accurate surrogate-- before estimating fairness metrics. The second relies on \emph{direct estimation} of a difference of means, using labeled samples. However, both approaches exceed the auditing objective: reconstruction attempts to recover the full model, while direct estimation is as costly as replicating the learning process and do not produce explanations, making it impractical in realistic auditing scenarios. Moreover, reconstruction-based approaches may expose the model to extraction attacks, compromising confidentiality. This tension raises the following question:
\begin{quote}
\hypertarget{Q1}{\textcolor{blue}{\textbf{\textit{Q1}}}: Can we obtain distribution-free guaranties for fairness auditing that avoid both full model reconstruction and large-scale labeled data requirements?}
\end{quote}

Beyond estimation, interpretability is also a critical component of auditing. In the context of machine learning regulation, Article 86 of the AI Act \cite{eu_ai_act_2024} underscores the right to explanation in automated decision-making. For example, an individual denied a bank loan by an ML system is entitled to receive an explanation for that decision. While standard audit methods provide quantitative summaries of bias \citep{goldreich1998property,yan2022active,chugg2023auditing}, they do not yield interpretable representations of the model’s discriminatory behavior. In parallel, the literature on representation learning has introduced \emph{probes} as simple functions used to extract specific properties from learned representations \citep{alain2016understanding}. Inspired by this perspective, we introduce \emph{bias probes} -- structured comparison functionals that capture relational disparities between protected groups. Unlike feature attribution methods, which operate at the level of individual inputs, bias probes focus on intergroup comparisons and provide an interpretable summary of discriminatory behavior. To the best of our knowledge, this is the first work to formalize fairness auditing with explicit interpretability guarantees in this sense.

Finally, we must account for the security of the auditing process. Existing auditing methods, particularly those that rely on membership queries \citep{yan2022active}, may render the model vulnerable to extraction attacks, effectively reducing auditing to model reconstruction. This highlights a second fundamental challenge: ensuring that auditing procedures do not compromise model confidentiality. This leads to our second question:
\begin{quote}
\hypertarget{Q2}{\textcolor{blue}{\textbf{\textit{Q2}}}:  Can we design sample-efficient and interpretable auditing procedures while preserving model confidentiality against extraction attacks?}
\end{quote}

Beyond the accuracy and interpretability of the audit report, we further extend our setting by considering a model owner who may behave adversarially, with the goal of concealing unfairness across groups. The adversarial behavior we consider differs from the adversarial models commonly studied in the literature, such as Huber contamination models~\cite{chen2016general}, adversarial distribution shifts~\cite{croce2020robustbench}, or adversarial input perturbations~\cite{montasser2021adversarially,goodfellow2018making}. In these settings, the adversary typically manipulates the data or the input distribution, whereas our adversary is \emph{fairness-aware}: the model owner may strategically manipulate the model's behavior to make unfairness harder to detect by the auditor.

This leads to our second research question:

\begin{quote}
\hypertarget{Q3}{\textcolor{blue}{\textbf{\textit{Q3}}}:  How can we design fairness auditors that are accurate and interpretable while remaining robust to fairness-aware adversaries?}
\end{quote}


\begin{figure}
    \centering
    \includegraphics[width=0.4\linewidth]{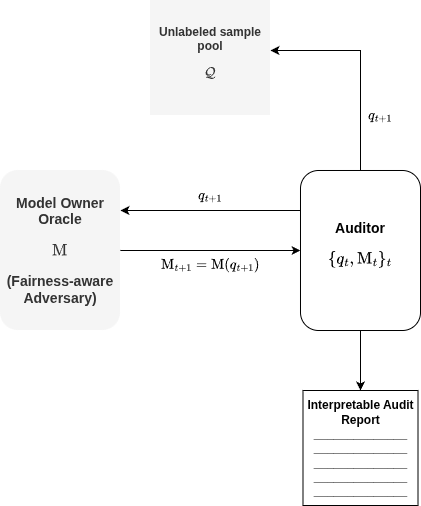}
    \caption{Interactive fairness auditing against a fairness-aware adversary.}
    \label{fig:protocol}
\end{figure}

In Figure \ref{fig:protocol}, the auditor is given the task to audit a property of interest $\mu$, has access to a pool of unlabeled instances $\cQ$ sampled from the marginal input distribution, providing information about the regions of the input space relevant to the audit. The auditor then selectively queries the model owner with samples that are informative with respect to $\mu$. At round $t+1$, the auditor constructs a query $q_{t+1}$ and sends it to the model owner $\modelowner$, who returns a response $\modelowner_{t+1}=\modelowner(q_{t+1})$.

We assume a powerful, fairness-aware adversary that may strategically alter its responses to conceal disparities across groups. Based on the history $\{(q_1,\modelowner_1),\ldots,(q_{t+1},\modelowner_{t+1})\}$, the auditor produces an estimate $\hat{\mu}_{t+1}$ of the property under audit, together with regions of the input space in which the model exhibits larger fairness disparities.

In this work, we address these challenges in the setting of multi-group fairness. We consider an interactive auditing framework in which an auditor queries a black-box model through carefully designed inputs drawn from a pool of unlabeled data. Our approach focuses on learning comparison functionals that capture the model’s discriminative behavior across protected groups, rather than reconstructing the model itself. This perspective enables both statistical efficiency and interpretability while inherently limiting the information exposed to the auditor, thereby protecting the model owner from reconstruction attacks.

\subsection{Related Work}

\paragraph{Fairness Auditing.} Algorithmic fairness has been widely studied from both the learning and auditing perspectives. A large body of work focuses on learning fair models via constrained optimization or reduction-based approaches \citep{zafar2017fairness,agarwal2018reductions}. In addition, post hoc audit methods aim to assess the fairness properties of a model deployed under limited access \citep{kearns2018preventing,yan2022active,chugg2023auditing}. Recent works have proposed active auditing strategies for estimating fairness metrics under distributional assumptions \citep{yan2022active}. These approaches often reduce auditing to model reconstruction within a hypothesis class, leading to guarantees that scale with the complexity of learning.  Other related works propose frameworks, where bias audit reduces to a binary objective via testing approaches \citep{chugg2023auditing,goldreich1998property}.  In contrast, our work treats fairness auditing as a property estimation problem, focusing on the intrinsic combinatorial structure of intergroup comparisons rather than recovering the full model.

\paragraph{Interpretability and Probing Methods.}
Post hoc interpretability methods, including feature attribution techniques \citep{ribeiro2016should,lundberg2017unified} and counterfactual explanations \citep{kusner2017counterfactual}, aim to provide descriptive summaries of model behavior. In parallel, probing methods have been developed in representation learning to extract specific properties from learned representations using simple models \citep{vafa2025has,alain2016understanding} or alternatively explaining biases in Large Language Models \cite{immer2022probing,morehouse2025rethinking,guo2022auto,manerba2024social}. Our notion of bias probes is conceptually related but differs in both objective and scope. Rather than analyzing internal representations, bias probes capture relational disparities between protected groups and are directly tied to fairness properties. Moreover, our probes are equipped with statistical guarantees, bridging interpretability and rigorous auditing.

\paragraph{Active Learning and Disagreement-Based Complexity.}
Disagreement-based active learning characterizes label complexity through geometric and combinatorial quantities such as the disagreement coefficient in distribution-dependent settings \citep{balcan2006agnostic, dasgupta2007general} and the star number in distribution-free regimes \citep{hanneke2014theory, hanneke2015minimax}. These parameters capture the ability of a hypothesis class to isolate individual instances via localized perturbations around a reference hypothesis. Our analysis is inspired by this framework but departs from it in a fundamental way. In classical active learning, disagreement regions are subsets of the instance space. In contrast, the auditing setting we consider induces disagreement over cross-group tuples, leading to a multipartite geometry rather than a pointwise one. To capture this structure, we introduce the intergroup star number, which extends the classical notion of star number to relational settings between sub-populations. This quantity measures the capacity of a comparison class to isolate cross-group configurations, and serves as the key complexity parameter governing sample efficiency in our setting.

\paragraph{Preference Learning.}
Preference learning is closely related to our setting, as it also relies on pairwise comparisons. In its classical formulation, a collection of users provides pairwise preferences over a finite set of items, and the goal is to predict unseen preferences. Many approaches aim to estimate a score matrix of size (number of users) $\times$ (number of items), often under low-rank assumptions and solved via convex optimization \cite{park2015preference}. Several works consider active querying in this setting. For instance, \cite{ailon2012active} studies adaptive pairwise comparisons over a finite set of items and returns a total order whose disagreement is competitive with the optimal ranking. However, their guarantees concern approximation to the best ranking with bounded query complexity, rather than uniform learning over a general function class. Similarly, methods based on convex relaxations, such as SVM-rank, operate within a fixed parametric class and focus on ranking performance rather than learning comparison functionals. \cite{mao2023h} provides finite-sample guarantees linking surrogate losses to target ranking losses. However, this line of work focuses on bipartite ranking and misranking loss, rather than recovering structure from pairwise comparisons in a general interactive setting. In contrast, our approach does not aim to learn a ranking over a finite set of items. Instead, we learn a class of comparison functionals that capture the discriminative behavior of a black-box model over the input distribution. This shifts the problem from ranking to distributional estimation, where the goal is to approximate the induced comparison structure with uniform guarantees over a function class.

\paragraph{Model Extraction Attacks.} Model extraction attacks have emerged as a major threat to ML systems deployed through black-box interfaces. In these attacks, an adversary interacts with a model via query access and trains a surrogate model that replicates the functionality of the original model \citep{tramer2016stealing,orekondy2019knockoff, yuan2024data,orekondy2019prediction}. Early work demonstrated that even simple query strategies can recover models such as logistic regression, decision trees, and neural networks with high fidelity \citep{tramer2016stealing}. Subsequent work has shown that extraction is possible even without access to real data, by generating synthetic queries or by using knowledge distillation techniques \citep{orekondy2019knockoff,yuan2022attack}. From a broader perspective, model extraction raises both intellectual property and security concerns, as stolen models can be reused or further exploited for downstream attacks \citep{jagielski2020high}. In the context of large language models (LLMs), model owners with API access are highly vulnerable to model extraction attacks, posing significant security risks. In particular, \cite{liu2025model} show that any low-rank language model can be learned in polynomial time using conditional queries. We design auditing mechanisms that intentionally restrict the information revealed through queries, and show that recovering the underlying model from our bias probes is computationally hard (under standard assumptions).

\subsection{ Summary of contributions}

Our main contributions are as follows:

\begin{itemize}

\item \textbf{A comparison-based framework for fairness auditing.}
We introduce a novel framework based on \emph{bias probes}, which capture the discriminative behavior of a model through structured inter-group comparisons. We formalize fairness auditing as the problem of learning comparison functionals and establish distribution-free guarantees for estimating multi-group fairness. 

\item \textbf{A new complexity measure and optimal query-complexity bounds.}
We introduce the $k$-\emph{inter-group star number} $\mathfrak{s}_k$ (Definition~\ref{def:dualstar}), extending the classical star number~\citep{hanneke2015minimax} to the multi-group setting and addressing an open question raised by~\citep{yan2022active} in the case $k=2$, that we generalize for arbitrary to arbitrary $k$. We characterize the query complexity of auditing in terms of this parameter, proving the lower bound $\Omega \left(
\max \left\{ \frac{1-2\delta}{1-\delta} \min\left\{ \mathfrak s_k, \left\lfloor\frac1{2\epsilon}\right\rfloor \right\}, \; \left[ 1-2\epsilon-2\delta(1-\epsilon) \right]_+ \shwartz \right\} \right)$. We further propose $k$-\textsc{ALeBi}, an active auditing algorithm achieving the upper bound
$$\resizebox{\linewidth}{!}{$ \mathcal{O}\left(
\min \left\{ n,\; (e-1)\min\{\mathfrak s_k,n\} \log\frac{en}{\min\{\mathfrak s_k,n\}} + \log\frac{2}{\delta} + 1,\; \frac{\min\{\mathfrak s_k,n\}}{1\vee\log \min\{\mathfrak s_k,n\}}\, \shwartz \log\left( \frac{ en(2^k-2) }{ \shwartz } \right) \right\}
\right)$}$$ where $n= \left\lceil \frac{\shwartz +\log(4/\delta) }{ \epsilon } \right\rceil$, and $\shwartz$ is the Daniely-Shwartz dimension \citep{daniely2015multiclass} (Definition \ref{def:shwartz}).
\item \textbf{Hardness of model extraction from bias probes.}
We distinguish accurate fairness auditing from recovery of the underlying classifier and show that selecting a bounded set of relational queries sufficient to identify all candidate models is NP-complete. Empirically, accurate fairness estimates can be obtained while multiple underlying models remain observationally compatible.

\item \textbf{Auditing under a fairness-aware manipulation model.}
We introduce a fairness-aware adversary that activates based on the auditor's current disparity estimate and may adaptively corrupt individual relational coordinates in CGQ responses. Under conditionally bounded corruption with $p<1/2$, we propose \textsc{Robust $k$-ALeBi}, which achieves the same asymptotic bounds as \textsc{$k$-ALeBi}, up to additional terms that depend on the corruption level.

\item \textbf{Empirical evaluation.}
We empirically evaluate \textsc{ALeBi} along four dimensions: estimation accuracy, interpretability of the resulting audit reports, protection against model extraction attacks, and robustness to fairness-aware manipulation. We compare our approach against two baselines, \textsc{ReconAudit}, which reconstructs the underlying model before performing the audit, and \textsc{DirectAudit}, which directly estimates the fairness property. Our experiments illustrate the trade-offs between these auditing paradigms and evaluate their behavior under both honest and adversarial model-owner interactions.

\end{itemize}

%% file: sections/probdef.tex
\section{Problem Formulation: Auditing Multi-group Fairness}

Let $\cH$ denote a hypothesis class. 

The model owner $\modelowner$ trains a model $h : \cZ \subseteq \R^{p+1} \to \Tilde{\cY}$ from $\cH$ on a dataset which then deploys. We assume that the first $p$ coordinates correspond to unprotected features, while the $(p+1)$-th coordinate represents a protected attribute $A$. This attribute partitions the input space $\cZ$ into $k \geq 2$ disjoint subpopulations $\cZ_1, \ldots, \cZ_k$ (e.g., when $A$ represents gender, $k=2$). 

We denote by $\mathds{1}_{[\cdot]}$ the indicator function. Finally for $a \in \R$, $[a]_+ := \max(a,0)$.

We study group-level properties of $h$. In particular, we consider the maximum disparity of $h$ across protected groups, captured by multi-group statistical parity.

\begin{definition}[Multi-Group Fairness]\label{def:kSP}
Let $h : \cZ \to \Tilde{\cY}$ and let $\cD$ be a distribution over $\cZ$. For a pair of groups $(i,j) \in [k]^2$, define
$$\mu_{i,j}(\cD,h)
=  \underset{X \sim \cD}{\E}[h(X) \mid X \in \cZ_i]
- \underset{X \sim \cD}{\E}[h(X) \mid X \in \cZ_j] .$$
The multi-group fairness violation of $h$ is
$$\mu(\cD,h) = \max_{i,j \in [k]} \mu_{i,j}(\cD,h).$$
\end{definition}

The functional  $\mu(\cD,h)$ depends jointly on the model and the data distribution. In particular, disparities in regions of $\cZ$ with negligible probability mass under $\cD$ do not contribute substantially to $\mu$. Therefore, as illustrated in Figure \ref{fig:protocol}, estimating $\mu$ requires access to the marginal distribution over $\cZ$, which motivates an active audit setting in which information about $\cD$ is obtained through a pool of unlabeled points $\cP$.

\begin{remark}
Unlike previous auditing studies \citep{yan2022active, ajarra2026on}, which consider properties defined in terms of an absolute difference, our fairness property is signed and therefore preserves the \emph{direction} of the disparity. In particular, identifying which group is favored and which group is disadvantaged is an essential part of the audit for interpretability. This directional information must be preserved for every pair of groups, as reflected in our definition, which takes the maximum over group pairs.
\end{remark}

More generally, we consider an auditing setting in which an auditor $\auditor$ has access to a pool of unlabeled samples $\cQ$ and interacts with a model owner $\modelowner$ by issuing queries $q_1, \ldots, q_m$ in order to estimate a property $\mu$ of a model $h$. Formally, $\mu : \cP(\cZ) \times \cH \to \R$ is a functional that quantifies a distributional property of models in $\cH$. An auditing problem is defined by the tuple
$\langle \cZ, \Tilde{\cY}, \cP, \mathrm{I}_{\modelowner}(h), \mu, \ell_\mu \rangle$, where $\cZ$ and $\Tilde{\cY}$ denote the input and output spaces, $\cP$ is a class of probability measures over $\cZ$, and $\mathrm{I}_{\modelowner}(h)$ represents the information about $h$ available to the auditor. In the white-box setting, $\mathrm{I}_{\modelowner}(h) = h$, while in the black-box setting it corresponds to oracle access to $h$, possibly augmented with side information such as the hypothesis class $\cH$. The functional $\mu$ captures a property of $h$ under distributions in $\cP$, and $\ell_\mu : \R \times \R \to \R$ is a bounded loss function. Additional details about the general auditing framework are provided in Appendix~\ref{app:generalfram}. We now introduce the notion of PAC auditing in the interactive setting. We focus on auditing multi-group properties, which introduce additional challenges as reliable estimation requires uniform information across subpopulations. In particular, we study the auditing of multi-group fairness (Definition~\ref{def:kSP}). To date, several approaches have been proposed to audit statistical parity. Broadly, these methods fall into two categories:

\subsection{Audit via  Model Reconstruction}
 In this setting, the auditor has access to the hypothesis class $\cH$ and query access to the model $h$, i.e., $\mathrm{I}_{\modelowner}(h) = \{\cH, \modelowner(h)\}$. The goal is to reconstruct a surrogate model by emulating the training process within $\cH$. For instance, \cite{yan2022active} use disagreement queries to identify a subclass of $\cH$ whose elements share the same statistical parity on a given unlabeled pool. 

\newpage
\begin{theorybox}{Interaction Protocol for Reconstruction-based Audit.}
The interaction between the auditor \textsc{ReconAudit} and the model owner $\modelowner(h)$ proceeds as follows:

\begin{itemize}
\item The auditor is given the hypothesis class $\cH$ and access to the unlabeled pool $\cQ$, and a query budget $T$.

\item \textbf{Round $t=1$.} The auditor selects a query $X_1 \in\cQ\cap\mathrm{DIS}(\cH)$ and sends it to the model owner. The model owner returns the response $h(X_1)$.

\item \textbf{Round $t=2, \cdots, T$.} Based on the history of previous queries and responses, the auditor updates its current hypothesis class $V_t$ and selects a query $X_t\in\cP\cap\mathrm{DIS}(V_t)$. It sends $X_t$ to the model owner and receives the response $h(X_t)$.

\item \textbf{Reconstruction.} After $T$ rounds, the auditor selects any $\hat{h} \in V_T$ to reconstruct a surrogate model of $h$.

\item \textbf{Audit.} The auditor applies a plug-in estimator to $\hat h$ to estimate the multi-group fairness property using the history of queries collected during the interaction.

\end{itemize}
\end{theorybox}

However, such approaches rely on queries designed for learning rather than for estimating the target property. As a result, their sample complexity scales with that of learning, typically $\cO(d/\varepsilon^2 \log \frac{1}{\delta})$, where $d$ denotes a complexity measure of $\cH$ (e.g., VC dimension). This dependence not only leads to inefficiency but also raises model confidentiality concerns, as it may enable model extraction. We design our first baseline, \textsc{ReconAudit}, inspired by the reconstruction-based auditing approach.\footnote{We note that the algorithm of \cite{yan2022active} is restricted to the class of linear separators. We therefore develop a general reconstruction-based algorithm that does not rely on a specific model class.}

\begin{algorithm}[H]
\caption{\textcolor{blue}{\textsc{ReconAudit}}}
\label{alg:reconstruction-cal}
\begin{algorithmic}[1]
\Require $\mathrm{I}_{\modelowner}(h) = \{\cH, \modelowner(h)\}$, query pool $\mathcal{Q}$,
budget $T$
\State Initialize
\[
V_0=\mathcal{H}.
\]
\For{$t=1,\ldots,T$}
    \State Select
    $$X_t \in  \mathrm{DIS}(V_t) $$
    \State Query $$\modelowner_t= h(X_t).$$
    \State Update
    \[
    V_t
    =
    \left\{
    h' \in V_{t-1}:
    h'(X_t)= \modelowner_t
    \right\}.
    \]
\EndFor
    \State Reconstruct
    \[
    \widehat h \in V_T
    \]
\State \Return $\widehat h$ and
\[
\widehat \mu_T=\mu_{\cQ_{1:T}}(\widehat h).
\]
\end{algorithmic}
\end{algorithm}

\begin{remark}
We first observe that model learning does not depend on the protected groups represented in the data. In particular, \textsc{ReconAudit} does not account for group membership during the learning procedure, which may lead to higher reconstruction errors for some groups than for others.

Second, once the model has been reconstructed, the auditor effectively has white-box access to the model, and the subsequent audit can therefore employ any white-box procedure, including procedures for producing interpretable audit reports. Whether such a reconstruction-based strategy remains robust in the presence of a fairness-aware adversary, however, is not clear \emph{a priori}. We investigate this question in Section~\ref{sec:exp}.
\end{remark}

\subsection{Audit via Direct Estimation}
In this setting, the auditor samples $\cO(1/ \varepsilon^2 \log \frac{1}{\mu})$ from each group in the pool $\cP$, and perform membership queries \footnote{ or alternatively has access to a random query oracle}. The objective is to estimate the target property $\mu$ directly from these samples, without reconstructing the model. While the sample complexity still scales as $\cO(k/\varepsilon^2 \log \frac{1}{\mu})$, it no longer depends on the complexity of $\cH$. This avoids the need for uniform approximation over the hypothesis class and mitigates risks associated with model reconstruction.

\begin{theorybox}{Interaction Protocol for Direct-Estimation Audit}
The interaction between the auditor \textsc{DirectAudit} and the model owner $\modelowner(h)$ proceeds as follows:

\begin{itemize}
\item The auditor is given access to an unlabeled pool $\cQ$ and a query budget $T$.

\item \textbf{Rounds $t=1,\ldots,T$.} At each round, for each group $i\in[k]$, the auditor samples points from $\cQ$ and queries the model owner. It uses the responses to estimate the expected positive-label rate for each group.

\item \textbf{Estimation.} After $T$ rounds, the auditor obtains estimates of the pairwise disparities
\[
\left\{
\hat{\mu}_{ij}: i,j\in[k],\, i<j
\right\},
\]
forming a vector of size $\frac{k(k-1)}{2} = \cO(k^2)$.

\item \textbf{Audit.} The auditor outputs the maximum estimated pairwise disparity as its estimate of the multi-group fairness property.

\end{itemize}
\end{theorybox}

\begin{remark}
The reconstruction-based audit can be viewed as a \emph{search-then-estimate} procedure: the auditor first searches for a surrogate model consistent with its observations and only then extracts the audit report from the reconstructed model. In contrast, the direct-estimation approach estimates the fairness property directly from queried responses, without first reconstructing the model.

This distinction is particularly relevant when the auditor is used to verify whether a model update changes the property under audit. In the reconstruction-based approach, the learned surrogate can be used to check whether the updated model agrees with the pre-audit model on the set of points used for the plug-in estimation. However, this set is also used during model reconstruction and has size scaling as $\cO\!\left(\frac{d}{\varepsilon^2}\log\frac{1}{\mu}\right).$
Consequently, requiring the updated model to agree with the original model on this large set of points can impose a substantially stronger constraint than requiring the fairness property itself to remain unchanged. In this sense, the reconstruction-based audit may effectively constrain the model update over a large portion of the input space, rather than merely verifying the stability of the property under audit.
\end{remark}

\begin{algorithm}[H]
\caption{\textcolor{blue}{\textsc{DirectAudit}}}
\label{alg:direct}
\begin{algorithmic}[1]
\Require $\mathrm{I}_{\modelowner}(h) = \modelowner(h)$, Query distribution $\cD$, budget $T$
\For{$i=1,\ldots,k$}
\For{$t=1,\ldots,T$}
    \State Sample $X_t^{(i)} \sim \cD_{\cdot \mid \cZ_i}$ 
\Comment{$\cD_{\cdot\mid \cZ_i}$: marginal distribution conditioned on group $i$.}
    \State Membership Query
    $\modelowner_t^{(i)}=h(X_t^{(i)})$ \Comment{Alternatively, use random queries from the joint distribution.}
\EndFor
\State Estimate model behavior on every group $$\widehat{\mu}_T^{(i)} = \frac{1}{T}\sum_{t=1}^T \modelowner_t^{(i)}.$$
\EndFor
\State Compute every pairwise component $$ \mathrm{\mu} = \Big(\widehat{\mu}_T^{(i)} - \widehat{\mu}_T^{(j)}   \Big)_{1 \le i <j \le k} $$

\State Return $$ \widehat\mu = || \mu ||_{\infty} $$
\end{algorithmic}
\end{algorithm}

\paragraph{Organization.}
In Section~\ref{sec:mainresults}, we introduce our setting for auditing multi-group fairness and formalize the intermediate step in which our proposed algorithm processes the given pool of unlabeled samples before issuing queries to the model owner. In Section~\ref{sec:manipfree}, we derive upper and lower bounds for the auditing problem in the manipulation-free regime. In Section~\ref{sec:manip}, we extend our analysis to the presence of a fairness-aware adversary and study the auditing problem under model manipulation. Finally, in Section~\ref{sec:exp}, we provide an empirical evaluation of our proposed approach and compare its performance with the two baselines discussed above, \textsc{ReconAudit}, \textsc{DirectAudit}, as well as the auditing procedure proposed by \citet{yan2022active} \footnote{The interaction protocol for \citep{yan2022active} algorithm is given in Appendix \ref{app:baselines}.}.

%% file: sections/manipfree.tex
\section{Interpretable, Manipulation-Proof and Reconstruction-Free Audits with Bias Probes}\label{sec:mainresults}

In this section, we introduce the auditing setting that will be used throughout the remainder of the paper. Rather than extracting audit-relevant information directly from the input space $\cZ$, we consider a conceptual framework in which such information is extracted over the product space of protected groups. This perspective allows us to design an auditing framework that focuses exclusively on the fairness property under audit, while providing interpretable audit information and mitigating the risk of model extraction.

\paragraph{Lifting the Sample Pool to the Product Space.}
Let $\cX \triangleq \prod_{i=1}^k \cZ_i$ denote the product space induced by the $k$ protected subpopulations. An element $\bx=(x_1,\ldots,x_k)\in\cX$ consists of one sample from each protected group, and $\cX$ serves as the query space for auditing multi-group fairness (Definition~\ref{def:kSP}). This perspective motivates a new auditing formulation in which the relevant information is represented by a class of functions defined directly on the query space, as introduced below:

\begin{definition}[Multi-Context Class]\label{def:multicontextclass}
For a hypothesis class $\cH$. The relative multi-context class induced by $\cH$ is defined as:
\[
\mathcal{F}_k(\cH)
= \left\{ f_h : \cX \to \R \;\middle|\;
f_h(\bx)
= \modelowner_k\big(h(x_1),\dots,h(x_k)\big), \; h \in \cH \right\},
\]
where $\modelowner_k$ is a fixed operator that depends on the property under audit.
\end{definition}

\begin{remark}
In fact, the operator $\modelowner_k$ performs a preliminary processing of the unlabeled pool, reorganizing and structuring the available information according to group membership for the auditing task. In our setting, this preprocessing is specific to the multi-group fairness property under audit, as illustrated in Figure~\ref{fig:setting}, hence $\modelowner_k$ is a comparison operator.  That is for every $\bx \in \cX$, $$f_h(\bx) = \left(h(x_i) - h(x_j)  \right)_{1 \le i<j \le k} $$ The resulting partition of available information is further depicted in Figure~\ref{fig:overall}, which provides an overview of our auditing setting. $h$ being a fixed model under audit, we will relax the notation and use $f$ instead of $f_h$.
\end{remark}

This perspective naturally induces an interaction model between the auditor $\auditor$ and the model owner $\modelowner$. Unlike standard membership-query access, the auditor cannot query the model on individual points or observe its absolute predictions. Instead, the auditor submits a structured query $\bX\in\cX$, consisting of one sample from each protected group. The auditor's access to the model is mediated by the oracle $\mathrm{I}_{\modelowner}(\bX) = \modelowner_k(\bX)$,  which, for our multi-group fairness property, returns only comparative evaluations across groups: $\modelowner_k(\bX) = \bigl(f_{i,j}(\bX)\bigr)_{i\neq j}$.  Thus, the auditor observes relative disparities between groups rather than the model's absolute predictions.

\begin{definition}[Cross-Group Queries (CGQ)]
Given access to a pool of unlabeled samples, a Cross-Group Query consists of selecting a tuple $\bX \in \cX$ formed by choosing one sample from each protected group. The response to $\bX$ is the collection of pairwise comparisons $\big(f_{i,j}(\bX)\big)_{i \neq j}$.
\end{definition}

\cite{kane2017active} proposed a related type of query in the context of active classification with linear separators. Their comparison queries consist of presenting a pair of points to an oracle, which indicates which point lies closer to the decision boundary. In our setting, instead, the auditor presents a pair of points from different groups to the model owner, who identifies the pair exhibiting the largest fairness disparity.

\begin{figure}
    \centering
    \includegraphics[width=0.5\linewidth]{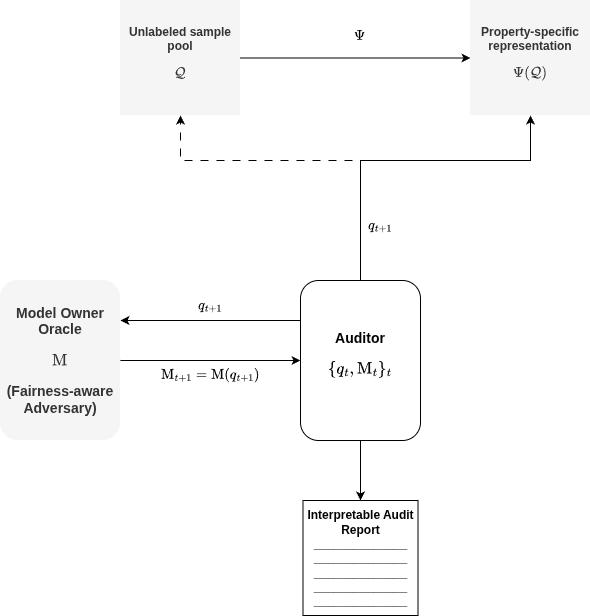}
    \caption{Interactive Fairness Auditing via Group-Based Pool Partitioning.}
    \label{fig:setting}
\end{figure}

\paragraph{Coupling and Sampling Invariance.}
We now formalize the role of the operator $\modelowner_k$ in Definition \ref{def:multicontextclass}, in other words, the role of sampling strategies over the product space from the pool $\cQ$. Let $\nu$ be any probability measure on $\cX$ whose marginals satisfy $\nu_i(\cdot) = \cD(\cdot \mid \cZ_i)$ for every $ i \in [k]$. Let $\nu_{i,j}$ denote the corresponding pairwise marginal on $\cZ_i \times \cZ_j$. The following result shows that the choice of coupling does not affect the value of $\mu$.

\begin{restatable}[Coupling Invariance]{proposition}{propcoupling}\label{prop:coupling}
Let $\mu$ denotes multi-group fairness defined in \ref{def:kSP}. 

Let $\nu\in\Pi(\nu_1,\ldots,\nu_k)$ be any coupling measure on $\cX$ whose $i$th marginal satisfies $\nu_i(\cdot)= \cD(\cdot\mid\cZ_i),$ for all $i \in [k]$. The following holds:
$$\mu(h,\mathcal D) = \left\| \mathbb E_{\mathbf X\sim\nu} [f_h(\mathbf X)] \right\|_\infty$$
\end{restatable}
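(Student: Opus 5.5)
The plan is to use linearity of expectation coordinatewise, then compare the maximum of signed differences with the sup-norm. Recall from the remark after Definition~\ref{def:multicontextclass} that $f_h(\bX) = (h(X_i)-h(X_j))_{1\le i<j\le k}$. Fix a pair $i<j$. The $(i,j)$ coordinate of $f_h(\bX)$ depends on $\bX$ only through $(X_i,X_j)$, so it suffices to work under the pairwise marginal $\nu_{i,j}$. Linearity of expectation gives
\[
\E_{\bX\sim\nu}\bigl[h(X_i)-h(X_j)\bigr] = \E_{X_i\sim\nu_i}[h(X_i)] - \E_{X_j\sim\nu_j}[h(X_j)].
\]
Only the one-dimensional marginals appear on the right, and the dependence structure of $\nu$ drops out entirely. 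This is the coupling-invariance part. Since $\nu_i(\cdot)=\cD(\cdot\mid\cZ_i)$, the first term equals $\E_{X\sim\cD}[h(X)\mid X\in\cZ_i]$ and the second equals $\E_{X\sim\cD}[h(X)\mid X\in\cZ_j]$. Hence the $(i,j)$ coordinate of $\E_\nu[f_h(\bX)]$ is exactly $\mu_{i,j}(\cD,h)$. Integrability holds whenever $h$ is bounded, or more generally $\cD$-integrable on each group; I would state this as a standing assumption.

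The second step passes from the coordinates to $\mu$, using the antisymmetry $\mu_{j,i}=-\mu_{i,j}$ and $\mu_{i,i}=0$. Definition~\ref{def:kSP} takes the maximum over all ordered pairs. Therefore
\[
\mu(\cD,h)=\max_{i,j\in[k]}\mu_{i,j}=\max_{i<j}\max\{\mu_{i,j},-\mu_{i,j}\}=\max_{i<j}|\mu_{i,j}|,
\]
where the first equality is nonnegative because the diagonal terms vanish. The right-hand side is precisely $\|\E_\nu[f_h(\bX)]\|_\infty$.

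The main obstacle is a notational tension rather than the computation. The Remark after Definition~\ref{def:kSP} stresses that $\mu$ is signed, yet the signed information appears only through which ordered pair attains the maximum. The numerical value $\mu$ is still the sup-norm. I would therefore make this antisymmetry argument explicit. I would also note that if $f_h$ is indexed over all ordered pairs $i\neq j$, as in the CGQ response, then the sup-norm is still the same, since coordinates come in $\pm$ pairs. A second minor point is measurability of $h\circ\pi_i$ on $\cX$, which follows because each coordinate projection $\pi_i$ is measurable.
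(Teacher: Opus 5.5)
Your proof is correct and follows the paper's argument: componentwise linearity of expectation, reduction from the pairwise marginal $\nu_{ij}$ to the group marginals $\nu_i,\nu_j$, identification of each coordinate with $\mu_{i,j}(\cD,h)$, and then the sup-norm. Your explicit antisymmetry step ($\mu_{j,i}=-\mu_{i,j}$, $\mu_{i,i}=0$, hence $\max_{i,j\in[k]}\mu_{i,j}=\max_{i<j}|\mu_{i,j}|$) is a genuinely useful addition, because the paper's proof goes from $\max_{i<j}|\mu_{ij}(h,\cD)|$ to $\mu(h,\cD)$ without justifying this step against the signed Definition~\ref{def:kSP}.
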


This invariance implies that auditing multi-group fairness can be carried out without modeling the joint distribution across groups, and justifies the use of arbitrary coupling strategies when constructing queries from the sample pool. 

\begin{proof}
Let $\nu\in\Pi(\nu_1,\ldots,\nu_k)$ be any coupling measure on $\cX$ whose $i$th marginal satisfies 

$\nu_i(\cdot)=
\cD(\cdot\mid\cZ_i),$ for all $i \in [k]$
For $i\neq j$, let $\nu_{ij}$ denote the $(i,j)$ marginal of $\nu$
on $\mathcal Z_i\times\mathcal Z_j$.

By definition, $f_h(\mathbf X) = \bigl( h(X_i)-h(X_j) \bigr)_{1\le i<j\le k}.$

Since expectation of a finite-dimensional random vector is taken
componentwise,
\begin{align*}
\mathbb E_{\mathbf X\sim\nu}
\left[
f_h(\mathbf X)
\right]
&=
\left(
\mathbb E_{\mathbf X\sim\nu}
\left[
h(X_i)-h(X_j)
\right]
\right)_{1\le i<j\le k}
\\
&=
\left(
\mathbb E_{(X_i,X_j)\sim\nu_{ij}}
\left[
h(X_i)-h(X_j)
\right]
\right)_{1\le i<j\le k}
\\
&=
\left(
\int_{\mathcal Z_i\times\mathcal Z_j}
\bigl(
h(x_i)-h(x_j)
\bigr)
\,d\nu_{ij}(x_i,x_j)
\right)_{1\le i<j\le k}
\\
&=
\left(
\int_{\mathcal Z_i\times\mathcal Z_j}
h(x_i)\,d\nu_{ij}(x_i,x_j)
-
\int_{\mathcal Z_i\times\mathcal Z_j}
h(x_j)\,d\nu_{ij}(x_i,x_j)
\right)_{1\le i<j\le k}.
\end{align*}

Because the marginals of $\nu_{ij}$ are $\nu_i$ and $\nu_j$,
respectively,
\begin{align*}
\mathbb E_{\mathbf X\sim\nu}
\left[
f_h(\mathbf X)
\right]
&=
\left(
\int_{\mathcal Z_i}
h(x)\,d\nu_i(x)
-
\int_{\mathcal Z_j}
h(x)\,d\nu_j(x)
\right)_{1\le i<j\le k}
\\
&=
\left(
\mathbb E_{X\sim\mathcal D(\cdot\mid\mathcal Z_i)}
[h(X)]
-
\mathbb E_{X\sim\mathcal D(\cdot\mid\mathcal Z_j)}
[h(X)]
\right)_{1\le i<j\le k}.
\end{align*}

And because $\mu_{ij}(h,\mathcal D) := \mathbb E_{X\sim\mathcal D(\cdot\mid\mathcal Z_i)} [h(X)] - \mathbb E_{X\sim\mathcal D(\cdot\mid\mathcal Z_j)} [h(X)],$
we therefore obtain
$$\mathbb E_{\mathbf X\sim\nu}
[f_h(\mathbf X)]
=
\bigl(
\mu_{ij}(h,\mathcal D)
\bigr)_{1\le i<j\le k}$$

Consequently, $\left\| \mathbb E_{\mathbf X\sim\nu} [f_h(\mathbf X)] \right\|_\infty = \max_{1\le i<j\le k} |\mu_{ij}(h,\mathcal D)|$

Therefore
$$\mu(h,\mathcal D) = \left\| \mathbb E_{\mathbf X\sim\nu} [f_h(\mathbf X)] \right\|_\infty$$

\end{proof}

This reformulation shows that auditing multi-group fairness can be viewed as estimating expectations of pairwise comparison functionals under the product distribution. In particular, it shifts the objective from learning the model $h$ itself to estimating the structured family of functional defined on $\cX$. The resulting interaction model differs fundamentally from reconstruction-based approaches: rather than approximating $h$ over the hypothesis class $\cH$, the auditor directly probes the relational structure induced by $h$ across protected groups.

\begin{remark}
Although the value of the audited property is invariant to the choice of coupling, the coupling determines the distribution of the relational queries observed by the auditor and therefore the information available during the audit. We therefore treat the coupling as part of the audit design: a coupling \(\nu\in\Pi(\cD_1,\ldots,\cD_k)\) is selected before interaction and remains fixed throughout the auditing procedure, with CGQs sampled i.i.d. from \(\nu\). The auditor may adaptively decide which sampled CGQs to query, but may not adapt the coupling itself in response to previous oracle outputs. This restriction does not constrain the value of the fairness functional, which depends only on the protected-group marginals, but it fixes the statistical experiment through which that functional is learned. Allowing the coupling to change adaptively would define a strictly stronger query model in which the auditor can modify the distribution of future CGQs based on past responses, rather than merely perform active selection from a fixed data-generating distribution.

\end{remark}

\subsection{Interpretable Audits via Bias Probes}
To obtain interpretable insights into the model's behavior, we introduce bias probes, which localize the comparative behavior of $h$ across protected groups.

\begin{definition}[Bias Probes]\label{def:bias_probes}
Let $\bx = (x_1,\dots,x_k) \in \cX$. 
\begin{itemize}
    \item \textbf{Global bias probe.} The global bias probe is defined as $  f \colon \bx  \mapsto \Big(f_{i,j}(\bx)\Big)_{i \neq j}.$ 
  \item \textbf{Feature-wise bias profile.}
Let $q\in[p]$ denote an unprotected feature, 

and $P_q(\bX)=\bigl(X_1^{(q)},\ldots,X_K^{(q)}\bigr)$ the the projection of the CGQ query on the $q^{\text{th}}$ feature. For each pair $(i,j)$, we define the conditional feature-wise probe
$$\mathfrak e_{q,i,j}(z) := \mathbb{E}_{\bX\sim\mathcal D^K} \left[ f_{i,j}(\bX) \mid P_q(\bX)=z \right]$$
The vector $\mathfrak e_q(z)=\bigl(\mathfrak e_{q,i,j}(z)\bigr)_{i<j}$
describes how the expected relational response of the deployed model
varies with feature $q$.

\item \textbf{Feature-wise bias magnitude.}
We summarize the conditional relational disparity by
$$\widetilde f_q(z) := \max_{i<j} \left| \mathfrak e_{q,i,j}(z) \right| = \max_{i<j} \left| \mathbb{E} \left[ f_{i,j}(\bX) \mid P_q(\bX)=z \right] \right|$$
\end{itemize}
\end{definition}

\begin{example}[Interpretable Audit Report via Unprotected Feature-wise Bias Probes]
Consider a credit-lending setting in which a bank deploys a binary classifier
$h:\mathcal X\to\{0,1\}$ to decide whether to grant a loan. Suppose that each
client is represented by three unprotected input features, with monthly income
and age corresponding to the first and second coordinates, respectively, and
let gender be the protected attribute. For simplicity, assume two protected
groups. A cross-group query is $Q=(X_1,X_2)\in\mathcal X_1\times\mathcal X_2,$
where $X_1$ and $X_2$ are sampled from the two protected groups. The associated
relational bias probe is $    f_{1,2}(Q)=h(X_1)-h(X_2)\in\{-1,0,+1\}$.
The auditor observes this relational response rather than the two individual
predictions separately.

For an unprotected feature $q$, let $P_q(Q):=\bigl(X_1^{(q)},X_2^{(q)}\bigr).$
For example, $P_1(Q)=\bigl(X_1^{(1)},X_2^{(1)}\bigr)$
contains the monthly incomes of the two queried clients, while $P_2(Q)=\bigl(X_1^{(2)},X_2^{(2)}\bigr)$
contains their ages. The feature-wise bias profile is $\mathfrak e_q(z_1,z_2) := \mathbb E\!\left[ f_{1,2}(Q) \mid P_q(Q)=(z_1,z_2) \right],$
and its unsigned magnitude is $\widetilde f_q(z_1,z_2) := \left|\mathfrak e_q(z_1,z_2)\right|.$
Thus, $\mathfrak e_q$ records the direction and magnitude of the conditional group
prediction disparity, while $\widetilde f_q$ records only its magnitude.

The resulting audit report can combine the following complementary summaries:
\begin{itemize}

    \item \textbf{Global bias probe.}
    The global statistical-parity disparity is obtained from $\mu(h) = \left| \mathbb E[f_{1,2}(Q)] \right|. $
    This provides an overall measure of the prediction-rate difference between
    the two protected groups.

    \item \textbf{Feature-wise bias profile.}
    For a feature such as age or monthly income, the function $m_q$ describes
    how the signed relational disparity varies across values of that feature.
    For example, $\mathfrak e_1(z_1,z_2)$ gives the expected relational response among
    cross-group queries whose two clients have monthly incomes near
    $(z_1,z_2)$.

    \item \textbf{Expected feature-wise bias magnitude.}
    The quantity $ \widetilde f_q(z_1,z_2) = |\mathfrak e_q(z_1,z_2)|$
    identifies regions of the projected feature space in which the conditional
    group disparity is large. For monthly income, it can therefore reveal
    income combinations for which the model exhibits particularly pronounced
    group-relative prediction differences; the same construction applies to
    age.

\end{itemize}

Importantly, these quantities provide \emph{probe-level interpretability}:
they are estimable from projected CGQ feature values and relational probe
responses. They should not be interpreted as ordinary predictive feature
importance or as causal effects of the corresponding features on the
classifier.
\end{example}

Together, these three quantities provide complementary levels of evidence: the global probe establishes whether the model exhibits discrimination overall, the feature-wise probes indicate \emph{where} disparities vary across the input space, and the expected feature-wise probes identify regions in which the disparities are most pronounced. The resulting audit report therefore provides the model owner not only with evidence of unfairness, but also with interpretable information about where and along which features that unfairness manifests itself.

Bias probes thus provide a localized view of the model's discriminative behavior, isolating how differences across groups manifest at specific feature configurations. While, the expected feature-wise bias probe represents the average intergroup comparison induced by $h$ among all inputs sharing the same configuration on the unprotected feature $q$, providing a low-dimensional summary of the model’s discriminative behavior.

We now formalize interpretable active auditing for general properties $\mu$ through the lens of comparison function learning.

\begin{definition}[Interpretable Active Auditing]\label{def:pacactiveprobe}
Let $\cP$ be a class of distributions over $\cZ$, and let $\cF$ be an intergroup context class defined over $\cX$. We say that $\cF$ is actively learnable under $\cP$ if there exists a sample complexity function $m_{\cF} : (0,1)^2 \to \mathbb{N}$, such that for all $(\varepsilon,\delta) \in (0,1)^2$, for all $\cD \in \cP$, and for all $m \geq m_{\cF}(\varepsilon,\delta)$, there exists an algorithm $\cA_m$ which, based on $m$ queries $\cQ_1,\dots,\cQ_m$, outputs an estimate $\hat f_m \in \cF$ satisfying
$$\underset{\cQ_1, \cdots \cQ_m}{\prob} \Big[
\cE_{\cD}(\hat f_m)
- \inf_{f \in \cF} \cE_{\cD}(f)
> \varepsilon
\Big] < \delta,$$

where the risk $\cE_{\cD}(f)$ is defined as $\cE_{\cD}(f)
:= \underset{\bX \sim \cD^k}{\prob}\big[f(\bX) \neq f_h(\bX)\big]$, and $f_h$ denotes the comparison functional induced by the model $h$. The minimax sample complexity of the auditing problem is defined as $\inf_{m_{\cF}} \sup_{\cD \in \cP} m_{\cF}(\varepsilon,\delta)$, 
where the infimum is taken over all valid active auditing strategies.
\end{definition}

This formulation reduces the auditing problem for $\mu$ to the problem of learning the comparison functional $f_h$, from which $\mu(\cD,h)$ can be estimated as a functional of $f_h$.

\subsection{Information Complexities For Learning Probes}

In the following section, we define the information-complexities relevant for characterizing the auditing problem of multi-group fairness. We begin with Natarajan dimension \cite{daniely2015multiclass}, that will serve to characterize global probe learning in the passive setting:

\begin{definition}[$k$-Daniely-Shwartz Dimension  \small{\citep{daniely2015multiclass}}]\label{def:shwartz}\leavevmode\\[-0.5em]
\begin{itemize}
    \item \textbf{Pseudo-cube.} Let $B\subseteq\mathcal Y^m$ be finite and non-empty. Two vectors $b,b'\in B$ are called $i$-neighbors if $b_i\neq b'_i$ and for every $j \neq i$, $b_j=b'_j$.
The set $B$ is an $m$-dimensional pseudo-cube if for every
$b\in B$ and every $i\in[m]$, $b$ has an $i$-neighbor in $B$.
\item \textbf{Daniely-Shwartz dimension.} A sequence $S=(\mathbf x_1,\ldots,\mathbf x_m)\in\mathcal X_k^m$ is DS-shattered by the $k$-context class $\cF_k$ if the set $\cF_k|_S = \{ (f(\bx_1),\ldots,f(\bx_m)): f\in\cF_k \}$
contains an $m$-dimensional pseudo-cube.

Define Daniely-Shwartz dimension $\shwartz(\cF)$
as the largest DS-shattered dimension, or $\infty$ if no finite largest
dimension exists.
\end{itemize}

\end{definition}

 For a fixed multi-context class $\cF$, the disagreement set is defined as follows:
 \newline
$\mathrm{DIS}(\cF) \triangleq \{ \bx \in \cX: \exists f, f' \in \cF: f(\bx) \neq f'(\bx) \}$. The disagreement sets arising in learning and auditing are objects of a
different nature. In standard learning, the disagreement region is a subset of
the sample space $\cZ$, reflecting uncertainty about the label of individual
instances. In contrast, for multi-group fairness auditing, the disagreement set
consists of \emph{pairs} of elements drawn from distinct protected groups,
encoding comparative uncertainty rather than pointwise ambiguity. This distinction originates from two structural differences: First, the input
space in the auditing setting carries additional organization through the
presence of a protected attribute, which induces a partition into groups.
Second, the underlying model classes differ in purpose: in learning, the hypothesis class represents candidate models themselves, whereas in auditing it represents surrogate preference functionals capturing only the
discriminative behavior of the fixed black-box model.

\begin{definition}[Multi-star Number]\label{def:dualstar}
Let $\mathcal{F}$ be a class of bias probes defined on $\cX$. 

The \emph{multi-star number} $\mathfrak{s}_k$ is the largest value of $m$ for which there exist distinct points $\{\bX_i: i \in [m]\}$, and functions $f_0 \in \cF, \quad f_i \in \cF$, such that for all $i \in [m]$, $\mathrm{DIS}(\{f_0,f_i\}) = \{\bX_i\}$, \emph{and each component $p$ of the multi-context vector $\bX$ belongs to the protected group $\cZ_p$, for $p \in [k]$}. If no finite maximum exists, we set $\mathfrak{s}_k = \infty$.
\end{definition}

Examples of multi-star number for multi-context classes is given in Appendix \ref{app:examplestar}. We prove the following result for the linear classifiers induced multi-context class.

\paragraph{Relationship between multi-star number and Daniely-Shwartz dimesnion.} The following claim establishes that the multi-star number is upper bounded by the DS dimension. Consequently, when the DS dimension is infinite, passive learning of the probe class is impossible, and hence active learning is impossible as well. Later, we characterize active learnability of probes in the interactive setting induced by CGQs in terms of both these information complexities. In particular, we show that the finiteness of the multi-star number provides a necessary and sufficient condition for learning probes in this interactive setting.

\begin{claim}\label{claim:danielystar}
    For an arbitrary multiclass $\cF$, the following holds:
    $$\mathfrak{s}_k(\cF) \le \shwartz(\cF)$$
\end{claim}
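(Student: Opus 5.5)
The plan is to take a witness of the multi-star number and show directly that its restriction to $\cF$ contains a pseudo-cube of the same dimension. Fix $m \le \mathfrak s_k(\cF)$ and a multi-star witness: distinct CGQs $\bX_1,\ldots,\bX_m$ with $\bX_i^{(p)}\in\cZ_p$ for all $p\in[k]$, a centre $f_0\in\cF$, and functions $f_1,\ldots,f_m\in\cF$ with $\mathrm{DIS}(\{f_0,f_i\})=\{\bX_i\}$. Let $S=(\bX_1,\ldots,\bX_m)$ and write $b^{(i)}=(f_i(\bX_1),\ldots,f_i(\bX_m))\in\cF|_S$ for $i=0,\ldots,m$. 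The goal is to exhibit an $m$-dimensional pseudo-cube $B\subseteq\cF|_S$. Then $S$ is DS-shattered, so $\shwartz(\cF)\ge m$, and taking the supremum over $m$ gives the claim; this also covers $\mathfrak s_k=\infty$.

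\textbf{Step 1 (the centre).} Since $f_i$ agrees with $f_0$ everywhere except at $\bX_i$, the pattern $b^{(i)}$ differs from $b^{(0)}$ exactly in coordinate $i$. Hence $b^{(0)}$ has an $i$-neighbour in $\cF|_S$ for every $i\in[m]$. So the centre already satisfies the pseudo-cube condition in every direction.

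\textbf{Step 2 (closing up the set).} A pseudo-cube also needs every other element to have a neighbour in every direction. I would build $B$ as the closure of $\{b^{(0)},\ldots,b^{(m)}\}$ inside $\cF|_S$ under taking neighbours, and check that each $b\in B$ has an $i$-neighbour in $B$ for every $i$. The $i$-direction is handled for $b^{(i)}$ by $b^{(0)}$ itself. The remaining requirement is that $b^{(i)}$ has a $j$-neighbour for $j\neq i$, that is, a function in $\cF$ agreeing with $f_i$ off $\bX_j$ and differing from it at $\bX_j$. I would try to produce such patterns by combining $f_i$ and $f_j$. This relies on the comparison structure of the coordinates: each $\bX_j$ carries the vector of pairwise differences $(f_{p,q}(\bX_j))_{p<q}$, so a modification localised at $\bX_j$ that is realised relative to $f_0$ should be transportable to $f_i$.

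\textbf{Step 3 and the main obstacle.} Step 2 is where I expect the difficulty. The star condition only constrains pairs $\{f_0,f_i\}$, so for a completely arbitrary $\cF$ nothing forces $\cF|_S$ to contain patterns differing from $b^{(i)}$ in a single coordinate $j\neq i$. The star configuration $\{b^{(0)},\ldots,b^{(m)}\}$ is a ``star'' in Hamming space, not a cube; classical analogues (thresholds, where the star number exceeds the VC dimension) suggest the closure can fail. My first attempt would be to pass to a sub-sequence $S'\subseteq S$ and a subset of $\cF|_{S'}$ on which the closure of Step 2 does hold, tracking how much dimension is lost. If that loss cannot be avoided, the argument as written will not deliver $\mathfrak s_k\le\shwartz$ for arbitrary $\cF$. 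In that case I would expect the correct statement to require that $\cF|_S$ be closed under single-coordinate substitution, which holds for the induced classes $\cF_k(\cH)$ whenever $\cH$ is rich enough, and I would then check that this hypothesis is what the later results actually use.
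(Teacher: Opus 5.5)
\textbf{The gap.} Your Step~2 cannot be carried out, and passing to a sub-sequence will not save it: the displayed inequality is false for arbitrary $\cF$, for exactly the reason you suspect in Step~3.

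Take $k=2$ and let $\cF$ consist of the zero probe $f_0\equiv 0$ together with, for each $\bx\in\cX$, the spike probe $g_{\bx}$ that equals $1$ at $\bx$ and $0$ elsewhere.
\begin{itemize}
\item Any distinct CGQs $\bX_1,\ldots,\bX_m$ form a multi-star with centre $f_0$ and witnesses $g_{\bX_i}$, since $\mathrm{DIS}(\{f_0,g_{\bX_i}\})=\{\bX_i\}$ even on all of $\cX$. Hence $\mathfrak s_k(\cF)=\infty$ when $\cX$ is infinite.
\item On any $m\ge 2$ distinct CGQs, every trace in $\cF|_S$ has at most one nonzero coordinate. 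If $B$ were a pseudo-cube, some $c\in B$ would be nonzero at some coordinate $i$ (take any $b\in B$ and its $i$-neighbour). A $j$-neighbour of $c$ with $j\neq i$ would then be nonzero at both $i$ and $j$, which is not a trace. So $\shwartz(\cF)=1<\mathfrak s_k(\cF)$.
\end{itemize}
This is the multiclass version of the classical fact that the star number dominates the VC dimension and can be infinitely larger (intervals, linear separators). It is not an artifact of allowing arbitrary $\cF$ either. Take $\cH$ to be the singleton indicators plus the zero function, and read the star condition relative to the star set. Then the induced class $\cF_2(\cH)$ already has infinite star number (witnessed by CGQs with pairwise distinct first components) and DS dimension $1$.

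\textbf{What the paper's argument actually proves.} The paper's proof runs in the opposite direction to yours. It starts from a DS-shattered $S$ with pseudo-cube $B$, takes an arbitrary $b_0\in B$ as centre, and uses the $i$-neighbours $b_i\in B$ as witnesses, obtaining $\mathrm{DIS}(\{f_0,f_i\})\cap S=\{\bx_i\}$. This shows that every DS-shattered set is a star set, i.e.\ $\shwartz(\cF)\le\mathfrak s_k(\cF)$, which is the reverse of the displayed claim. That reverse inequality is also the one the later text needs: finiteness of $\mathfrak s_k$ must force finiteness of $\shwartz$ for the bound of Theorem~\ref{prop:ManipfreeUB} to be finite.

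So instead of adding a closure hypothesis that forces star sets to be cubes, flip the inequality. Your Step~1 observation (the centre has an $i$-neighbour in every direction) is exactly the property that \emph{every} element of a pseudo-cube has. That is why any pseudo-cube element can serve as a star centre.

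One caveat: this requires reading the star condition as $\mathrm{DIS}(\{f_0,f_i\})\cap\{\bX_1,\ldots,\bX_m\}=\{\bX_i\}$, as the appendix does. With the literal global condition $\mathrm{DIS}(\{f_0,f_i\})=\{\bX_i\}$ of Definition~\ref{def:dualstar}, even the reverse inequality fails. For example, for a threshold class acting on one real feature, any two distinct probes disagree on infinitely many CGQs, so there is no star at all while $\shwartz=1$.
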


\begin{proof}
Let $S = \{ \bx_1,\bx_2, \cdots, \bx_m\}$ denotes a DS-shattered set, $B \subseteq \cF[S]$ an $m$-dimensionnal cube, and fix $b_0 \in B$ realizable by $f_0 \in \cF$ in $S$. 
For every $i$ in $[m]$, by the pseudo-cube definition, there exists an $i$-neighbor $b_i$ in $B$ of $b$. 
We have for every $j \in [m] \setminus \{i\}$, $b^i_i\neq b^0_i$ and $b^i_j=b^0_j$. Let $f_i$ in $\cF$ a probe that realizes $b_i$. That is $f_i|_S=b^i$. Since $f_0|_S=b^0$, we obtain
$$
f_i(\mathbf x_i)=b^i_i\neq b^0_i=f_0(\mathbf x_i),
$$
and for every $j \in [m] \setminus \{i\}$,
$$
f_i(\mathbf x_j)=b^i_j=b^0_j=f_0(\mathbf x_j).
$$
Then for every $i$ in $[m]$, $\mathrm{DIS}(\{f_0,f_i\}) \cap S = \{\bx_i\}$.
Hence $S$ is also a star set of size $m$. 
By taking the supremum over DS-shattered $S$, we obtain the desired property.
\end{proof}

\subsection{Information-Theoretic Separation of Reconstruction and Probe Auditing}

It is known that linear classifiers, even in low dimension input space (e.g., $d=2$), the star number is unbounded, implying that active reconstruction-based auditing cannot improve over passive reconstruction and therefore requires a large number of queries, increasing the risk of model extraction. Proposition~\ref{prop:starlinear} establishes a separation between auditing via active reconstruction and bias probing, the latter avoiding this intrinsic complexity barrier by focusing on property-specific queries.

\begin{restatable}[Information-Theoretic Separation of Reconstruction and Probe Auditing]{proposition}{starlinearclass}\label{prop:starlinear}
     There exists an audit instance problem, where auditing via model reconstruction is strictly harder than auditing via probe learning.
\end{restatable}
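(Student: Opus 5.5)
We will prove the statement by an explicit instance in which the reconstruction class has infinite star number while the induced probe class has finite multi-star number. Take $k=2$ and let $\cH$ be a class of threshold-type or linear classifiers on $\cZ=\R^{p+1}$ with binary outputs. The reconstruction auditor works with $\cH$ itself. The probe auditor works with $\cF_2(\cH)$, whose elements are $f_h(x_1,x_2)=h(x_1)-h(x_2)\in\{-1,0,1\}$ on $\cX=\cZ_1\times\cZ_2$.

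The cleanest instance for separating the two complexities is the following. Let $\cH$ be the union of two classes. The first consists of \emph{singletons} (or, equivalently, linear separators in $\R^2$ whose positive region cuts off one point of a convex-position configuration) living inside $\cZ_1$. The second consists of the constant classifiers.

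\textbf{Reconstruction side.} It is classical (Hanneke--Yang) that the star number of $\cH$ is infinite: take the constant $0$ as center and $h_i=\mathds 1_{\{z_i\}}$ for infinitely many $z_i\in\cZ_1$. Then $\mathrm{DIS}(\{h_0,h_i\})=\{z_i\}$. The distribution-free minimax label complexity of active learning is $\Theta(\min\{\mathfrak s,1/\epsilon\})$ up to logs, so reconstruction needs $\Omega(1/\epsilon)$ queries, i.e.\ no improvement over passive learning.

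\textbf{Probe side.} This is the main obstacle, because lifting to pairs does not automatically kill stars. If $f_{h_0}$ is the zero probe and $h_i=\mathds 1_{\{z_i\}}$, then $f_{h_i}$ disagrees with $f_{h_0}$ on the whole slice $\{z_i\}\times\cZ_2$, which is not a single tuple whenever $|\cZ_2|\ge 2$.

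The key step is therefore to show that any pair $f_0,f_i$ with $\mathrm{DIS}(\{f_0,f_i\})=\{\bX_i\}$ forces $h_0,h_i$ to differ on exactly one coordinate $x$. This single point would then have to meet only one tuple, which requires the complementary group to be a singleton. I would formalize this as follows. If $h_0(x)\neq h_i(x)$ for some $x\in\cZ_1$, then either every $x'\in\cZ_2$ also has $h_0(x')\neq h_i(x')$ with a compensating change, or the disagreement contains all of $\{x\}\times\cZ_2$ minus a set where the differences cancel. A counting argument then bounds the number of such isolated tuples.

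To make this finite rigorously, I would choose $\cZ_2=\{z^\ast\}$ infinite-free, or rather restrict $\cH$ so that on $\cZ_2$ all hypotheses take a common value. Then $f_h(x_1,z^\ast)=h(x_1)-c$, which is just a relabeling of $h$ and does not help. So the instance must instead keep $\cZ_2$ rich: for example $\cZ_1,\cZ_2$ both infinite with $\cH$ consisting of singletons on $\cZ_1\cup\cZ_2$ and constants. Then every star candidate's disagreement region is a full slice, which has size at least $|\cZ_2|\ge 2$ unless a cancellation occurs. A cancellation requires $h_i$ to flip points in both groups, which singletons cannot do. Hence $\mathfrak s_2(\cF_2(\cH))\le$ a small constant; I would verify that it is $0$ or $1$ by case analysis on the center.

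I also expect to need $\shwartz(\cF_2(\cH))$ finite, which follows directly since the probe class is essentially a union of slice indicators.

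\textbf{Conclusion.} Finally, I would invoke the probe-learning upper bound in terms of $\mathfrak s_k$ and $\shwartz$ stated in the contributions, which gives $O(\log(1/\epsilon)+\log(1/\delta))$-type query complexity. Combined with the $\Omega(1/\epsilon)$ reconstruction lower bound, this shows the strict separation.
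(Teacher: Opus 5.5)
There is a genuine gap, at exactly the step you flagged as the main obstacle. Your claim that $\mathfrak s_2(\cF_2(\cH))$ is a small constant relies on reading the star condition as requiring the \emph{entire} disagreement region $\mathrm{DIS}(\{f_0,f_i\})$ to equal the single tuple $\{\bX_i\}$. Under that reading, a full slice $\{z\}\times\cZ_2$ disqualifies a witness.

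The star number that controls active query complexity (Hanneke--Yang) only asks that $\mathrm{DIS}(\{f_0,f_i\})\cap\{\bX_1,\ldots,\bX_m\}=\{\bX_i\}$. This is also the version the paper uses in its arguments: the star condition $f_{\theta_i}(\phi_i)\neq f_0(\phi_i)$, $f_{\theta_i}(\phi_j)=f_0(\phi_j)$ for $j\neq i$ in this very proof, Claim~\ref{claim:danielystar}, and the effective-index argument behind Theorem~\ref{prop:ManipfreeUB}.

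Under this notion your instance collapses. Take the zero probe as center, witnesses $f_{\mathds 1_{\{z_i\}}}$ for distinct $z_i\in\cZ_1$, and star points $\bX_i=(z_i,w)$ for one fixed $w\in\cZ_2$. Then $f_{\mathds 1_{\{z_i\}}}(\bX_j)=\mathds 1\{i=j\}$, so $\mathfrak s_2(\cF_2(\cH))=\infty$. Put differently, fixing the second coordinate embeds an exact copy of the singleton class into the probe class. With $\nu$ uniform on $\lfloor 1/(2\epsilon)\rfloor$ such tuples, the indistinguishability argument of Theorem~\ref{th:SCLB} forces $\Omega(1/\epsilon)$ CGQs, the same as reconstruction. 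So there is no separation.

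The literal whole-domain reading cannot be the operative one anyway. It would give star number $0$ for homogeneous linear separators in $\R^2$, since their pairwise disagreement regions are double wedges. In your own instance it would give $\mathfrak s_2\in\{0,1\}$, and hence the $O(\log(1/\epsilon)+\log(1/\delta))$ bound you conclude, for a problem that provably needs $\Omega(1/\epsilon)$ queries.

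The missing idea is to restrict the admissible CGQ domain and to use a class whose lifted labels have rigid global structure. No choice of domain rescues singletons: on any $D\subseteq\cZ_1\times\cZ_2$ with infinite projection onto $\cZ_1$, the lifted singletons are again signed slice indicators, and arbitrarily large stars persist.

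The paper instead takes homogeneous linear classifiers in $\R^2$, for which $\mathfrak s(\cH)=\infty$ is standard, and restricts CGQs to $\{(x,Rx):x\neq 0\}$ with $R$ the rotation by $\pi/2$. The probe then becomes a rotated four-sector cyclic labeling $(+1,0,-1,0)$. Every agreement set relative to the center contains an arc of length $\pi/2$ around the center parameter, and fair probes also get the antipodal arc. An interval lemma says only the arc with the largest left endpoint or the one with the smallest right endpoint can own a private point. A case split on the number of discriminatory probes then gives $\mathfrak s_2\le 3$, matched by an explicit $3$-star.

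Remark~\ref{rem:domain-essential} records precisely that without this restriction the probe class contains a copy of $\cH$ (via pairs $(x,-x)$) and has infinite star number. That is the same phenomenon that defeats your full-product instance.
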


This establishes an information-theoretic separation between the two approaches: while reconstruction has unbounded combinatorial complexity for the underlying model class, the corresponding probe-learning problem admits a finite \(k\)-multi-star complexity. This aligns with the intuition that multi-group fairness is a global property driven primarily by the protected attribute, rather than the ambient feature space. The proof is given in Appendix~\ref{app:multistarlinear}.

\subsection{Manipulation-free Regime}\label{sec:manipfree}

In this section, we focus on the manipulation-free regime, in which the model owner is honest and does not attempt to conceal unfairness.

\begin{figure}[t]
    \centering
    \includegraphics[width=0.5\linewidth]{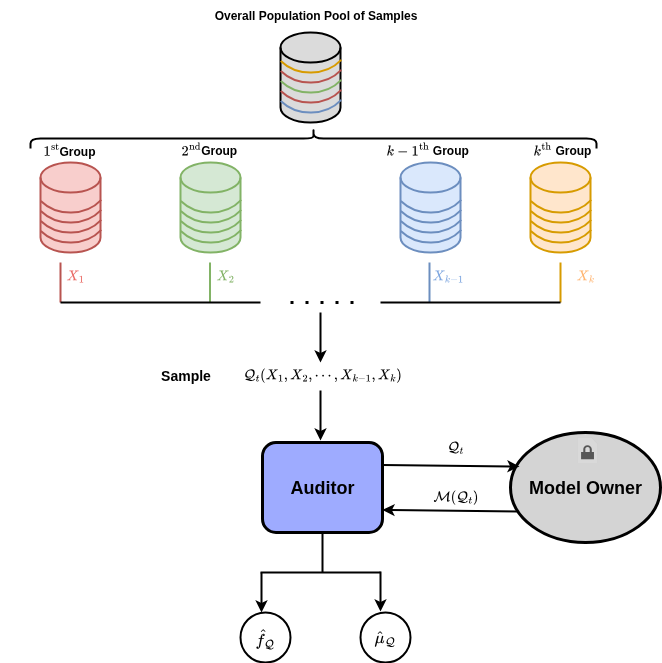}
    \caption{Interactive Fairness Auditing: Setting and Interaction Protocol}
    \label{fig:overall}
\end{figure}

\subsubsection{Hardness Results on Sample Complexity and Model Extraction}

We begin by establishing a lower bound on the sample complexity required to learn bias probes.

\begin{restatable}[Sample Complexity Lower Bound]{theorem}{scLB}\label{th:SCLB}
Fix $\varepsilon \in (0,1/4), \delta \in (0, 1/2)$. For any (possibly randomized) active auditor making at most $$m \le \Omega \left(
\max \left\{ \frac{1-2\delta}{1-\delta} \min\left\{ \mathfrak s_k, \left\lfloor\frac1{2\epsilon}\right\rfloor \right\}, \; \left[ 1-2\epsilon-2\delta(1-\epsilon) \right]_+ \shwartz \right\} \right)$$
CGQ queries, there exists a distribution $\cD$ over $\cX$ and a target comparison functional $f_{h^*} \in \cF$ such that, with probability at least $\delta$, $$\underset{\cQ_{1:m} \sim \cD^{mk}}{\prob} \left[\underset{\bX \sim \cD^k}{\prob}\big[ \hat f_{\cQ_{1:m}}(\bX) \neq f_{h}(\bX) \big] > \varepsilon \right] > \delta.$$
\end{restatable}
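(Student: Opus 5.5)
The lower bound is the maximum of two terms, so I will prove each term separately by an explicit hard instance and then take the larger.

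\textbf{The $\shwartz$ term (passive-type bound).} Active querying of points sampled from $\cD^k$ cannot beat the fact that unqueried support points remain undetermined. Fix a DS-shattered sequence $S=(\bx_1,\ldots,\bx_d)$ with $d=\shwartz$, and a pseudo-cube $B\subseteq\cF|_S$. Put mass $1-2\epsilon$ on $\bx_1$ and spread the remaining $2\epsilon$ uniformly on $\bx_2,\ldots,\bx_d$; any reweighting giving each non-anchor point enough mass that missing a constant fraction of them costs more than $\epsilon$ would also work. I would follow the Daniely--Shwartz lower-bound argument. Choose the target labeling uniformly at random from $B$, say via a random walk/orientation argument on the one-inclusion graph of $B$. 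For any point not among the $m$ CGQs answered, the pseudo-cube property gives a neighbor of the target that differs exactly there and agrees with every revealed answer. Hence the posterior on that coordinate is non-degenerate, and the auditor errs there with probability at least $1/2$ on average.

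Adaptivity does not help. A query reveals the label of a single point of $S$, and the pool is i.i.d. from the fixed $\cD$, so the number of distinct points revealed is at most $m$. A Markov/averaging step then converts expected error into the statement ``with probability $>\delta$, error $>\epsilon$.'' This conversion is what produces the factor $[1-2\epsilon-2\delta(1-\epsilon)]_+$: the expected error must be at least $\epsilon+\delta(1-\epsilon)$ and so on, and solving for $m$ yields $m\gtrsim[1-2\epsilon-2\delta(1-\epsilon)]_+\shwartz$. A probabilistic-method step then fixes a deterministic target $f_{h^*}$.

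\textbf{The $\mathfrak s_k$ term (star-based bound).} I would adapt Hanneke--Yang's star-number lower bound to CGQs. Take a multi-star witnessed by $f_0,f_1,\ldots,f_s$ and points $\bX_1,\ldots,\bX_s$ with $\mathrm{DIS}(\{f_0,f_i\})=\{\bX_i\}$, where each coordinate of $\bX_i$ lies in $\cZ_p$. This coordinate condition is exactly what makes $\bX_i$ a legitimate CGQ tuple. Let $s'=\min\{\mathfrak s_k,\lfloor 1/(2\epsilon)\rfloor\}$ and let $\cD$ be uniform on $\bX_1,\ldots,\bX_{s'}$ (mass $\ge 2\epsilon$ each), realized as a coupling with the required group marginals. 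By Proposition~\ref{prop:coupling}, the coupling choice is innocuous.

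Consider the $s'+1$ targets $f_0,f_1,\ldots,f_{s'}$. Querying $\bX_j$ distinguishes $f_0$ only from $f_j$, so each query eliminates at most one alternative. Under target $f_0$, after $m<s'/2$ queries at least $s'/2$ of the indices are unqueried, and each such $f_i$ is indistinguishable from $f_0$ on the transcript. Any output within $\epsilon$ of $f_0$ is more than $\epsilon$ from each unqueried $f_i$, because the two differ on an atom of mass $\ge 2\epsilon$ and the triangle inequality applies. A two-point/averaging (Le Cam style) argument over the uniform prior mixing $f_0$ with the unqueried $f_i$ then gives failure probability $>\delta$ unless $m\ge\frac{1-2\delta}{1-\delta}s'$, up to constants.

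\textbf{Main obstacle.} The delicate step is handling randomized and adaptive auditors that choose which sampled CGQs to query in the $\mathfrak s_k$ construction. I would first fix the auditor's randomness (Yao's principle) and then argue along the transcript generated under $f_0$: the transcript is identical under $f_i$ as long as $\bX_i$ was not queried. This gives exactly the counting needed to obtain the $\frac{1-2\delta}{1-\delta}$ constant.
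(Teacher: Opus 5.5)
Your $\mathfrak s_k$ half is essentially the paper's Step~1. It uses a uniform $\nu$ on $\alpha_\epsilon=\min\{\mathfrak s_k,\lfloor 1/(2\epsilon)\rfloor\}$ star points. Runs under $f_0$ and $f_i$ are coupled through the same pool and internal randomness, which already covers randomized auditors, so Yao is not needed. Averaging over unqueried indices then gives $\max_i\prob[\mathrm{err}_{f_i}>\epsilon]\ge(\alpha_\epsilon-q)(1-\delta)/\alpha_\epsilon$. One small repair: the triangle inequality only gives distance $\ge\epsilon$ from $f_i$. The strict inequality comes instead from the fact that error $\le\epsilon<2\epsilon$ under $f_0$ forces $\hat f$ to agree with $f_0$ on every atom, hence to be wrong on $\bX_i$, which has mass $\ge 2\epsilon$.

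The genuine gap is in the $\shwartz$ half: your hard distribution cannot give a bound linear in $d=\shwartz$. With mass $1-2\epsilon$ on $\bx_1$ and $2\epsilon/(d-1)$ on each other point, one query to $\bx_1$ leaves error $2\epsilon W/(d-1)$, where $W$ counts mistakes on unqueried points. Your posterior-$\tfrac12$ argument then yields only $\E[\mathrm{err}]\ge\epsilon(d-1-q)/(d-1)\le\epsilon$. This never exceeds the bound $\epsilon+\delta(1-\epsilon)$ implied by $(\epsilon,\delta)$-correctness, so your Markov step gives no constraint on $q$ and never produces the factor $[1-2\epsilon-2\delta(1-\epsilon)]_+$.

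This is not just a loose analysis. Suppose $B$ is the full binary cube, and the auditor queries $\bx_1$ and $q'\approx\sqrt{2d\log(1/\delta)}$ other support points, then fills in the rest with independent fair coins. Then $W\sim\mathrm{Bin}(d-1-q',\tfrac12)$, and the auditor fails only if $W>(d-1)/2$, which by Hoeffding has probability $\le\delta$. The anchor construction belongs to \emph{passive} sampling, where the cost lies in drawing the light points. Here only oracle calls are counted and unlabeled draws are free, so making points light only helps the auditor.

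The fix is to take $\nu$ uniform on $S$, as the paper does. Each unqueried coordinate $i$ then contributes at least $1/(2d)$ to the expected error. To see this, partition the consistent traces $B_T$ into fibers that agree off coordinate $i$. By the pseudo-cube property each fiber has at least two members, all with distinct $i$-th labels, so the uniform posterior errs at $\bx_i$ with probability $\ge\tfrac12$. This fiber argument is what delivers the constant, not a ``non-degenerate posterior'' or a one-inclusion-graph orientation. Hence $\E[\mathrm{err}]\ge(d-q)/(2d)$, and comparing with $\E[\mathrm{err}]\le\epsilon+\delta(1-\epsilon)$ gives exactly $q\ge d\,[1-2\epsilon-2\delta(1-\epsilon)]$.
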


Theorem~\ref{th:SCLB} shows that learning bias probes is fundamentally limited by the complexity of the comparison class $\cF$. In particular, even in the active setting, the sample complexity must scale with intrinsic structural parameters such as $\mathfrak{s}_k$. This highlights that auditing through comparison functionals does not circumvent statistical hardness, but rather shifts it to the geometry of the induced comparison class. The proof is given in Appendix~\ref{app:SClb}.

\subsubsection{Hardness of Model Extraction Attacks}

In this section, we characterize the hardness of extracting the model underlying $\modelowner$ from both computational and information-theoretic perspectives.

\paragraph{Computational Hardness.}
We now show that, beyond the statistical limitations established in Proposition \ref{prop:starlinear}, extracting the underlying model from bias probes is computationally hard. Our probe-based auditing procedure induces an interactive protocol in which the auditor interacts with the model owner through at most $B$ cross-group queries (CGQs). We formalize the corresponding model-extraction problem through the following notion of \emph{$B$-Bounded Relational Extraction}, that consists of:

\begin{itemize}
    \item Finite hypothesis class $\cH_0=\{h_1,\ldots,h_m\},$
    \item Finite collection of admissible relational queries $ \cQ=\{\bx_1,\ldots,\bx_r\}
    \subseteq \cX,$
    \item The query budget of \textsc{$k$-ALeBi} $B$.
\end{itemize}

Model owner's model extraction corresponds to whether there exists a subset of queries $\cQ'\subseteq \cQ$, such that $|\cQ'|\leq B $ and such that the answers to the queries in $\cQ'$ uniquely identify the unknown hypothesis in $\cH_0$. Equivalently, the question is whether for any two different models $h$ and $g$ in $\cH_0$ there exists CGQ queries $\cQ'$ that distinguish $f_h$ from $f_g$. Thus, the goal is to make the map $ h\longmapsto
\bigl(f_h(q)\bigr)_{q\in Q'}$ injective on $\cH_0$.

\begin{restatable}[Hardness of Model Extraction via Bias Probes]{theorem}{computeLPN}\label{th:lpn}
Given a learned probe $f_h \in \cF_{\cH}$ from a (black-box) interaction with a model owner $\modelowner$ of $h \in \cH$, recovering $h$ is NP hard. 
\end{restatable}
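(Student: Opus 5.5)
The plan is a polynomial-time reduction from \emph{maximum-likelihood decoding of binary linear codes} (the Nearest Codeword problem), which is NP-hard by Berlekamp--McEliece--van Tilborg. This is the noisy-parity (LPN-type) hardness suggested by the label of Theorem~\ref{th:lpn}.

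I read ``learned probe'' in the sense of Definition~\ref{def:pacactiveprobe}: the auditor holds $\hat f$ with $\cE_{\cD}(\hat f)\le\varepsilon$, not $f_h$ exactly. Extraction then means finding $h\in\cH$ whose induced probe $f_h$ agrees with $\hat f$ on the observed CGQs up to the allowed error. The formal decision problem is: given a finite sample of CGQs, the probe values $\hat f$ on them, and a threshold $t$, decide whether some $h\in\cH$ has $f_h$ disagreeing with $\hat f$ on at most $t$ of these CGQs. This formalisation matters. With an exact probe and a linear class the problem can collapse to Gaussian elimination, so it is the approximation that carries the hardness.

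\textbf{Construction.} Take $k=2$, unprotected features in $\{0,1\}^p$, and $\cH=\{h_w(z)=\langle w,z^{(1:p)}\rangle \bmod 2 : w\in\{0,1\}^p\}$, the parity classifiers ignoring the protected coordinate. Let $a_2\in\cZ_2$ be an anchor point whose unprotected part is $0$, so that $h_w(a_2)=0$ for every $w$.

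Given a Nearest Codeword instance $(G\in\{0,1\}^{r\times p}, y\in\{0,1\}^r, t)$, build one CGQ per row $g_\ell$ of $G$: $\bx_\ell=(g_\ell\Vert 1,\,a_2)$, with the first point placed in group $\cZ_1$. Then $f_{h_w}(\bx_\ell)=h_w(g_\ell)-h_w(a_2)=\langle w,g_\ell\rangle$. So the probe response on the sample is exactly the codeword $Gw$. Set the learned probe values to $\hat f(\bx_\ell)=y_\ell$.

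The anchor construction is what lets the relational oracle of CGQs reveal absolute labels while staying within the model. The only thing to check is that $a_2$ legitimately belongs to group $2$ and to the support of $\cD$. Taking $\cD$ uniform over the $r$ group-$1$ points together with the single anchor satisfies the marginal requirements of Proposition~\ref{prop:coupling}, and gives $\cE_{\cD}(f_{h_w})$ proportional to $d_H(Gw,y)$.

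\textbf{Correctness.} Some $h\in\cH$ has $f_h$ within $t$ disagreements of $\hat f$ if and only if some $w$ satisfies $d_H(Gw,y)\le t$. This is exactly the Nearest Codeword decision question. The reduction is clearly polynomial, with $r$ CGQs of size $O(p)$. Moreover, every ``recovered'' $h$ can be checked in polynomial time, so the problem lies in NP, which yields NP-completeness.

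To connect with the PAC view, I will note that when $y=Gw^\star+e$ with $\|e\|_1\le\varepsilon r$, the probe $\hat f$ is a valid $\varepsilon$-accurate learned probe for $h_{w^\star}$. Hence no polynomial-time extractor can recover a consistent $h$ from legitimate learned probes unless $\mathrm{P}=\mathrm{NP}$.

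\textbf{Main obstacle.} The delicate step is justifying the modelling choice: that ``recovering $h$'' from a learned probe must tolerate the probe's $\varepsilon$-error. Without that tolerance, the parity instance is easy. I would state explicitly that in the exact-probe case hardness instead comes from the combinatorial selection problem. A secondary obstacle is respecting the interface: the auditor sees only differences $h(x_1)-h(x_2)\in\{-1,0,1\}$ rather than absolute labels. The anchor with $h\equiv 0$ handles this. If anchors are disallowed, I would fall back to pairing each row with the fixed row $g_1$, which encodes $\langle w,g_\ell+g_1\rangle$. The result is again a Nearest Codeword instance, for the code with rows $g_\ell+g_1$.
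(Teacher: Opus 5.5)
Your reduction is correct for the problem you define, but it proves a different theorem from the paper's, and neither result implies the other.

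\textbf{What the paper proves.} The paper formalizes extraction as $B$-\textsc{Bounded Relational Extraction}. The input is an explicitly listed class $\cH_0=\{h_1,\ldots,h_m\}$, a finite pool $\cQ$ of admissible CGQs and a budget $B$. The question is whether some $\cQ'\subseteq\cQ$ with $|\cQ'|\le B$ makes $h\mapsto(f_h(q))_{q\in\cQ'}$ injective on $\cH_0$. The paper reduces from Minimum Test Cover. Each test $T_\ell$ gets a fresh CGQ $q_\ell=(a_\ell,b_\ell)$ with $h_j(a_\ell)=1$, which is a constant-label anchor much like yours, and $h_j(b_\ell)=1-\mathds{1}\{u_j\in T_\ell\}$. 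Then $f_{h_j}(q_\ell)=\mathds{1}\{u_j\in T_\ell\}$, and test covers of size at most $K$ correspond exactly to identifying query sets of size at most $B=K$. Responses there are exact, and all the hardness lies in \emph{choosing} queries. Once the answers are known, finding $h$ in the explicit list is a trivial scan. This is precisely the ``combinatorial selection problem'' you mention for the exact case but do not prove.

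\textbf{What your route buys and what it costs.} You locate the hardness in \emph{inverting} probe data for a fixed, succinctly represented, natural class (parities), via Nearest Codeword. This is closer to the wording ``given a learned probe, recover $h$'', has an LPN flavor, and fits the corrupted-response regime of Section~\ref{sec:manip} well. The price is that your hard instances necessarily have $y$ outside the code $\{Gw\}$, i.e.\ a probe $\hat f\notin\cF_\cH$. That conflicts with the statement's hypothesis $f_h\in\cF_\cH$. It also conflicts with the proper output $\hat f_m\in\cF$ of Definition~\ref{def:pacactiveprobe}: $k$-\textsc{ALeBi} returns a member of the version space, exactly consistent with every queried response. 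Under that hypothesis your instance is solvable by Gaussian elimination, as you note.

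You can keep your inversion formulation and still respect $f_h\in\cF_\cH$. Your anchor already reduces exact extraction to the realizable consistency problem for $\cH$. So it suffices to replace parities by a class where that problem is NP-hard and the all-zero point is an anchor. Monotone $3$-term DNF with nonempty terms works, via the Pitt--Valiant reduction from graph $3$-coloring, whose consistent formulas in the YES case are monotone. This gives hardness with noiseless probes.

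\textbf{Smaller points.} First, your anchor-free fallback does not produce a Nearest Codeword instance. Pairing with $g_1$ returns $\langle w,g_\ell\rangle-\langle w,g_1\rangle\in\{-1,0,1\}$, whose sign reveals $\langle w,g_1\rangle$, so it is not the parity $\langle w,g_\ell+g_1\rangle$. Second, repeated rows of $G$ with conflicting entries of $y$ make $\hat f$ ill-defined as a function. Treat the sample as a multiset, or delete conflicting pairs and lower $t$ by one for each. Third, your PAC remark needs Nearest Codeword to remain hard with $t\le\varepsilon r$. Padding with zero rows labeled $0$ provides this.
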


Theorem~\ref{th:lpn} establishes a computational barrier to model extraction from bias probes. Recovering the underlying model $h$ from CGQ-based queries is intractable. More precisely, finding a small set of CGQ sufficient for universal model extraction is NP-complete. This formalizes a separation between auditing and reconstruction: while bias probes suffice to estimate properties such as multi-group fairness, they do not reveal enough information to reconstruct the model efficiently. The proofn is given in Appendix \ref{app:protextraction}.

\paragraph{Information-Theoretic Hardness.}
In this section, we quantify the information revealed about the model owner's identity through interactions with the auditor. We assume that $\cH$ is finite and that the induced multi-context class relative to $\cH$, denoted by $\cF=\cF_{\cH}$, is therefore also finite. We further assume a uniform prior over $\cH$.

Before the audit procedure begins, the uncertainty about the model identity is $\widetilde{\cI}_0 := \log_2 |\cH|$ bits. After $t$ interactions with $\modelowner$, the observed responses induce a version space $\cV_t\subseteq\cH$ containing the hypotheses consistent with all interactions observed so far. Under the uniform prior, the remaining model identity uncertainty is therefore $\widetilde{\cI}_t := \log_2 |\cV_t|$. This motivates the following definition of identity leakage.

\begin{definition}[Identity Leakage]\label{def:leake}
For a hypothesis class $\cH$, the identity leakage after $t$ interactions between the auditor and the model owner is defined as

$$
\cI_t
:= \widetilde{\cI}_0-\widetilde{\cI}_t
= \log_2\frac{|\cH|}{|\cV_t|}.
$$

\end{definition}

For example, suppose the auditor knows that the model belongs to a hypothesis class of size $|\cH|=1000$. If, after $t$ interactions with the model owner, only $|\cV_t|=20$ hypotheses remain consistent with the observed responses, then the identity leakage is

$$
\cI_t=\log_2\frac{1000}{20}\approx 5.64~\text{bits}.
$$
Thus, the audit interactions have reduced the uncertainty about the model identity by approximately $5.64$ bits.

In Section~\ref{sec:exp}, we design an experimental protocol to evaluate model extraction through identity leakage. We compare our proposed auditor with three baselines: \textsc{ReconAudit}, \textsc{DirectAudit}, and the auditing method proposed by \citep{yan2022active}.

\subsubsection{Algorithm Design: \textcolor{blue}{$k$-\textsc{ALeBi}} and Upper Bounds}

\begin{algorithm}[H]
\caption{\protect\hypertarget{kalebi}{\textcolor{blue}{\textsc{$k$-ALeBi}}}}\label{alg:alebi_k}
\begin{algorithmic}[1]

\Require
Access to $\mathrm{I}_{\modelowner}(h)=\modelowner_k(h)$,
CGQ fixed coupling distribution $\nu$,
multi-context probe class $\cF$,
pool budget $n$

\State Initialize version space $\cV_0 \gets \cF$

\For{$t=1$ to $n$}

    \State Draw independently
    \[
    \bX^{(t)}
    =
    (X_1^{(t)},\ldots,X_k^{(t)})
    \sim \nu
    \]

    \State $\cV_t \gets \cV_{t-1}$

        \If{$\bX^{(t)}\in\mathrm{DIS}(\cV_{t-1})$}

            \State Query the relational oracle
            \[
            \bY^{(t)}
            \gets
            \modelowner_k(h;\bX^{(t)})
            =
            f_h(\bX^{(t)})
            \]
            \Comment{$\bY^{(t)}\in\{-1,0,1\}^{\binom{k}{2}}$}

            \State Update the version space
            \[
            \cV_t
            \gets
            \left\{
            f\in\cV_{t-1}:
            f(\bX^{(t)})=\bY^{(t)}
            \right\}
            \]

        \EndIf

\EndFor

\State Select a global relational probe
\[
\hat f\in\cV_n
\]

\State Compute
\[
\{(f_q,\widetilde f_q)\}_{q\in[p]}
\]
for all unprotected features $q$

\State \Return
$$
\mathsf{AuditReport}
=
\left(
\underbrace{\hat f}_{\text{global bias probe}},
\underbrace{\{f_q\}_{q\in[p]}}_{\text{feature-wise bias probes}},
\underbrace{\{\widetilde f_q\}_{q\in[p]}}_
{\text{expected feature-wise bias probes}}
\right)
$$
\end{algorithmic}
\end{algorithm}

\begin{restatable}[Upper bounds of \textsc{ALeBi}]{theorem}{alebiUB}\label{prop:ManipfreeUB}
\hyperlink{kalebi}{\textcolor{blue}{{\textsc{$k$-ALeBi}}}} is a PAC active auditor: with probability at least $1-\delta$, it outputs $\hat f$ satisfying
$\prob\big\{ \hat f(Z) \neq f_{h^*}(Z) \big\}\le \epsilon$
using at most
$$\resizebox{\linewidth}{!}{$ \mathcal{O}\left(
\min \left\{ n,\; (e-1)\min\{\mathfrak s_k,n\} \log\frac{en}{\min\{\mathfrak s_k,n\}} + \log\frac{2}{\delta} + 1,\; \frac{\min\{\mathfrak s_k,n\}}{1\vee\log \min\{\mathfrak s_k,n\}}\, \shwartz \log\left( \frac{ en(2^k-2) }{ \shwartz } \right) \right\}
\right)$}$$ queries. where $n= \left\lceil \frac{\shwartz +\log(4/\delta) }{ \epsilon } \right\rceil .$
\end{restatable}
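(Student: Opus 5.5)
The argument splits into two parts: correctness, where we show the output $\hat f$ is $\epsilon$-accurate with probability $1-\delta$ after $n$ pool draws, and query counting, where we bound how many of those $n$ draws land in the current disagreement region. Each of the three terms inside the $\min$ comes from a separate counting argument.

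\textbf{Correctness.} The target $f_h$ always lies in $\cV_t$, since we only discard probes that disagree with an observed response. Any $\hat f\in\cV_n$ is therefore consistent with $f_h$ on all $n$ draws $\bX^{(1)},\ldots,\bX^{(n)}\sim\nu$. This holds for points we queried, by construction, and also for points we skipped: a skipped point lies outside $\mathrm{DIS}(\cV_{t-1})$, so every surviving probe, including $f_h$, agrees there. The sample $\{\bX^{(t)}\}$ is thus labeled consistently by $\hat f$ exactly as in passive realizable learning. The probe class $\cF$ is a multiclass class with label set of size at most $3^{\binom{k}{2}}$, effectively $2^k-2$ realizable sign patterns. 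We would invoke the multiclass consistent-learner bound via the DS dimension from Daniely--Shwartz in its one-inclusion/compression form. With $n=\lceil(\shwartz+\log(4/\delta))/\epsilon\rceil$ draws, every consistent probe has error at most $\epsilon$ with probability $1-\delta/2$. The remaining $\delta/2$ is reserved for the query-count bound.

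\textbf{Query counting.} The trivial bound of $n$ gives the first term. For the other two, we follow Hanneke--Yang's analysis of CAL in the star-number regime, transplanted to $\cX$ using Definition~\ref{def:dualstar}.

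For the second term, let $Q_t=\mathbb{1}[\bX^{(t)}\in\mathrm{DIS}(\cV_{t-1})]$. A standard exchangeability/martingale argument shows $\E[Q_t]\le$ the probability that the $t$-th point is a "hard" point of the first $t$. Each such point must be isolated from the rest by some surviving probe, and the isolating probes together with $f_h$ form a star. By Definition~\ref{def:dualstar}, since each $\bX$ has its $p$-th coordinate in $\cZ_p$, at most $\mathfrak s_k$ points among any $t$ can be isolated this way. Summing gives roughly $\sum_t \min\{\mathfrak s_k,t\}/t\le \mathfrak s_k\log(en/\mathfrak s_k)$. A Freedman/Bernstein concentration step for the sum of the $Q_t$ then converts this to high probability, contributing the $(e-1)$ factor and the $\log(2/\delta)+1$ term.

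For the third term, we use the sharper compression-based counting of Hanneke--Yang (their Theorem 3 style bound $\frac{\mathfrak s}{\log \mathfrak s}d\log(\cdot)$). We replace VC dimension by $\shwartz$ and the binary growth function by the multiclass Natarajan/DS growth bound $(en(2^k-2)/\shwartz)^{\shwartz}$; taking the logarithm yields the $\log\big(en(2^k-2)/\shwartz\big)$ factor. Claim~\ref{claim:danielystar}, $\mathfrak s_k\le\shwartz$, ensures the quantities combine consistently.

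\textbf{Main obstacle.} The hardest step is the third term. The Hanneke--Yang argument relies on a binary version-space compression set whose size is controlled by the star number; we must check that in the multiclass relational setting, the minimal specifying set of a consistent subsample is still bounded by $\mathfrak s_k$. We would first try to show that any minimal specifying set of labeled CGQs forms a multi-star centered at $f_h$. This amounts to showing that each removed point admits a probe disagreeing only there, using that every element of a minimal specifying set is necessary. Once that holds, the counting goes through with the multiclass growth function substituted in.
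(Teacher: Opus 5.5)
Your proposal has two genuine gaps.

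\textbf{Accuracy of the output.} You assert that with $n=\lceil(\shwartz+\log(4/\delta))/\epsilon\rceil$ draws, every probe in $\cV_n$ has error at most $\epsilon$ with probability $1-\delta/2$. This is false in general.

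Rates of the form $(d+\log(1/\delta))/\epsilon$ are guarantees for specific learners, typically improper ones. They are not uniform guarantees over all consistent hypotheses. Already for binary VC classes, some consistent learners need an extra $\log(1/\epsilon)$ factor. In the multiclass case, different ERMs can have sample complexities that differ by a $\log|\cY|$-type factor, and here $|\cY|=2^k-1$.

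Your observation that skipped draws lie outside $\mathrm{DIS}(\cV_{t-1})$ is the right one, but the paper uses it differently. The CAL pass is only a label-completion device that recovers $f_h$ on all $n$ pool points. The completed labeled pool is then handed to the learner of Theorem~\ref{thm:pabbaraju-passive}, whose output need not lie in $\cV_n$. The proof does this even though Algorithm~\ref{alg:alebi_k} is written with $\hat f\in\cV_n$.

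Your second term is fine in spirit, but the concentration step needs care. The quantity $\prob(Q_t=1\mid \bX^{(1)},\ldots,\bX^{(t-1)})$ is the mass of the current disagreement region, not $\min\{\mathfrak s_k,t\}/t$. The exchangeability bound holds only conditionally on the unordered prefix, so a Freedman argument must run along that reverse filtration. The paper avoids this: it conditions on the pool and runs an exponential-moment induction over a uniformly random order, peeling off a uniformly random last element. The $(e-1)$ comes from taking $\lambda=1$ there and applying Markov's inequality, not from Freedman.

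\textbf{The third term.} Here the plan fails. The $\frac{\mathfrak s}{\log\mathfrak s}\,d\log(\cdot)$ bound is not a bound on how many draws CAL sends to the oracle, and CAL can exceed it.

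For a concrete instance, take $m$ disjoint long chains and the binary class consisting of $0$ together with all thresholds supported on a single chain. Embed it as a $k=2$ probe class by letting every hypothesis vanish on group $2$. This class has DS dimension $1$ and star number $m$. With target $0$, CAL queries exactly the draws that exceed all earlier draws from the same chain, about $m\ln(n/m)$ of them. For $n=m^2$ that is $\Theta(m\log m)$, whereas the third term is $O(m)$ there.

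As in the paper (and in Hanneke--Yang), that term comes from a different query rule. Algorithm~\ref{alg:rel-mh} repeatedly fixes the plurality labeling $\phi_{\cV}$ of the current trace set and queries coordinates of a smallest specifying set for it. Lemma~\ref{thm:refined-membership-halving} then bounds the number of queries by $\frac{\mathrm{XTD}(\cV_S)}{1\vee\log\mathrm{XTD}(\cV_S)}\log|\cV_S|$. Combining this with $\mathrm{XTD}(\cV_S)\le\min\{\mathfrak s_k+1,n\}$ and the multiclass Sauer bound over the $2^k-1$ possible responses gives the third term. The completed pool then again goes to the passive learner. The minimum in the statement comes from running whichever completion procedure has the smallest bound, not from three bounds on one CAL run.

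Your ``main obstacle'' lemma is the XTD step, but it must cover arbitrary labelings, since $\phi_{\cV}$ need not be realizable. So the star cannot be centered at $f_h$ or at the labeling itself. The paper centers it at the first witness $v_1$, which is where the $+1$ comes from.

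Finally, drop the appeal to Claim~\ref{claim:danielystar}. Nothing in the bound needs it, and its proof actually shows that a DS-shattered set is a star set, i.e.\ $\shwartz\le\mathfrak s_k$.
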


\noindent For fixed sufficiently small $\delta$, the lower bound in Theorem \ref{th:SCLB} becomes $\Omega\left(\max\{\shwartz,\min\{\mathfrak s_k,1/\epsilon\}\}\right)$,
while the upper bound becomes $\widetilde{O}\left(\min\lbrace\mathfrak s_k,\frac{\shwartz}{\epsilon}\rbrace\right)$. 
If $m$ denotes the sample complexity, and if $\epsilon\lesssim 1/{\mathfrak s_k}$, then $\mathfrak s_k \lesssim m \lesssim \mathfrak s_k \log\frac{e \shwartz}{\epsilon \mathfrak s_k},$
and the bounds match up to logarithmic factors. If
$1/\mathfrak s_k\lesssim\epsilon\lesssim1/\shwartz$, then $\frac1\epsilon\lesssim m \lesssim\frac{\shwartz}{\epsilon},
$
with a smaller logarithmic gap whenever $\epsilon \mathfrak s_k=\cO(1)$. Finally, if
$\epsilon\gtrsim1/\shwartz$, then $\shwartz \lesssim m \lesssim \shwartz/\epsilon,
$. In particular $m$ reduces to $\Theta(\shwartz)$ for constant accuracy. Thus the only potentially substantial gap occurs in the intermediate regime and is controlled by the separation between $s_k$ and $\shwartz$; in particular, if $\shwartz=O(1)$ or $\mathfrak s_k=\Theta(\shwartz)$, the upper and lower bounds match up to logarithmic factors throughout the accuracy range. Theorem~\ref{prop:ManipfreeUB} further reveals two distinct regimes. When $\mathfrak{s}_k=\infty$, the active learning component of the bound vanishes, and the complexity is governed by the passive learning term $\shwartz(\cF)$. In this regime, the auditing problem effectively reduces to passive learning of the probe class, as established by Claim~\ref{claim:danielystar}. The proof of Theorem \ref{prop:ManipfreeUB} is given in Appendix \ref{app:ubmanipfree}. The following corollary characterizes the learnability of the probe class in the interactive setting in terms of the multi-star number.

\begin{corollary}\label{corr:starchar}
    A multi-class $\cF$ is PAC actively learnable (Definition \ref{def:pacactiveprobe})
if and only if $\mathfrak{s}_{k}(\cF) < \infty$.
\end{corollary}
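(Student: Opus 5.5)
The corollary follows by combining the lower bound of Theorem~\ref{th:SCLB} with the upper bound of Theorem~\ref{prop:ManipfreeUB}. We read "PAC actively learnable" in Definition~\ref{def:pacactiveprobe} as the requirement that the number of CGQ queries $m_{\cF}(\varepsilon,\delta)$ be finite for every $(\varepsilon,\delta)$ and that it not blow up with the amount of unlabeled data available. In other words, the active query complexity should be a genuine function of $(\varepsilon,\delta)$ with the benefit of adaptivity, namely $o(1/\varepsilon)$ in its dependence on $\varepsilon$, as in the classical star-number characterization of \citet{hanneke2015minimax}. Under this reading, the argument splits into two directions.

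\textbf{Sufficiency.} Assume $\mathfrak s_k(\cF)<\infty$. The first point is that finiteness of $\shwartz(\cF)$ is needed for $n=\lceil(\shwartz+\log(4/\delta))/\epsilon\rceil$ to be finite. Claim~\ref{claim:danielystar} only gives $\mathfrak s_k\le\shwartz$, which points the wrong way. We therefore either assume DS-finiteness as a standing hypothesis, as the discussion after Claim~\ref{claim:danielystar} suggests, or argue the following: if $\shwartz=\infty$, then the $\shwartz$-term in the lower bound of Theorem~\ref{th:SCLB} already forbids learnability, so this case cannot occur on the "learnable" side. Once $\shwartz<\infty$, invoke the middle term of Theorem~\ref{prop:ManipfreeUB}:
\[
(e-1)\min\{\mathfrak s_k,n\}\log\frac{en}{\min\{\mathfrak s_k,n\}}+\log\frac{2}{\delta}+1
\;\le\;(e-1)\,\mathfrak s_k\log\frac{en}{\mathfrak s_k}+\log\frac2\delta+1 .
\]
This is finite and grows only logarithmically in $1/\epsilon$. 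So $k$-\textsc{ALeBi} is a PAC active auditor with finite $m_{\cF}(\varepsilon,\delta)$, which establishes learnability.

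\textbf{Necessity.} Assume $\mathfrak s_k=\infty$. Apply Theorem~\ref{th:SCLB} with a fixed $\delta<1/2$. The first term of the lower bound becomes $\frac{1-2\delta}{1-\delta}\lfloor 1/(2\epsilon)\rfloor=\Omega(1/\epsilon)$. Hence no active auditor can improve on the passive rate, and in particular no auditor achieves a query complexity bounded independently of the unlabeled budget with the $\mathfrak s_k$-type dependence. To turn this into a failure of learnability in the sense of Definition~\ref{def:pacactiveprobe}, we sharpen it: for each $m$, the construction behind Theorem~\ref{th:SCLB} uses a star of size $\ge 2m$ together with a distribution uniform on the star points. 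An auditor with $m$ queries then misses at least half the points with constant probability, and the error stays bounded below for $\epsilon$ of order $1/m$. This shows that $\sup_{\cD}m_{\cF}(\epsilon,\delta)$ is not of the required active order.

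\textbf{Main obstacle.} The hardest step is reconciling the definition. As literally stated, Definition~\ref{def:pacactiveprobe} counts queries, and passive learning with finite $\shwartz$ would already give finite $m$ even when $\mathfrak s_k=\infty$. The "only if" direction therefore needs the active notion, that is, query complexity below the passive $\Theta(1/\epsilon)$ rate, or uniformity over the unlabeled pool size. I would state this interpretation explicitly before running the two steps above.
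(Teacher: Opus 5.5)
Your skeleton is the paper's: the paper's entire proof is the remark that the corollary ``derives from'' Theorems \ref{th:SCLB} and \ref{prop:ManipfreeUB}. Your caveat about Definition \ref{def:pacactiveprobe} is correct and more careful than the paper. Read literally, finite query complexity already follows from passive learning whenever $\shwartz<\infty$. Finite $\shwartz$ together with $\mathfrak s_k=\infty$ does occur: the unrestricted class of Remark \ref{rem:domain-essential} has infinite star number but only polynomially many behaviours on $m$ CGQs, hence finite DS dimension. So the ``only if'' direction genuinely needs your $o(1/\epsilon)$ reading. Under that reading, your necessity direction is fine: with $\mathfrak s_k=\infty$, Theorem \ref{th:SCLB} gives $\Omega(1/\epsilon)$ queries.

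The gap is in sufficiency. You rightly note that $n=\lceil(\shwartz+\log(4/\delta))/\epsilon\rceil$ must be finite, but neither of your resolutions works. Assuming $\shwartz<\infty$ weakens the statement. The alternative (``if $\shwartz=\infty$ the lower bound forbids learnability, so this case cannot occur on the learnable side'') is circular. In this direction you are proving learnability from $\mathfrak s_k<\infty$, so non-learnability when $\shwartz=\infty$ would be a counterexample, not a reduction.

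The missing idea is that $\mathfrak s_k<\infty$ itself forces $\shwartz<\infty$, because every DS-shattered sequence is a multi-star set. Take an $m$-dimensional pseudo-cube $B\subseteq\cF|_S$ and fix $b_0\in B$ realized by $f_0$. For each $i$, let $f_i$ realize an $i$-neighbour of $b_0$. Then on $S$, $f_i$ disagrees with $f_0$ exactly at $\bx_i$. The points are distinct, since a repeated point could not admit an $i$-neighbour. Hence $\shwartz\le\mathfrak s_k$, the multiclass analogue of the classical $d\le\mathfrak s$. This is exactly what the proof of Claim \ref{claim:danielystar} establishes; the Claim's displayed inequality is oriented the wrong way relative to its own proof, so you should not have taken it at face value. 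With $\shwartz\le\mathfrak s_k<\infty$, $n$ is finite. The middle term of Theorem \ref{prop:ManipfreeUB} is then $O(\mathfrak s_k\log n+\log(1/\delta))$, which is $o(1/\epsilon)$ for fixed $\delta$.

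A minor slip: your displayed inequality fails when $n<\mathfrak s_k$, because $s\mapsto s\log(en/s)$ is decreasing for $s>n$. Bound the term by $(e-1)\min\{\mathfrak s_k,n\}\log(en)$ instead, or note that $n\ge\mathfrak s_k$ once $\epsilon$ is small enough.
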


Corollary \ref{corr:starchar} derives from Theorems \ref{th:SCLB} and \ref{prop:ManipfreeUB}.

In Section~\ref{sec:exp}, we evaluate the accuracy of $k$-\textsc{ALeBi} in estimating multi-group fairness (Definition~\ref{def:kSP}) and compare its performance with three baselines: \textsc{ReconAudit}, \textsc{DirectAudit}, and the auditing method proposed by \citep{yan2022active}.

%% file: sections/manip.tex
\subsection{Fairness-aware Adversarial Regime}\label{sec:manip}

We now consider an adaptive adversarial model owner $\modelowner$ whose\emph{ decision to attack depends on the auditor's current estimate of multi-group disparity}. The robust version of \textsc{$k$-ALeBi} requires that each CGQ admits a sufficiently homogeneous neighborhood of nonnegligible probability mass. For any CGQ $\bX$ in $\cQ$ \footnote{Hereafter, $\cQ$ denotes the pool of unlabeled CGQs.}, $\cJ(\bX)$ denotes a cell from the input space containing $\bX$. 

\begin{assumption}[Local $\cF$ replicability]
\label{ass:local-replicability} 
There exists a measurable neighborhood map $ \bX \mapsto J(\bX)\subseteq\mathcal Q$
chosen independently of the unknown target $f^{\star}$ and before observing
any $\modelowner$ oracle response, such that every requested CGQ $\bX$  by $\auditor$ satisfies a probability mass of: $\prob(\cJ(\bX))\ge\gamma$
and $\sup_{f\in\cF}
\prob\!\left(
f(\bX')\neq f(\bX)
\,\middle|\,
\bX'\in \cJ(\bX)
\right)
\le\rho$
for some $\gamma>0$ and $0\le\rho<\frac12.$
\end{assumption}

Assumption \ref{ass:local-replicability} means that a draw from $\cJ(\bX)$ has the same clean query response as $\bX$ uniformly over $\cF$ with
probability at least $1-\rho$. The cells do not need to form a partition and may overlap. We assume that the learner can sample $\bX'\sim \cD_Q(\cdot\mid \cJ(\bX))$ from the i.i.d unlabeled pool \footnote{this can be implemented via rejection sampling.}. Since $\prob(\cJ(\bX))\ge\gamma$, each accepted local CGQ requires at most $1/\gamma$ unlabeled draws in expectation.

\begin{algorithm}[t!]
\caption{\protect\hypertarget{robustkalebi}{\textcolor{blue}{\textsc{Robust $k$-ALeBi}}}}
\label{alg:robustify}
\begin{algorithmic}[1]

\Require \textsc{$k$-ALeBi}, attack bound $p$, fixed cell map $\bX \mapsto \cJ(\bX)$

\If{$p=0$}
    \State Run subroutine \textsc{$k$-ALeBi}
    \State Whenever \textsc{$k$-ALeBi} requests CGQ $\bX_t$, query $\bX_t$
    directly and return the clean response to \textsc{$k$-ALeBi}
    \State \Return the output of \textsc{$k$-ALeBi}
\EndIf

\State Configure \textsc{$k$-ALeBi} with confidence $\delta/2$
\State $N\gets N_0(\epsilon,\delta/2)$
\State
\[
R
\gets
\left\lceil
\frac{2}{(1-2\beta)^2}
\log\frac{2N}{\delta}
\right\rceil
\]

\While{\textsc{$k$-ALeBi} has not terminated}
    \State Let $\bX_t$ be the next CGQ requested by \textsc{$k$-ALeBi}
    \State
    \[
    \widehat Y_t
    \gets
    \textsc{LocalLabel}(\bX_t,\cJ(\bX_t),R)
    \]
    \State Feed $\bY_t$ to \textsc{$k$-ALeBi} as the simulated clean
    response $f^{\star}(\bX_t)$
\EndWhile

\State \Return the output of \textsc{$k$-ALeBi}

\Statex
\Statex \rule{\linewidth}{0.4pt}
\Statex \hspace{0.5em}\textbf{Subroutine \textsc{LocalLabel}}

\Statex \rule{\linewidth}{0.4pt}

\Statex \textbf{Input:} CGQ $q$, fixed cell grid $J(q)$, local budget $R$

\For{$r=1,\ldots,R$}
    \State Sample independently
    \[
    \bX'_r\sim \mathcal D_Q(\cdot\mid \cJ(\bX))
    \]
    \State Query the attacked oracle at $\bX'_r$ and receive
    $\bY_r\in\mathcal Y_k$
\EndFor

\State \Return
\[
\widehat Y(q)
\in
\arg\max_{y\in\mathcal Y_k}
\sum_{r=1}^{R}\mathbf 1\{\bY_r=y\}
\]

\end{algorithmic}
\end{algorithm}

\subsubsection{Adaptive Fairness-aware Adversary: Algorithm and Upper Bounds}

For the $r$-th local query $\bX'_r$, let $A_r\in\{0,1\}$ denote whether its response is attacked. Let $\bbH_{r-1}$ denote the complete history
before that response.  If $A_r=0$, then $\bY_r=f^{\star}(\bX'_r).$
If $A_r=1$, the adversary may return any
$\bY_r\in\mathcal Y_k$, possibly as an adaptive function of the
history, the queried CGQ $\bX'_r$ and its clean response.

\begin{definition}[Conditional probabilistic attack]
\label{ass:attack}
There exists $p<1/2$ such that
\[
\prob\!\left(
A_r=1
\,\middle|\,
\bbH_{r-1},\bX'_r,f^{\star}(\bX'_r)
\right)
\le p
\]
for every local query. 
\end{definition}

In particular, the conditional bound is stronger than the nonadaptive assumption $\prob(A_r=1)\le p$.
We introduce the quantity $\beta
\triangleq
\rho+(1-\rho)p
=
p+\rho-p\rho$, that accounts for both sources of error: the local clean
response may differ from the response at $q$, or a locally correct response
may be corrupted. By Definition~\ref{ass:attack}, $\beta<\frac12.$

We derive upper bounds for Algorithm \ref{alg:robustify}.

\begin{restatable}[Neighborhood robustification]{theorem}{robustalebiub}
\label{thm:generic-robustification}
Let $N_0(\varepsilon,\delta_0)$ denote the upper bounds derived in Theorem \ref{prop:ManipfreeUB}. Under assumptions~\ref{ass:local-replicability} and the assumption that $p<1/2$ (Definition~\ref{ass:attack}), \hyperlink{robustkalebi}{\textcolor{blue}{{\textsc{Robust $k$-ALeBi}}}} is a (Robust) PAC-active auditor
For $p>0$, a sample complexity of $$\cO\left(N_0(\varepsilon,\delta/2)
\left\lceil
\frac{2}{(1-2\beta)^2}
\log
\frac{
2N_0(\varepsilon,\delta/2)
}{\delta}
\right\rceil\right)$$ suffices in the presence of fairness-aware adversarial model owner.

If the neighborhood samples are obtained by rejection sampling from an
i.i.d unlabeled stream, the expected number of unlabeled draws is at
most $
\left\lceil
\frac{2N_0(\epsilon,\delta/2)}{\gamma(1-2\beta)^2}
\log
\frac{
2N_0(\varepsilon,\delta/2)
}{\delta}
\right\rceil$
\end{restatable}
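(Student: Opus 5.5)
The plan is a simulation argument: \textsc{Robust $k$-ALeBi} faithfully simulates a clean run of \textsc{$k$-ALeBi} on a high-probability event, and the query count is then bookkept. The case $p=0$ is immediate, since the algorithm runs \textsc{$k$-ALeBi} with clean answers and Theorem~\ref{prop:ManipfreeUB} applies directly. For $p>0$, the central object is the event $G$ that every call to \textsc{LocalLabel} returns $f^{\star}(\bX_t)$, for all of the at most $N=N_0(\varepsilon,\delta/2)$ CGQs requested by \textsc{$k$-ALeBi}. On $G$, the transcript seen by \textsc{$k$-ALeBi} coincides with a clean interaction. Because \textsc{$k$-ALeBi} was configured with confidence $\delta/2$, its output is then $\varepsilon$-accurate with probability at least $1-\delta/2$. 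A union bound with $\prob(G^c)\le\delta/2$ gives total failure probability at most $\delta$.

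The core step is a per-call bound showing that a single call of \textsc{LocalLabel} errs with probability at most $\delta/(2N)$. Fix a requested $\bX_t$ and write $y^\star=f^{\star}(\bX_t)$. Let $E_r=\mathbf 1\{\bY_r\neq y^\star\}$. Then $E_r=1$ only if either $f^\star(\bX'_r)\neq y^\star$ or $A_r=1$.
\begin{itemize}
\item For the first alternative, since $\bX'_r$ is drawn afresh from $\cD_Q(\cdot\mid\cJ(\bX_t))$ independently of the history, Assumption~\ref{ass:local-replicability} (applied to $f^\star\in\cF$) gives conditional probability at most $\rho$.
\item For the second, Definition~\ref{ass:attack} bounds the attack probability by $p$ conditionally on $\bbH_{r-1}$, $\bX'_r$ and $f^\star(\bX'_r)$.
\end{itemize}
Combining these gives $\prob(E_r=1\mid\bbH_{r-1})\le\rho+(1-\rho)p=\beta$. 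Because this conditional bound holds for every history, the sequence $\sum_{r\le s}(E_r-\beta)$ is a supermartingale with bounded increments. Azuma--Hoeffding therefore gives $\prob\big(\sum_{r=1}^R E_r\ge R/2\big)\le\exp(-R(1-2\beta)^2/2)$. If fewer than $R/2$ responses are wrong, then $y^\star$ is a strict majority and hence the unique plurality, regardless of how the adversary spreads its wrong answers. With $R=\lceil\frac{2}{(1-2\beta)^2}\log\frac{2N}{\delta}\rceil$, the failure probability is at most $\delta/(2N)$. A union bound over the at most $N$ calls yields $\prob(G^c)\le\delta/2$.

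The main obstacle is that the adversary is adaptive, and its strategy may depend on the auditor's evolving disparity estimate. This means the $E_r$ are not independent, and even the number and identity of the CGQs requested by \textsc{$k$-ALeBi} depend on earlier (possibly corrupted) answers. The plan handles this in two ways. First, the concentration is done with the conditional (martingale) form above rather than i.i.d.\ Hoeffding. Second, the union bound is taken over the index $t\le N$ of the request and not over fixed points. For each $t$, the bound $\delta/(2N)$ holds conditionally on everything before the $t$-th call, so the bound on $\prob(G^c)$ is valid without requiring the request sequence to be fixed in advance. Some care is also needed to check that the neighborhood map $\cJ$ is fixed before any interaction, as the assumption states, so that the local draws are independent of the history given $\bX_t$.

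The complexity count follows directly. There are at most $N_0(\varepsilon,\delta/2)$ requests, each costing $R$ oracle queries, which gives the stated $\cO(N_0\,R)$. For unlabeled data, each local draw obtained by rejection sampling needs a geometric number of draws with mean at most $1/\gamma$, because $\prob(\cJ(\bX))\ge\gamma$. By linearity of expectation, the expected number of unlabeled draws is at most $N_0R/\gamma$, which is the second claimed bound.
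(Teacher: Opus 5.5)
Your proof follows the paper's argument essentially step for step. The shared structure is: coupling with a clean run of \textsc{$k$-ALeBi} on the same pool and internal randomness; a per-call bound for \textsc{LocalLabel} from the conditional error rate $\rho+(1-\rho)p=\beta$, a conditional Hoeffding/Azuma bound, and the strict-majority argument; a union bound over the at most $N=N_0(\varepsilon,\delta/2)$ requests at level $\delta/(2N)$ each; and the $NR$ and $NR/\gamma$ counts. Your treatment of the adaptively chosen request sequence (union over the request index, conditioning on the past) makes precise what the paper phrases as conditioning on all previous CGQ responses being clean.

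Neither you nor the paper justifies $\beta<1/2$, and it does not follow from the hypotheses as stated. Your Azuma bound $\prob(\sum_r E_r\ge R/2)\le\exp(-R(1-2\beta)^2/2)$ needs $\beta<1/2$, because the deviation $R(\tfrac12-\beta)$ must be nonnegative. But $p<1/2$ and $\rho<1/2$ do not imply this. Since $1-\beta=(1-p)(1-\rho)$, one needs $(1-p)(1-\rho)>1/2$, and $p=\rho=0.4$ gives $\beta=0.64$. In that case, take a cell in which a $\rho$-fraction of draws carries a common wrong response; Assumption~\ref{ass:local-replicability} allows this. The adversary can attack correctly answered local draws with probability $p$ (allowed by Definition~\ref{ass:attack}) and return that wrong response. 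The plurality in \textsc{LocalLabel} is then wrong with high probability, and the version-space update eliminates $f^\star$. So $\beta<1/2$ must be added as an explicit hypothesis; the paper's remark that it follows from Definition~\ref{ass:attack} is mistaken.

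A minor point: $N_0R/\gamma$ matches the stated unlabeled-draw bound only up to where the ceiling sits, and can exceed it by up to $N_0/\gamma$. The paper's proof has the same slack.
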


Theorem~\ref{thm:generic-robustification} characterizes the additional query cost incurred when the model owner is adversarial and attempts to conceal the unfairness of the model. This cost is governed by the corruption parameters, in particular the corruption level $\beta$ and the probability mass $\gamma$ of the cells that reveal the corruption mechanism. Together, these quantities determine how difficult it is for the auditor to obtain sufficiently informative observations to detect the model owner's concealment of unfairness. The proof is given in Appendix~\ref{app:robust-proofs}. 

The dependence of the query complexity on these parameters is intuitive. In particular, the sample complexity increases as the corruption parameter $\beta$ decreases: when the model owner has a greater ability to manipulate the oracle responses, the auditor requires additional queries to dilute the effect of these corruptions and recover reliable information about the property under audit. Similarly, when the probability mass $\gamma$ of the informative cells is small, the auditor encounters such cells less frequently. Consequently, the rejection-sampling procedure requires more queries to obtain a sufficient number of samples from these low-probability regions. Thus, the query overhead captures two complementary difficulties faced by the auditor: the extent to which the model owner can corrupt the observations and the rarity of the regions in which such corruption can be revealed.

For every unlabeled CGQ $\bx \in \cQ$, we associate a cell $\cJ(\bx)$, where the collection of distinct cells satisfies
$\left|{\cJ(\bx)}_{\bx \in \cQ}\right| \ll |\cQ|$.
This collection of cells is constructed before any interaction takes place between the auditor and the model owner; in particular, its construction does not depend on the current active version space or on any target responses observed during the audit. \hyperlink{robustkalebi}{\textcolor{blue}{\textsc{Robust $k$-ALeBi}}} proceeds by first running \hyperlink{kalebi}{\textcolor{blue}{\textsc{$k$-ALeBi}}} to select a query $\bx$ that induces disagreement over the probe space. It then invokes an additional sampling subroutine that independently draws multiple queries from the same cell $\cJ(\bx)$ and submits them to the model owner's oracle. The auditor subsequently aggregates the oracle responses within this cell, and uses it as the effective label associated with $\cJ(\bx)$. The rationale behind this construction is that it exploits regularities of the model class (e.g., finite VC dimension, linearity) that allow the model owner to generalize from observed data in the first place. 

Consequently, a sufficient number of CGQs within the same cell are expected to exhibit consistent model behavior.
This dilutes the effect of adversarial corruption during the interaction, and further exposes the model owner's unfairness allowing the auditor to obtain information about the target property.

\subsubsection{Global Information-theoretic Hardness}

The star number in Definition~\ref{def:dualstar} alone does not necessarily yield a property-specific lower bound for multi-group fairness. In particular, distinct probe functions may disagree substantially while inducing the same statistical-parity functional across all $\binom{k}{2}$ group pairs. To capture the information complexity that is specific to multi-group fairness, we introduce the following notion.

\begin{definition}[Fairness-effective star number]
\label{def:fair-star}
The fairness-effective star number $\mathfrak s_\mu(\cF)$ is the
supremum of all integers $m$, for which there exist distinct CGQs
$\bx_1,\ldots,\bx_m$ and functions $f_0,\ldots,f_m\in\cF$ such
that for every $j\in[m]$, $f_0(\bx_j)=\mathbf 0$ and, for every $i\in[m]$, $f_i(\bx_i)\neq\mathbf 0$, and $j\neq i$, $f_i(\bx_j)=\mathbf 0$.
\end{definition}

Definition~\ref{def:fair-star} introduces a property-specific notion of disagreement. Rather than distinguishing probes according to whether they favor or disadvantage particular group pairs, as in the standard disagreement-based formulation, we distinguish only between probes that are fair and those that are unfair on the star set. This formulation captures the information that is directly relevant to the fairness property, while abstracting away disagreements that do not affect the fairness functional of interest.

Now, we state an assumption required to derive the lower bound for robust fairness estimation with CGQs under conditional probabilistic attack. 
\begin{assumption}[Cell-replicability on the star set]
\label{ass:star-cell-replicability}
The fairness-effective star set has pairwise disjoint measurable
cells $\cJ_1,\ldots,\cJ_m$
such that, for every witness $f_i$ and every cell $\cJ_j$, $f_i$ is constant
on $\cJ_j$ and equals its value at $\bx_j$.
\end{assumption}

Assumption~\ref{ass:star-cell-replicability} captures a regularity condition on the probe induced by the black-box model $h^*$. Specifically, it assumes that the model exhibits sufficiently consistent behavior within each cell: conditional on a cell, the model's responses do not vary substantially across queries. Thus, queries belonging to the same cell tend to preserve the relevant regularities of the underlying model behavior, which enables the auditor to replicate the probe's response from multiple samples within that cell.

\begin{restatable}[Fairness lower bound under probabilistic attack]{theorem}{robustLB}
\label{thm:attacked-fair-lower}
Under Assumption~\ref{ass:star-cell-replicability} and
$p<1/2$. A set of CGQs of size $\Omega\!\left(
\min\left\{
\mathfrak s_\mu(\mathcal F),
\frac1\varepsilon
\right\}
\right)$ is necessary to learn the undelrying probe.
\end{restatable}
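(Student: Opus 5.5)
We follow the classical star-number lower bound argument of \citet{hanneke2015minimax}, adapted to the fairness-effective star set of Definition~\ref{def:fair-star} and to the attacked oracle of Definition~\ref{ass:attack}. Set $m' := \min\{\mathfrak s_\mu(\cF), \lfloor 1/(2\varepsilon)\rfloor\}$ and take a fairness-effective star $\bx_1,\ldots,\bx_{m'}$ with witnesses $f_0,f_1,\ldots,f_{m'}$. Let $\cJ_1,\ldots,\cJ_{m'}$ be the disjoint cells of Assumption~\ref{ass:star-cell-replicability}. We use the hard distribution $\nu$ on CGQs that puts mass $2\varepsilon$ on each cell $\cJ_j$, spread according to the conditional pool distribution, and the remaining mass $1-2\varepsilon m'\ge 0$ on a region where all witnesses agree; we may take that region to be $\cJ_0\ni \bx_0$ with $f_0\equiv f_i$ there, or simply concentrate this mass on the star points. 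The target is drawn uniformly from $\{f_0,f_1,\ldots,f_{m'}\}$. By cell-replicability, $f_i$ and $f_0$ differ exactly on $\cJ_i$, on which $f_i\neq\mathbf 0$ and $f_0=\mathbf 0$. Hence any output $\hat f$ that errs on the cell of the true index incurs risk at least $2\varepsilon>\varepsilon$. For distinct $i,i'\ge1$, $\hat f$ cannot be correct on both $\cJ_i$ and $\cJ_{i'}$ unless it separately identifies which cell is the unfair one.

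Next we bound what a query reveals. A CGQ falling in cell $\cJ_j$ returns $\mathbf 0$ under every target except $f_j$. So, conditioned on the target being $f_0$, each clean response is $\mathbf 0$ and eliminates at most one alternative index, namely the cell queried. This is the only place where the fairness-effective structure enters. We deliberately distinguish only \emph{fair} from \emph{unfair} responses, so disagreement among the $f_i$ on their nonzero values is irrelevant. The attack only makes the auditor's task harder. We let the adversary attack with probability $0$, which is allowed since Definition~\ref{ass:attack} only imposes an upper bound $p<1/2$, so the clean lower bound transfers. One could use the $p$-fraction to flip responses to $\mathbf 0$ and gain a factor like $1/(1-p)$, but for an $\Omega(\cdot)$ statement the constant is irrelevant, and we will avoid it.

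The core step is the standard adversary or Yao argument. Run any (possibly randomized) auditor for $q < m'/2$ queries against the target $f_0$. At most $q$ distinct cells are probed, so at least $m'-q>m'/2$ indices $i$ have cells never queried. For each such $i$, the transcript under $f_i$ is identical in distribution to that under $f_0$. The output $\hat f$ therefore has the same law under $f_0$ and under all these $f_i$, and it can be correct on at most one of the corresponding cell configurations. Averaging over the uniform prior, the probability of error exceeding $\varepsilon$ is at least roughly $\tfrac{m'-q}{m'+1}\cdot$ a constant, which exceeds any $\delta<1/2$ when $q\le c\,m'$. This gives $\Omega(\min\{\mathfrak s_\mu(\cF),1/\varepsilon\})$. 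Fixing the deterministic auditor first and then derandomizing via Yao handles randomized auditors.

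The main obstacle is the distribution construction. We must ensure that total mass $2\varepsilon m'\le 1$ while keeping the cells measurable, disjoint, and sampleable from the pool, and that queries at points of $\cJ_j$ other than $\bx_j$ reveal nothing beyond what $\bx_j$ reveals. This is exactly what Assumption~\ref{ass:star-cell-replicability} provides. Without it, a query near but not at $\bx_j$ could distinguish several witnesses simultaneously and break the ``one index per query'' counting. I would verify this carefully first. I would also check the edge case $\mathfrak s_\mu=\infty$, where $m'=\lfloor 1/(2\varepsilon)\rfloor$ yields the $1/\varepsilon$ term.
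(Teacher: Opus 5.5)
Your argument is correct for the statement as literally phrased, a lower bound for \emph{learning the probe}, but it follows a different route from the paper's.

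\textbf{Your route.} You switch the adversary off, which is legitimate because Definition~\ref{ass:attack} only upper-bounds the attack probability. You then run a Hanneke-style Bayesian counting argument: with a uniform prior on $\{f_0,\ldots,f_{m'}\}$ and cells of mass at least $2\varepsilon$, each query under $f_0$ returns $\mathbf 0$ and rules out at most one witness. Moreover, $\hat f$ can be $\varepsilon$-accurate for at most one unqueried witness.

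\textbf{The paper's route.} The paper instead proves the bound for the weaker task of \emph{estimating} $\mu$.
\begin{itemize}
\item It takes $\nu$ uniform on the star points and observes that $\mu(f_0)=0$ while $\mu(f_i)=1/m\ge 4\varepsilon$.
\item It couples $f_0$ against each $f_i$ separately, showing that every star point must be queried with probability at least $1-2\delta$ under $f_0$. This gives $\mathbb E_{f_0}[T]\ge m(1-2\delta)$.
\item It then builds explicit attacked environments on the cells $\cJ_i$, where one observation is $\mathrm{Bern}(p)$ versus $\mathrm{Bern}(1-p)$. The KL chain rule plus data processing yields the additional factor $D_{\rm KL}(\mathrm{Bern}(\delta)\|\mathrm{Bern}(1-\delta))/D_{\rm KL}(\mathrm{Bern}(p)\|\mathrm{Bern}(1-p))$.
\end{itemize}

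\textbf{What each buys.} Your route is more elementary, and it uses Assumption~\ref{ass:star-cell-replicability} only cosmetically: a uniform $\nu$ on the star points would do. As a probe-learning bound, however, it is essentially Step~1 of Theorem~\ref{th:SCLB} specialized to fairness-effective stars. These are in particular multi-stars, so $\mathfrak s_\mu\le\mathfrak s_k$ and nothing fairness-specific is exploited. Your counting step would also not transfer to estimating $\mu$. All witnesses share the value $\mu(f_i)=1/m$, so a single estimate can be right for every unqueried $f_i$ at once. The argument therefore has to be centered at the fair probe $f_0$, as in the paper. That centering is what justifies introducing $\mathfrak s_\mu$, and the KL step is what gives the $\log(1/\delta)/(1-2p)^2$ amplification. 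Neither is needed for the $\Omega(\min\{\mathfrak s_\mu(\cF),1/\varepsilon\})$ claim itself.

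\textbf{Two small repairs.}
\begin{itemize}
\item With cell mass $2\varepsilon$ and failure meaning risk $>\varepsilon$, your step ``errs on the cell $\Rightarrow$ risk $\ge 2\varepsilon$'' presumes $\hat f$ errs on the whole cell. What you actually need is this: for distinct unqueried $i,i'$, the risks under $f_i$ and $f_{i'}$ sum to at least $\nu(\cJ_i)+\nu(\cJ_{i'})\ge 4\varepsilon$. This holds because on $\cJ_i$ the output $\hat f$ cannot equal both $f_i(\bx_i)\neq\mathbf 0$ and $\mathbf 0$, and similarly on $\cJ_{i'}$. It yields your ``at most one'' claim and a failure probability of at least $(m'-q-1)/(m'+1)$.
\item No assumption guarantees a region where all witnesses agree, so spread the leftover mass $1-2\varepsilon m'$ over the cells instead.
\end{itemize}
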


In Section~\ref{sec:exp}, we evaluate \textsc{Robust $k$-ALeBi} and compare its performance with \textsc{ $k$-ALeBi}, \textsc{ReconAudit} and the auditing method proposed by \cite{yan2022active} in the fairness-aware adversarial regime.

%% file: sections/experiments.tex
\section{Experimental Analysis}\label{sec:exp}

We empirically evaluate our framework along three axes:  the accuracy of learned  bias probe, multi-group fairness estimation via bias probes and the sample and computation efficiency of the proposed interactive procedure. Given a dataset $D$, we first train three classifiers $h$ ---linear model, random forest, or multi-layer perceptron (MLP). We then generate a labeled dataset by querying $h$ on unlabeled samples. Using this dataset, we perform interactive learning of bias probes associated with multi-group fairness. From the learned probes, we estimate the fairness metric $\mu(\cD,h)$ and report the estimation error along with explanations for the audit report.

 \paragraph{Datasets and experimental details.} 
We evaluate our theoretical results on three well-known benchmark datasets: \textsc{COMPAS}, \textsc{German Credit} and \textsc{Student}. 
For the \textsc{COMPAS} dataset,  introduced in the context of recidivism risk assessment \cite{angwin2022machine},  the number of groups is fixed to $k=2$, while for the \textsc{German Credit} dataset, originating from the Statlog German Credit benchmark dataset \cite{hofmann1994statlog}., we consider $k=4$ groups. 
In Appendix \ref{secapp:exp}, we further extend our analysis to synthetic datasets with multiple values of $k$. 

\paragraph{Evaluation details and reproducibility.}
As a benchmark, we compare \hyperref[alg:alebi_k]{\textsc{$k$-ALeBi}} against the two baselines, \hyperref[alg:direct]{\textsc{DirectAudit}} and \hyperref[alg:reconstruction-cal]{\textsc{ReconAudit}}. For each binary classifier, we construct a dataset and use it as a pool of unlabeled samples, which we partition into $k$ group-specific pools. \hyperref[alg:alebi_k]{\textsc{$k$-ALeBi}} uses these pools to selectively construct informative CGQ queries. Each experiment is run with 10 random seeds to ensure reproducibility.

\begin{table*}[H]
\centering
\scriptsize
\setlength{\tabcolsep}{2.6pt}
\renewcommand{\arraystretch}{1.08}
\caption{Clean-audit summary at budget $B=60$ and $H=500$. Error entries are mean $\pm$ 95\% CI over five seeds. Runtime is mean audit-only wall-clock time in milliseconds, excluding model fitting, data loading, and plotting. \textbf{Bold values indicate the best (lowest) value among directly comparable baselines within each dataset--model row; ties are all bolded.} Probe error is reported only for Probe-CAL and is therefore not ranked across methods.}
\label{tab:clean_audit_summary}
\resizebox{\textwidth}{!}{%
\begin{tabular}{llc ccc cc cc cc cc}
\toprule
& & & \multicolumn{3}{c}{Probe-CAL} & \multicolumn{2}{c}{Reconstruction-CAL} & \multicolumn{2}{c}{Direct} & \multicolumn{2}{c}{Audit-CAL} & \multicolumn{2}{c}{Finite-H CAL} \\
\cmidrule(lr){4-6}\cmidrule(lr){7-8}\cmidrule(lr){9-10}\cmidrule(lr){11-12}\cmidrule(lr){13-14}
Dataset & Groups & Model & Probe error & Fairness error & ms & Fairness error & ms & Fairness error & ms & Fairness error & ms & Fairness error & ms \\
\midrule
COMPAS & 2 & Linear & $0.0004\pm0.0008$ & $0.0165\pm0.0108$ & 101.4 & $0.0003\pm0.0006$ & 6.1 & $0.0984\pm0.0865$ & \textbf{0.5} & $\mathbf{0.0000\pm0.0000}$ & 237.2 & $\mathbf{0.0000\pm0.0000}$ & 187.4 \\
 &  & MLP & $0.0004\pm0.0008$ & $0.0178\pm0.0101$ & 240.3 & $\mathbf{0.0000\pm0.0000}$ & 6.8 & $0.1126\pm0.0949$ & \textbf{0.5} & -- & -- & -- & -- \\
 &  & RF & $0.0000\pm0.0000$ & $0.0217\pm0.0144$ & 77.6 & $\mathbf{0.0000\pm0.0000}$ & 5.6 & $0.1282\pm0.0759$ & \textbf{0.5} & -- & -- & -- & -- \\
\midrule
German Credit & 4 & Linear & $0.0000\pm0.0000$ & $0.0277\pm0.0235$ & 260.2 & $\mathbf{0.0000\pm0.0000}$ & 2.1 & $0.2258\pm0.1148$ & \textbf{0.5} & $\mathbf{0.0000\pm0.0000}$ & 39.1 & $\mathbf{0.0000\pm0.0000}$ & 26.9 \\
 &  & MLP & $0.0000\pm0.0000$ & $0.0098\pm0.0028$ & 661.7 & $\mathbf{0.0000\pm0.0000}$ & 2.0 & $0.1326\pm0.0571$ & \textbf{0.4} & -- & -- & -- & -- \\
 &  & RF & $0.0000\pm0.0000$ & $0.0285\pm0.0091$ & 37.5 & $\mathbf{0.0000\pm0.0000}$ & 1.9 & $0.1790\pm0.0570$ & \textbf{0.4} & -- & -- & -- & -- \\
\midrule
Student & 2 & Linear & $0.0000\pm0.0000$ & $0.0162\pm0.0116$ & 38.5 & $\mathbf{0.0000\pm0.0000}$ & 0.8 & $0.0414\pm0.0281$ & \textbf{0.4} & $\mathbf{0.0000\pm0.0000}$ & 12.3 & $\mathbf{0.0000\pm0.0000}$ & 10.7 \\
 &  & MLP & $0.0000\pm0.0000$ & $0.0172\pm0.0060$ & 179.7 & $\mathbf{0.0000\pm0.0000}$ & 1.0 & $0.1110\pm0.0754$ & \textbf{0.5} & -- & -- & -- & -- \\
 &  & RF & $0.0000\pm0.0000$ & $0.0227\pm0.0101$ & 14.3 & $\mathbf{0.0000\pm0.0000}$ & 0.8 & $0.0954\pm0.0499$ & \textbf{0.4} & -- & -- & -- & -- \\
\bottomrule
\end{tabular}
}
\vspace{2pt}
\begin{minipage}{0.99\textwidth}
\footnotesize\textit{Notes.} Probe error is defined only for Probe-CAL because the other baselines do not directly learn the relational probe. Audit-CAL is the finite-$H$ fairness-targeted Yan--Zhang adaptation; it and the corresponding Finite-H CAL baseline were evaluated only for linear models. The shared continuous-linear experiment is not merged into this table because it uses a different hypothesis family and budget ($B=500$). Runtime was benchmarked after the original experiments using the saved prediction matrices, so it measures auditing cost rather than training cost. Boldface compares fairness-estimation error and runtime separately within each row; it does not imply an overall ranking across accuracy and computational cost.
\end{minipage}
\end{table*}

\subsection{Accuracy}

Across the three datasets, \textsc{$k$-ALeBi} is competitive with the baselines of \textsc{DirectAudit} and \textsc{ReconAudit} in the accuracy of multi-group fairness estimation while requiring only few CGQs. As Figure \ref{fig:directbaselinesesterror} shows, after $1000$ queries \textsc{$k$-ALeBi} generally achieved substantially smaller estimation error than the the baselines, although reconstruction could achieve essentially zero error when it successfully recovered the target within the finite candidate class. 

\begin{figure}[H]
    \centering
    \includegraphics[width=0.9\linewidth]{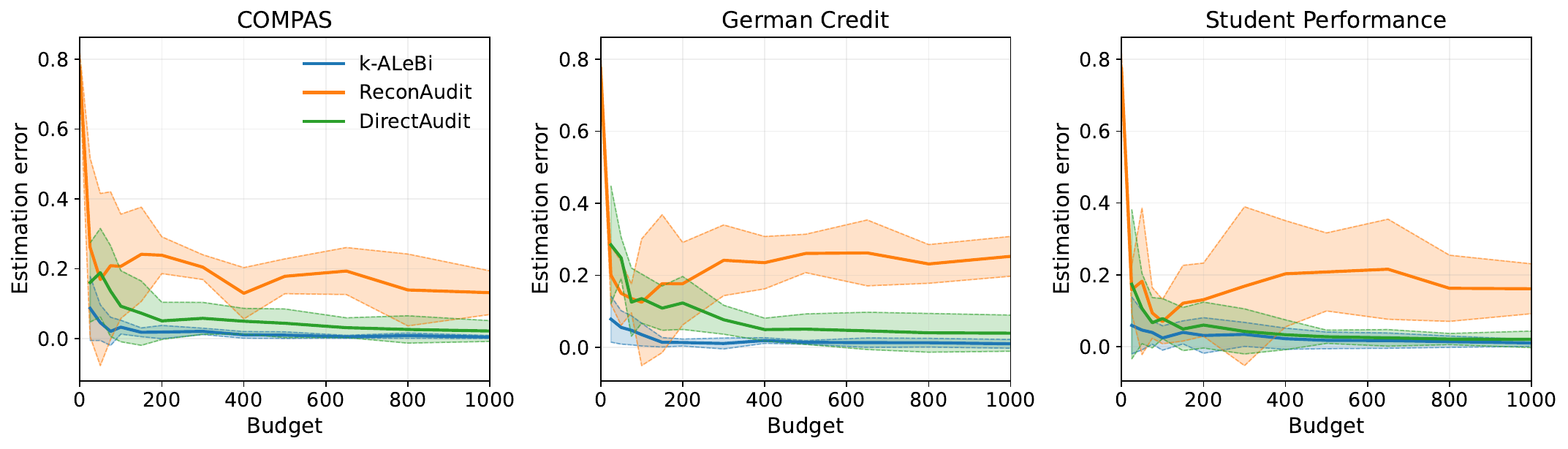}
    \caption{Estimation Error of Multi-group fairness under limitted budget.}
    \label{fig:directbaselinesesterror}
\end{figure}

We further evaluate \textsc{$k$-ALeBi} against \textsc{DiamAudit}, the auditing method proposed by \cite{yan2022active}, in its applicable setting of $k=2$ protected groups and a linear hypothesis class, corresponding to statistical parity. Figure~\ref{fig:yanbaselineesterror} shows that \textsc{$k$-ALeBi} achieves lower statistical parity estimation error than \textsc{DiamAudit} on both datasets. On COMPAS, \textsc{$k$-ALeBi} achieves an estimation error of $0.0000213$, compared with $0.000046$ for \textsc{DiamAudit}. On the Student dataset, \textsc{$k$-ALeBi} achieves an estimation error of $0.006947$.

\begin{figure}[h!]
    \centering
    \includegraphics[width=0.95\linewidth]{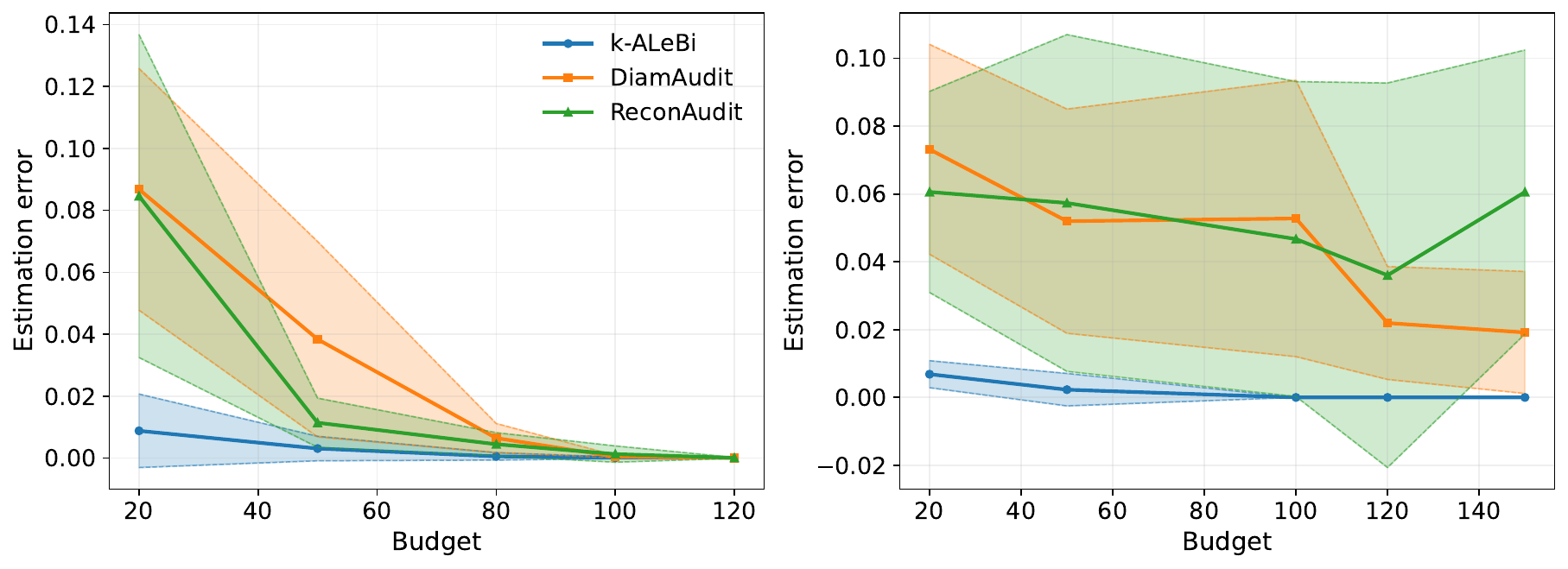}
    \caption{Estimation Error of \textsc{DiamAudit} under Its Applicable Setting}
    \label{fig:yanbaselineesterror}
\end{figure}

\subsection{Interpretability}

In addition to estimating multi-group fairness, \textsc{$k$-ALeBi} provides an interpretation of the measured unfairness that can't be obtained from a scalar fairness estimate alone. Its interpretability outputs decompose the learned probe across non-protected features and characterize how different feature configurations affect the CGQs response. For a feature $q$, let
$Z_q(Q)=(X_1^{(q)},\ldots,X_K^{(q)})$ and recall $\mathfrak e_{q,ij}(z)=\mathbb{E}[f_{ij}(Q)\mid Z_q(Q)=z]$, $\widetilde f_q(z)=\max_{i<j}|m_{q,ij}(z)|.$ We estimate these conditional means using independent i.i.d CGQs and $K$-nearest-neighbor regression at the observed $Z_q$ configurations. For $K=2$, we additionally visualize $\widetilde f_q(z_1,z_2)$ using Gaussian-kernel surfaces. Importantly, \textsc{$k$-ALeBi} observes CGQs responses rather than individual model predictions.

Figure~\ref{fig:interpretability1} ranks features according to the mean across used seeds of the $90^{\text{th}}$ percentile of the estimated $\widetilde f_q(Z_q)$. The corresponding feature-pair visualizations retain information about the direction and configuration of group disparities with respect to $\modelowner$ that would otherwise be lost by reducing them to a single unsigned score. Thus the analysis identifies not only the features associated with larger measured disparities, but also the group pairs and feature configurations under which the learned model exhibits different CGQd responses.  For the COMPAS dataset, Figure~\ref{fig:interpretability2} indicates larger estimated disparities for individuals of non-Caucasian race who had been arrested multiple times, with the disparity varying across the number of prior arrests ($n\in\{1,2,5\}$). For the Student dataset, past-class failures and the amount of time spent going out are associated with larger gender-related disparities, with the estimated disparity being larger for females than for males. The feature "father's education" also exhibits a disparity pattern: higher levels of paternal education are associated with larger estimated differences in the model's group disparities for males.

\begin{figure}[h!]
    \centering
    \includegraphics[width=0.95\linewidth]{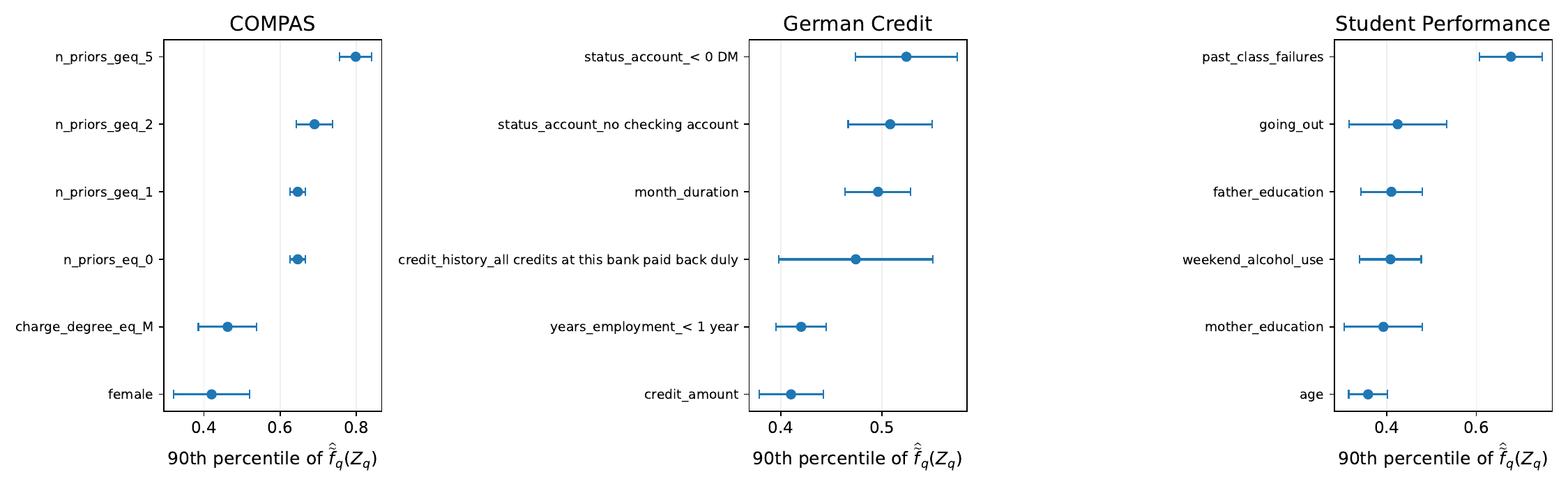}
    \caption{Feature-wise ranking of probe-derived disparity scores.}\label{fig:interpretability1}
\end{figure}

\begin{figure}[h!]
    \centering
    \includegraphics[width=0.95\linewidth]{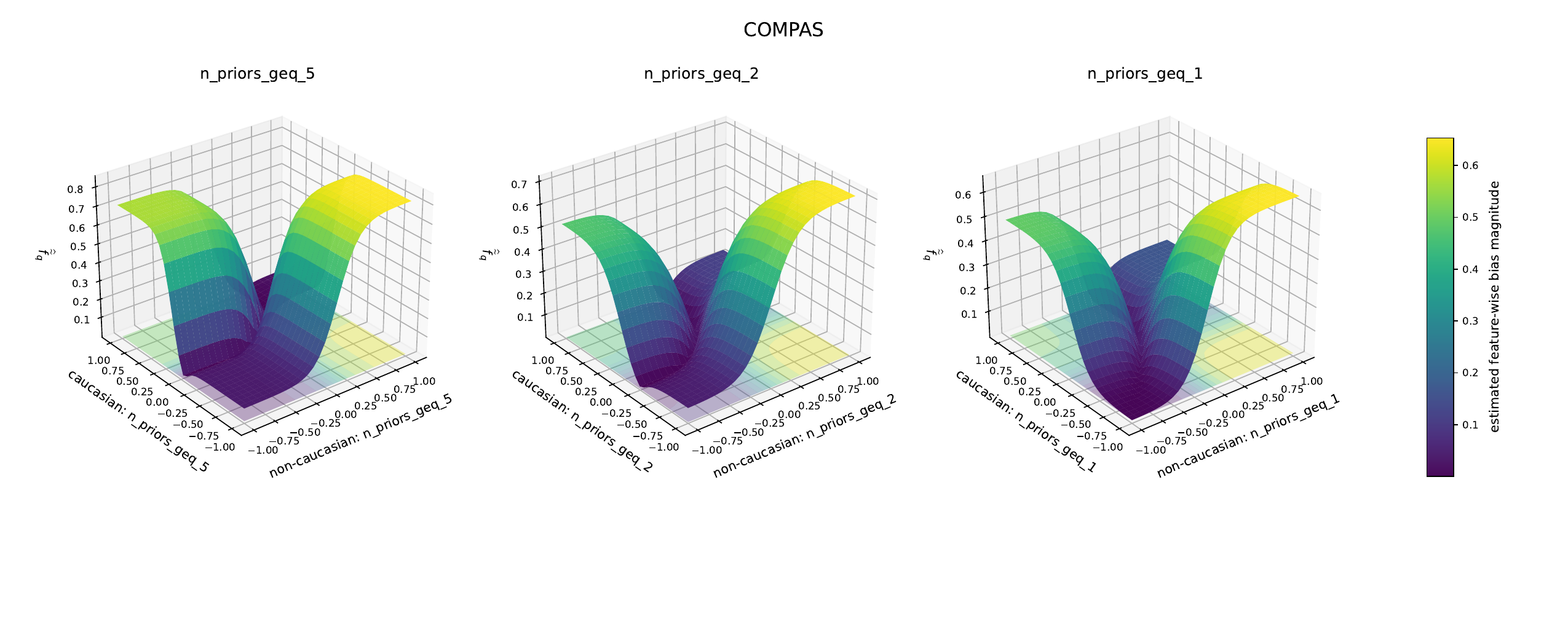}
    \caption{Feature-wise visualization of learned probes on the COMPAS dataset.}
    \label{fig:interpretability2}
\end{figure}

\begin{figure}[h!]
    \centering
    \includegraphics[width=0.95\linewidth]{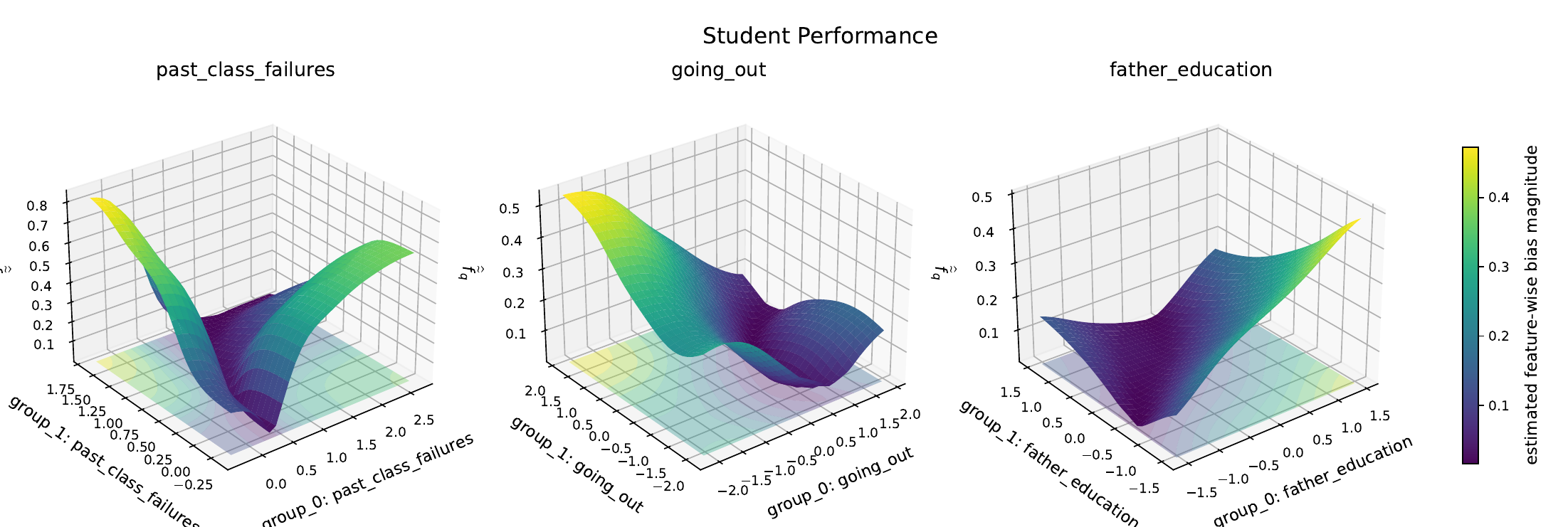}
    \caption{Feature-wise visualization of learned probes on the Student dataset.}
    \label{fig:interpretability4}
\end{figure}

\subsection{Protection Against Model Extraction Attacks}

We next examine the distinction between accurately auditing fairness and revealing sufficient information to reconstruct the underlying model. At limited query budgets, Cross Group Queries can preserve a large version space of models with similar discriminative behavior while still providing accurate fairness estimates. This demonstrates an empirical separation between audit utility and immediate model identification: useful fairness information can be obtained without necessarily exposing the individual predictions required for direct model reconstruction.

\begin{figure}[h!]
    \centering
    \includegraphics[width=0.95\linewidth]{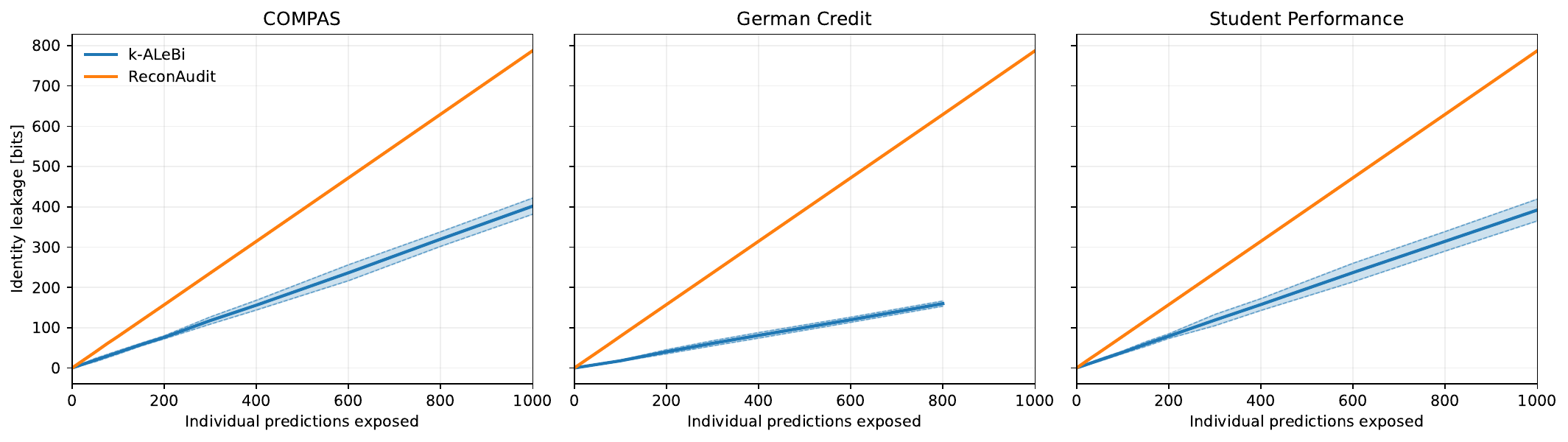}
    \caption{Identity leakage, measured by the number of individual model responses revealed to the auditor, comparing \textsc{$k$-ALeBi} with \textsc{ReconAudit}.}
    \label{fig:leak}
\end{figure}

Figure~\ref{fig:leak} compares the identity leakage induced by \textsc{$k$-ALeBi} and \textsc{ReconAudit}. \textsc{ReconAudit} uses membership queries by default, which directly expose individual model labels to the auditor. In contrast, \textsc{$k$-ALeBi} operates through relational CGQs and therefore does not directly reveal the corresponding individual labels. We quantify this information exposure using the identity-leakage measure defined in Definition~\ref{def:leake}. Across the evaluated query budgets, the results show substantially lower identity leakage for \textsc{$k$-ALeBi} than for \textsc{ReconAudit}, while retaining the ability to perform fairness auditing.

\begin{figure}[t!]
    \centering
    \includegraphics[width=0.95\linewidth]{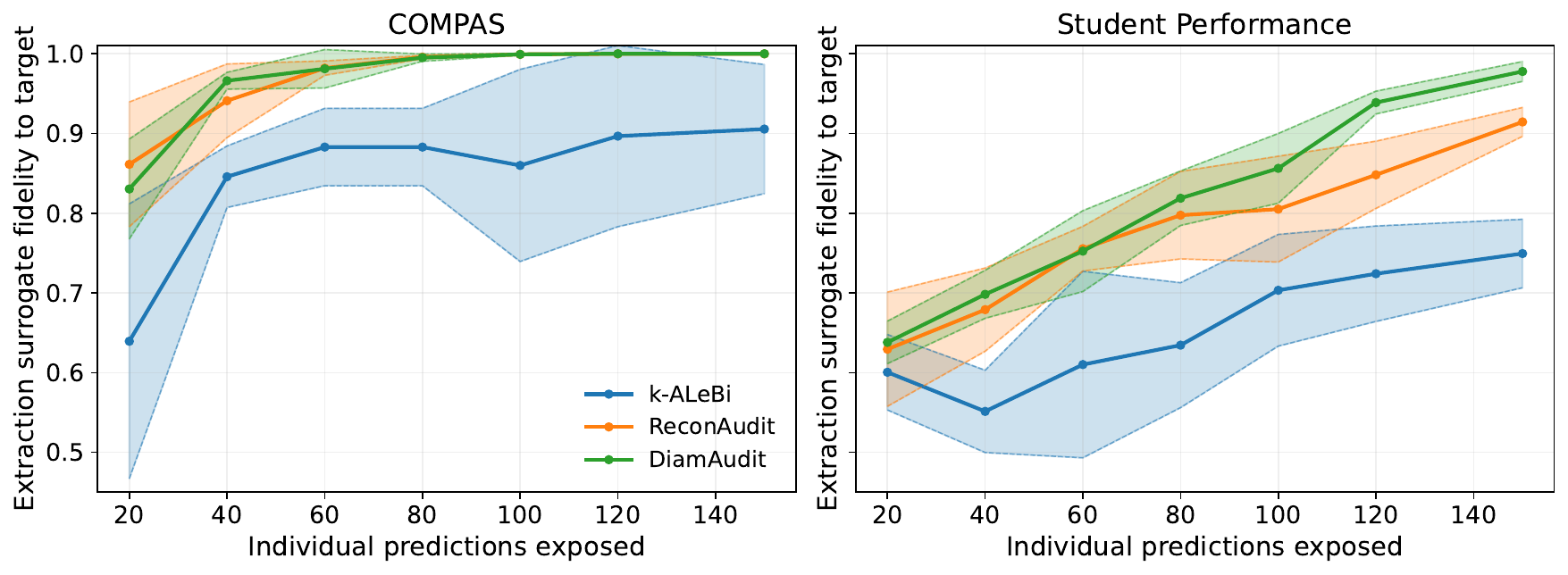}
    \vspace{1em}
    \includegraphics[width=0.95\linewidth]{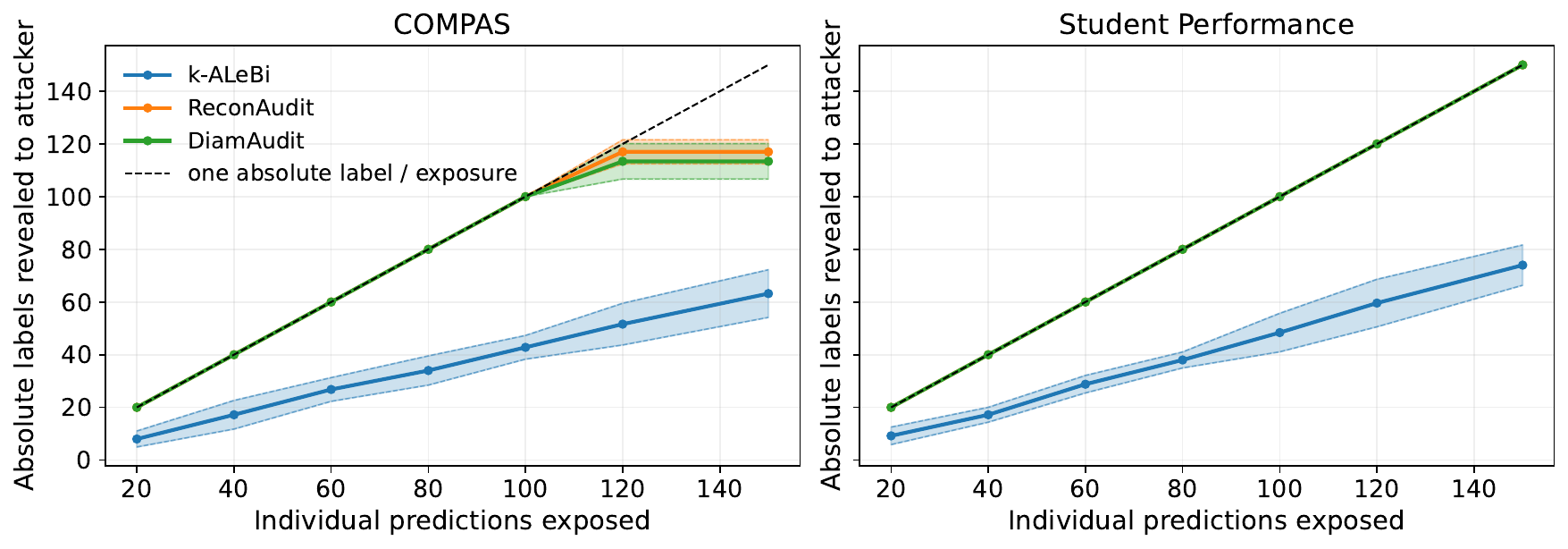}
    \caption{Model-extraction behavior under \textsc{$k$-ALeBi} and \textsc{DiamAudit}: extraction surrogate fidelity to the target model and the number of absolute labels revealed to the attacker.}
    \label{fig:extract_yan}
\end{figure}

Figure~\ref{fig:extract_yan} further compares \textsc{$k$-ALeBi} with \textsc{DiamAudit}. The first panel measures the fidelity of an extraction surrogate to the target model as individual predictions are exposed, while the second reports the number of absolute labels revealed to the attacker. These results illustrate the information advantage of relational auditing: \textsc{$k$-ALeBi} can provide useful fairness information through CGQs without requiring the same level of exposure of individual model predictions as membership-query-based extraction.

Overall, the results indicate that CGQs can reduce the amount of model-identifying information exposed during auditing. This allows fairness properties to be investigated while limiting direct access to individual model responses, thereby separating the utility of fairness auditing from the immediate reconstruction of the underlying model.
\subsection{Robustness against Fairness-Aware Adversaries}

Finally, we evaluate the robustness of \textsc{Robust $k$-ALeBi} against a fairness-aware adversary. On COMPAS-linear, for example, under a high corruption regime of $p=0.40$, the adversary induces an average of 41.4 raw coordinate corruptions, of which only 6.2 result in corrupted queries after filtering. Despite these corruptions, the fairness estimation error increases only marginally, from 0.019 to 0.021, with no observed concealment across the five runs. In comparison, the reconstruction error of \textsc{ReconAudit} increases from 0, corresponding to exact model identification in the absence of corruption, to 0.074 under attack, with a concealment rate of 0.20. 

\begin{figure}[h!]
\centering
\includegraphics[width=0.95\linewidth]{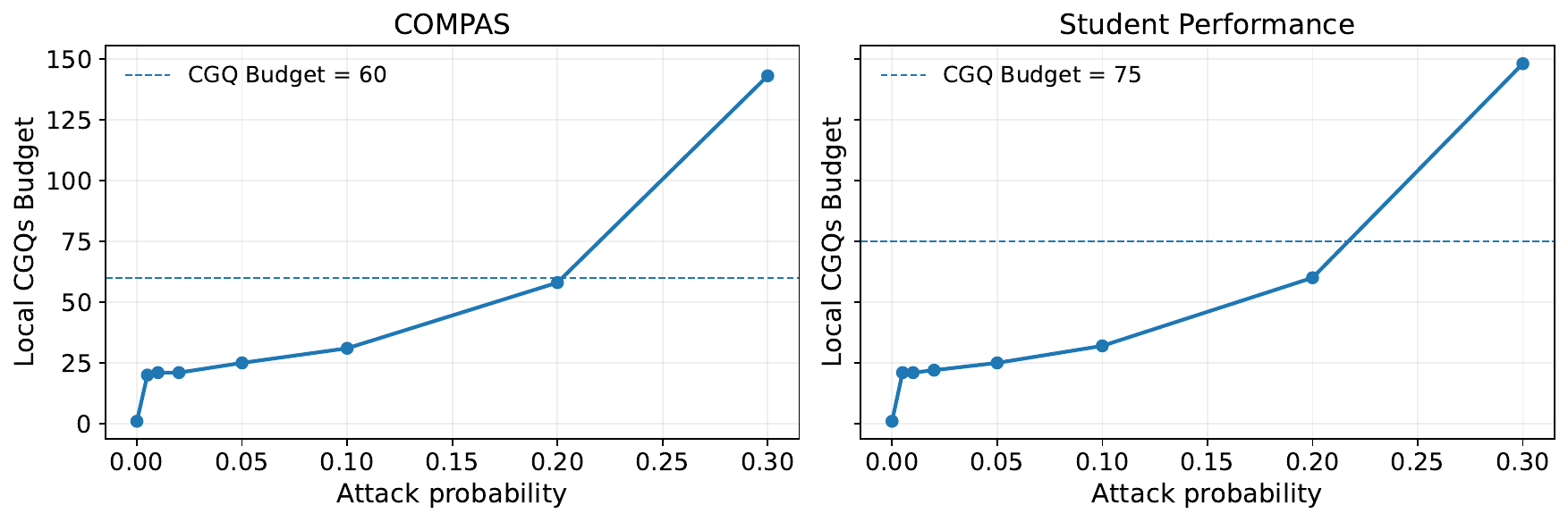}
\caption{Local CGQ budget of \textsc{Robust $k$-ALeBi} across cells as a function of the attack probability.}
\label{fig:robust-cgq-budget}
\end{figure}

The \cite{yan2022active} baseline exhibits an estimation error of 0.03 in the absence of attack, which increases to 0.05 under the strongest feasible attack probability, capped at 0.12. The same qualitative stability of \textsc{Robust $k$-ALeBi} is observed on the German-linear and Student-linear models.

\begin{figure}[h!]
\centering
\includegraphics[width=0.95\linewidth]{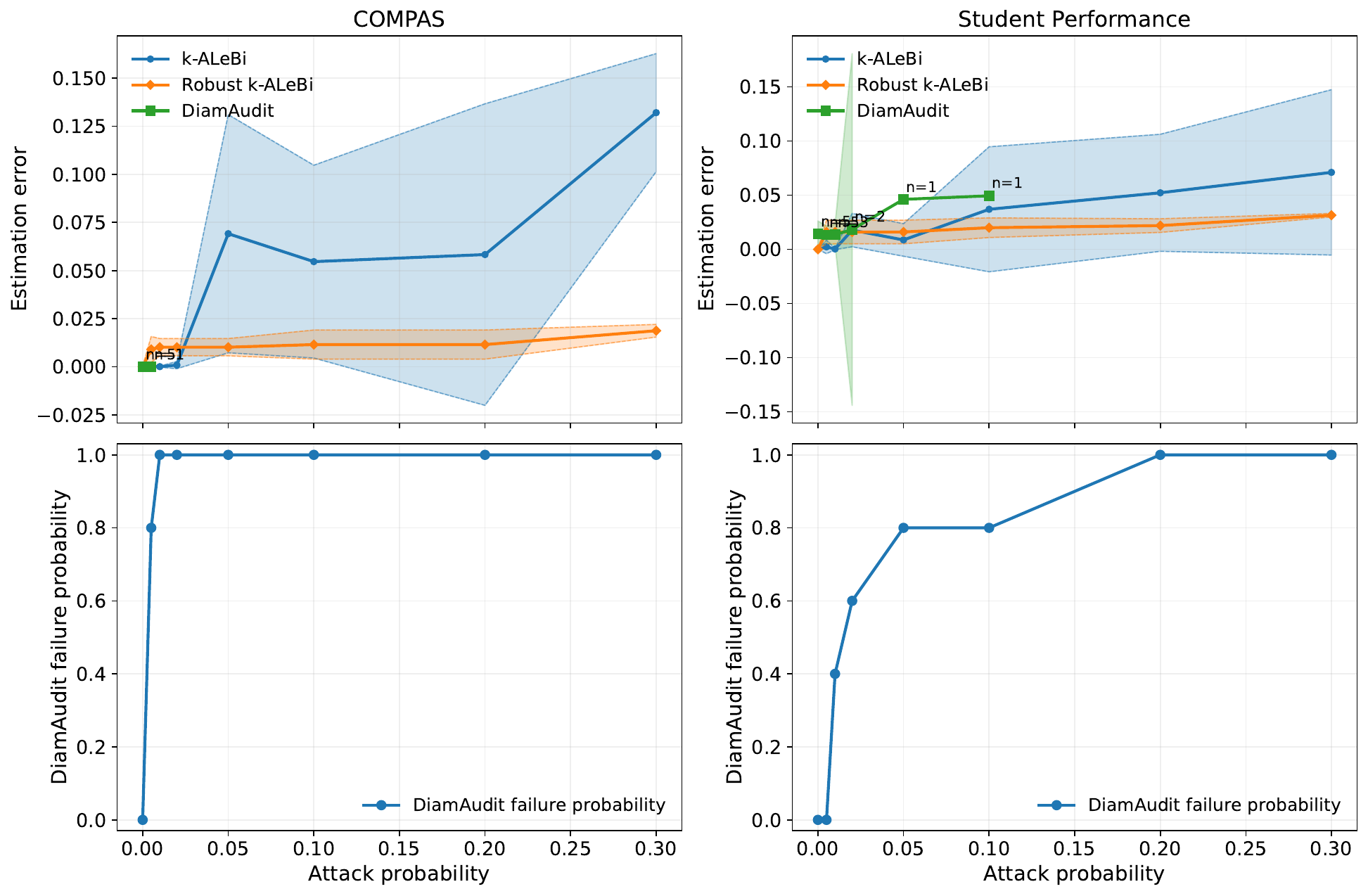}
\caption{Estimation error for statistical parity as a function of the attack probability.}\label{fig:robust-est-error-yan}
\end{figure}%

%% file: sections/conclusion.tex
\section{Conclusion}

In this work, we established an explicit separation between reconstruction-based and probe-based approaches to fairness auditing. We characterized the probe-learning problem through the multi-context star number and showed that, even when reconstruction-based auditing is infeasible for linear classifiers, the corresponding probe-based approach remains learnable, addressing \hyperlink{Q1}{\textbf{\textcolor{blue}{Q1}}}. Building on this formulation, we designed an auditing algorithm based on learning interpretable bias probes, providing audit information directly in terms of the fairness property of interest and addressing \hyperlink{Q2}{\textbf{\textcolor{blue}{Q2}}}. Finally, we analyzed the auditing problem in the presence of a fairness-aware adversary and established both upper and lower bounds on the resulting audit complexity, addressing \hyperlink{Q3}{\textbf{\textcolor{blue}{Q3}}}.

%% file: appendix/frameworkbis.tex
\section{Extended Framework Description}
\subsection{General Framework for Property Specific Audits}\label{app:generalfram}

Auditing ML models concerns a broad range of properties, including algorithmic fairness, robustness, etc. In a black-box setting, auditing a given property requires carefully designing the queries submitted to the model. Ideally, these queries should be informative about the property under audit while minimizing the amount of information they reveal about the model itself, thereby reducing the risk of model extraction. This principle can be applied to properties involving multiple groups, but we argue that it extends more generally to arbitrary classes of properties. For example, robustness can be interpreted as a two-group property, where an original input and its perturbed counterpart can be viewed as samples from two related groups (perturbed samples come from a ghost group). 

\begin{figure}[H]
    \centering
    \includegraphics[width=0.5\linewidth]{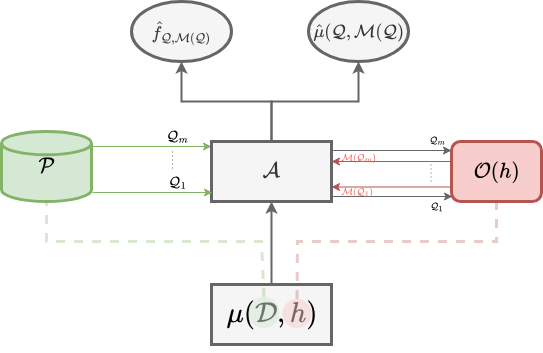}
    \caption{General Property-specific Audit Framework. }
    \label{fig:generalprop}
\end{figure}

More generally, as illustrated in Figure~\ref{fig:generalprop}, the model owner can construct a pool of candidate samples that is designed to avoid leaking sensitive information about the model while remaining informative with respect to the property being audited. The auditor $\auditor$ can then select, from this pool, the samples that are most informative for the property of interest and submit the corresponding queries to the model owner $\modelowner$. This separation between sample pool construction and query selection allows the auditor to retain flexibility in designing the audit while giving the model owner control over the information exposed through the available queries.

\subsection{Information-theoretic and combinatorial separation between active learning and active auditing in the presence of protected groups} \label{app:linearstar}

\paragraph{Active learning of linear classifiers.} It is known that the linear classifier learnability in the active setting fails even in 2 dimensions. Sangupta came up with a simple counterexample of why this happens; as Figure \ref{fig:alcountexample} shows, when considering a circular representation of data points there exists a strategy to construct an infinite star set. To see this, let $h_0$ be the center of the star set, one can observe that for all $i \in [q]$, $\text{DIS}(h_0,h_i) = \{x_i\}$. It is also easy to see that this occurs while $q$ can go to infinity. This construction shows that the star number of linear classifiers class explodes to infinity just in two dimensions.

\begin{figure}[H]
    \centering
    \includegraphics[width=0.6\textwidth]{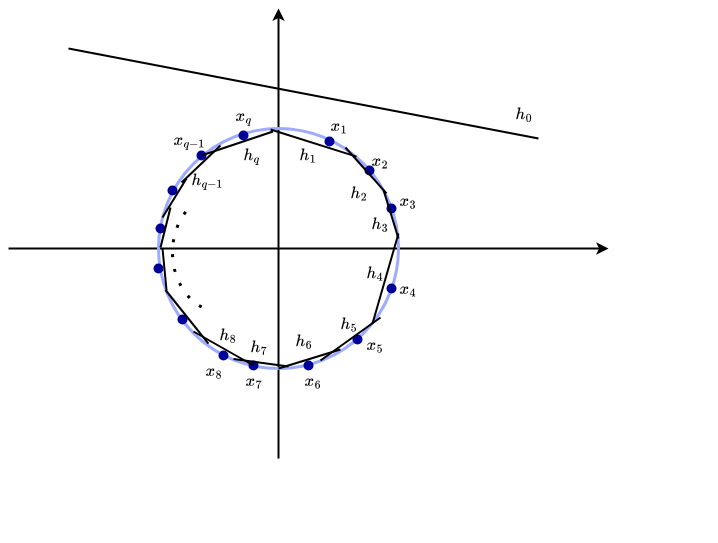}
    \caption{2D of linear classifier counterexample in the active leanring setting: the blue dots $x_i$ represent covariates, the dark lines represent classifiers $h_i.$}
    \label{fig:alcountexample}
\end{figure}

\paragraph{Active auditing in the presence of protected groups.} In the presence of protected groups, an additional feature (protected feature) needs to be added to the setting. Therefore to reproduce this counter-example for the auditing problem, we add an additional feature, therefore $\cX \subseteq \R^3$ as shown in Figure \ref{fig:3dcountexample}. We consider the scenario where $\cX = \cX^0 \times \cX^1 \times \cX^2 $, $\cX^0$ denotes the protected feature while $\cX^1$ and $\cX^2$ denote the unprotected features. $\cX^0_1 \subseteq \cX$ (resp.  $\cX^0_1$) denotes the first (resp. second) protected group.

\begin{figure}[H]
    \centering
    \includegraphics[width=0.8\textwidth]{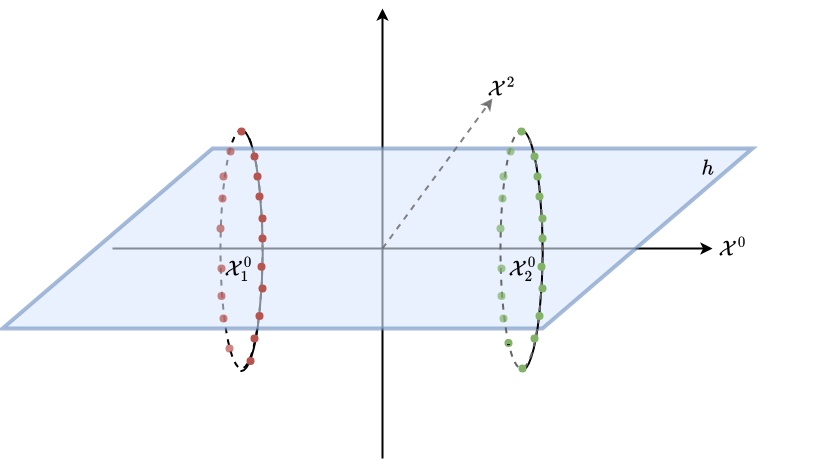}
    \caption{Illustration of \cite{dasgupta2004analysis} counterexample in the context of fairness auditing.  Red dots represent samples from the first protected group while green dots represent sampless from the second protected group. }
    \label{fig:3dcountexample}
\end{figure}

It is easy to see that, in this setting the counterexample of \cite{dasgupta2004analysis} still applies in 3D. and therefore the star number is infinite. However, when we focus on solely extracting the discriminative behavior of $h$ instead of the whole $h$ (through active learning approaches), we avoid unnecessary information in the subspace of $\cX$ where $h$ is fair. In other words, the information $h(x_1) -h(x_2) = 0$ is equivalent to both $h(x_0) = h(x_1) = 0$ and $h(x_0) = h(x_1) = 1$  in the context of fairness auditing but not in the context of learning, this shows that auditing problem reduces significantly the number of dichotomies used in learning.

To see why the dual-number corresponding to the audit problem illustrated in Figure \ref{fig:3dcountexample}, we reduce learning the overall bias canvas to learning canvas biases in each unprotected direction. The problem now reduces to the one explained in Example \ref{example:lin2d}, and the bias canvas in each direction is illustrated in Figure \ref{fig:3dcanvas}.

\begin{figure}[H]
    \centering
    \includegraphics[scale = 0.27]{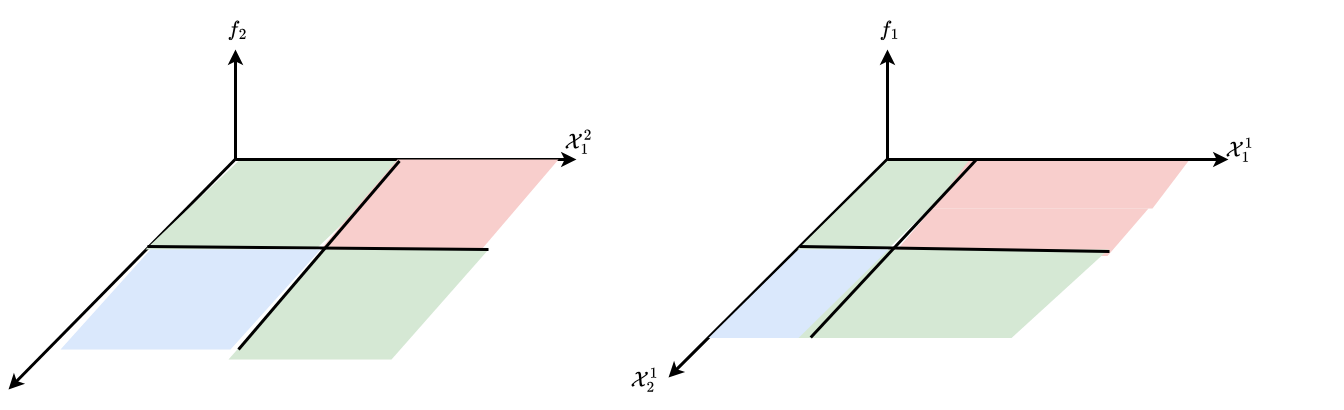}
    \caption{Bias canvases of $h$ in feature 1 (right) and in feature 2 (left).}
    \label{fig:3dcanvas}
\end{figure}

In this example the overall bias canvas is defined on the space $\cX^1_1 \times \cX^1_2 \times \cX^2_1 \times \cX^2_2 \subseteq \R^4 $.

\subsection{Examples of relative dual classes and their star numbers from known hypothesis classes}\label{app:examplestar}

When the original model class is known, the dual context class can be defined explicitly based on the model class. In this scenario, we refer to relative dual context classes, where instead of searching for arbitrary preference functions modeling the discriminative behavior of the model, we restrict our search within dual classes that depend on the original model class. We provide two examples to compute the dual star number, one for linear classifiers and the other for axis-parallel rectangles.

\begin{example}[Linear Classifiers as relative dual-context classes.]\label{example:lin2d}

For the relative dual class of linear separators in dimension 2, $\cF^{\text{lin}}_2$, the dual star number is $2$. To convince ourselves of that, consider the figure below. The preference function $f_0$ represents the center of the star set. It acts discriminatively on both the blue and red samples. For the red (resp. blue) sample, it prefers the one in $\cX_0$ (resp. the one in $\cX_1$). As the figure shows, $\{f_0,f_1,f_2\}$ consists of a star set for the dual-class $\cF(\cH^{\text{lin}}_{\R^2})$, since $\text{DIS}^{\text{rel}}_\text{dual}(\{f_0,f_i\}) = \{x^{(i)}\} $, for $i \in \{1,2\}$. It is easy to see that any new point of couples added to the partition of $\cX_0 \times \cX_1$ will increase the disagreement, therefore the star dimension is $2$.

\begin{figure}[H]
    \centering
    \includegraphics[width=0.8\textwidth]{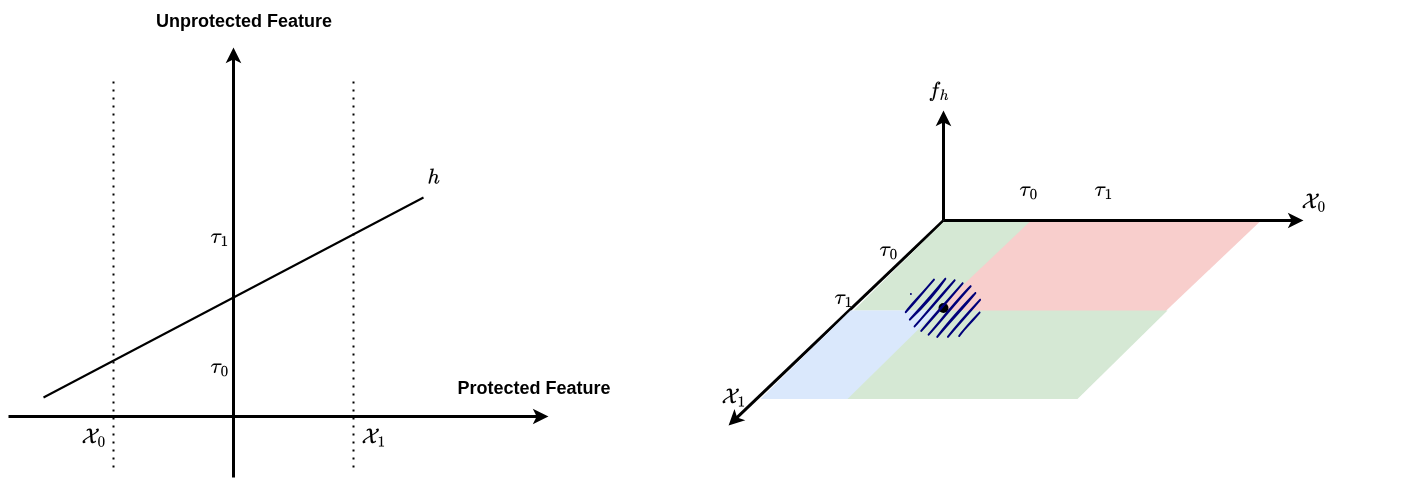}
    \caption{Illustration of the dual star number for the dual class relative to $\cH^{\text{lin}}_{\R^2}$.}
    \label{fig:illuslinear}
\end{figure}
\end{example}

By the symmetry of $\cF^{\text{lin}}_2$, the same reasoning apply for points chosen on the fair region (green area) of the center preference $f_0$.

    

\begin{example}[Axis-parallel rectangles as relative dual-context classes.]
    When the original model class consists of axis-parallel rectangles. The set of possible discriminative dichotomies increases. Consider the classification problem illustrated in Figure \ref{fig:illusrec}. One can observes that the same pattern of the previous examples repeats itself four times ( two times along 

    \begin{figure}[H]
    \centering
    \includegraphics[width=0.8\textwidth]{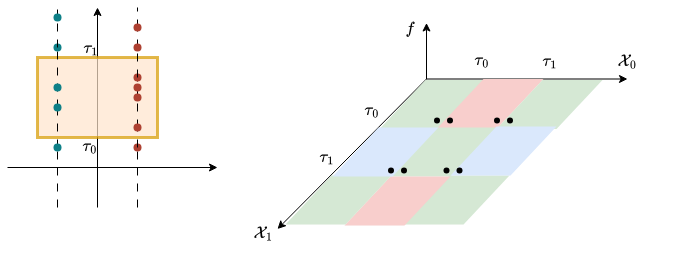}
    \caption{Illustration of the dual star number for the dual class relative to $\cH^{\text{rec}}_{\R^2}$.}
    \label{fig:illusrec}
\end{figure}

\end{example}

\subsection{Mutli-star number of multi-context class relative to linear classifiers}\label{app:multistarlinear}
\starlinearclass*

\begin{proof}

Let $\mathcal H = \left\{ h_w(X)=\mathds{1}_{w  X \ge 0}: w\in\mathbb R^{*2} \right\}$ be the class of homogeneous linear classifiers.

Let $R=\begin{pmatrix}0&-1\\1&0\end{pmatrix}$ denote rotation by $\pi/2$, and consider the following representation of protected groups domain $ \cX_{R} = \cZ_1 \times \cZ_2 = \{(x,Rx):x\in\mathbb R^{*2} \}.$

For $h\in \cH$, we have the bias probe for our audit instance problem $k=2$, $f_h(x,Rx) = h(x)-h(Rx) \in\{-1,0,+1\}, $
and let $\cF_{R, \cH} = \{f_h:h\in\mathcal H\}$. 

We show that $\mathfrak s(\cH)=\infty, \qquad \mathfrak s_2(\cF_{\cH,R})=3. $

The fact that $\mathfrak s(\mathcal H)=\infty$
for homogeneous linear separators in $\mathbb R^2$ is standard \citep{hanneke2014theory}.
Thus it remains to compute the star number of the induced relational
class $\mathcal \cF_{\cH,R}$.

\paragraph{Step 1: Angular representation of the $2$-context class.}

Since the classifiers are homogeneous, only the directions of $w$
and $x$ matter. 

Let $w_\theta=(\cos\theta,\sin\theta)$
and $x_\phi=r(\cos\phi,\sin\phi)$,
where $r>0.$

Then $ w_\theta x_\phi = r\cos(\theta-\phi).$
Since $Rx_\phi$ has angle $\phi+\pi/2$,
\[
w_\theta^\top Rx_\phi
=
r\cos\left(\theta-\phi-\frac{\pi}{2}\right)
=
r\sin(\theta-\phi).
\]
Consequently,
\begin{equation}
\label{eq:f-angular}
f_\theta(\phi)
= 
\operatorname{sign}\bigl(\cos(\theta-\phi)\bigr)
-
\operatorname{sign}\bigl(\sin(\theta-\phi)\bigr).
\end{equation}

We choose all probe and witness angles away from the transition
points, so the value assigned to angle $0$ is
irrelevant.

Let $\beta=\theta-\phi \pmod{2\pi}.$ Then, 
\begin{equation}
\label{eq:four-sector-labeling}
f_\theta(\phi)
=
\begin{cases}
+1,
&
\beta\in(-\pi/2,0),
\\[1mm]
0,
&
\beta\in(0,\pi/2),
\\[1mm]
-1,
&
\beta\in(\pi/2,\pi),
\\[1mm]
0,
&
\delta\in(\pi,3\pi/2).
\end{cases}
\end{equation}

Thus the $2$-context class becomes the $\beta$ translation class
obtained by rotating the four-sector cyclic labeling $+1,\quad 0,\quad -1,\quad 0,$ with each sector having angular width $\pi/2$.

\paragraph{Step 2: Agreement sets relative to a probe.} Fix a reference orientation $\theta_0$. By rotational invariance,
we may assume without loss of generality that $\theta_0=0.$
For every probe angle $\phi$, define its agreement set in parameter
space by
\[
A_\phi
=
\left\{
\theta\in\mathbb S^1:
f_\theta(\phi)=f_0(\phi)
\right\}.
\]
By construction, $0\in A_\phi$ for every $\phi$. Equation~\eqref{eq:four-sector-labeling} shows that there are exactly
two possible forms for $A_\phi$.

\medskip

\begin{itemize}
    \item \emph{Discriminatory Probe: $f_0(\phi)\in\{-1,+1\}$:}  In this case the reference label occurs on exactly one sector of
length $\pi/2$. Hence $A_\phi=I_\phi,$ where $I_\phi$ is an arc of length $\pi/2$ containing $0$.
\item \emph{Fair Probe: $f_0(\phi)=0$.} The label $0$ occurs on two antipodal sectors, each of length
$\pi/2$. Therefore $A_\phi=I_\phi\cup(I_\phi+\pi),$ where again $I_\phi$ is the unique component containing $0$ and has
length $\pi/2$.
\end{itemize}

Thus every probe has a \emph{local agreement interval}
$I_\phi$ of length $\pi/2$ containing the reference parameter $0$.
A Fair probe additionally has an antipodal copy of this interval on the two dimensional sphere.

\paragraph{Step 3: An interval lemma.}

We use the following elementary fact.

\begin{lemma}
\label{lem:interval-private}
Let
\[
I_i=(\ell_i,r_i),
\qquad i=1,\ldots,m,
\]
be intervals on the real line satisfying $\ell_i<0<r_i$ for every $i$. Suppose that, for an index $i$, there exists a point
$t_i$ such that
\[
t_i\in\bigcap_{j\neq i} I_j
\qquad\text{but}\qquad
t_i\notin I_i.
\]
Then $i$ must be either an index attaining the largest left endpoint
or an index attaining the smallest right endpoint. Consequently, at
most two indices can satisfy this property.

\end{lemma}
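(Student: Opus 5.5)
Write $L=\max_j \ell_j$ for the largest left endpoint and $R=\min_j r_j$ for the smallest right endpoint. Since every interval contains $0$, we have $L<0<R$, and $\bigcap_{j} I_j = (L,R)$. The plan is to show that a witness $t_i$ can only exist if $I_i$ is the sole interval realizing $L$ or the sole interval realizing $R$, and then to count.

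\textbf{Step 1: where $t_i$ can lie.} Fix an index $i$ that has a witness $t_i$. Since $t_i\in\bigcap_{j\neq i} I_j$, we have $\max_{j\neq i}\ell_j<t_i<\min_{j\neq i} r_j$. Since $t_i\notin I_i=(\ell_i,r_i)$, either $t_i\le \ell_i$ or $t_i\ge r_i$.

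\textbf{Step 2: the case $t_i\le\ell_i$.} Here $\max_{j\neq i}\ell_j<t_i\le \ell_i$, so $\ell_i$ strictly exceeds every other left endpoint. Thus $i$ is the unique index attaining $L$.

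\textbf{Step 3: the case $t_i\ge r_i$.} Symmetrically, $r_i\le t_i<\min_{j\neq i}r_j$, so $i$ is the unique index attaining the smallest right endpoint $R$.

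\textbf{Step 4: counting.} By Steps 2 and 3, every index with a witness is either the unique maximizer of the left endpoints or the unique minimizer of the right endpoints. There is at most one index of each kind, so at most two indices admit witnesses. Both kinds can occur at once, since the two extremal indices may differ. The strictness in Steps 2 and 3 also shows that ties cause no trouble: if two indices share the maximal left endpoint, neither of them can have a witness on the left.

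\textbf{Main obstacle.} The argument itself is elementary, and its only delicate point is the strict-versus-weak inequality bookkeeping. With open intervals, $t_i$ may equal $\ell_i$ or $r_i$, but the strict inequalities coming from the other intervals still force uniqueness. The real difficulty comes downstream, in applying the lemma on the circle $\mathbb S^1$, where each agreement set $A_\phi$ is an arc of length $\pi/2$ around $0$, possibly together with its antipodal copy. Since every such arc has length $\pi/2<\pi$, it lies inside $(-\pi/2,\pi/2)$ and can be lifted to the real line. A witness $f_i$ whose parameter lies near $0$ then falls under the lemma. The antipodal components of fair probes are what allow the count to rise from $2$ to the claimed $\mathfrak s_2=3$.
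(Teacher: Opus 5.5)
Your proof is correct and follows essentially the same argument as the paper. You split on whether $t_i$ lies left or right of $I_i$, use the strict inequalities coming from the other intervals to get uniqueness of the extremal endpoint, and then count. Your weak inequalities $t_i\le\ell_i$ and $t_i\ge r_i$ are in fact slightly more careful than the paper's strict $t_i<\ell_i$ and $t_i>r_i$, since an open interval allows $t_i$ to coincide with an endpoint.
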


\begin{proof}
Suppose $ t_i\in\bigcap_{j\neq i}I_j \setminus I_i.$ Since every interval contains $0$, either $t_i<\ell_i<0$ or
$t_i>r_i>0$.

If $t_i<\ell_i$, then $t_i\in I_j$ for every $j\neq i$, so $\ell_j<t_i<\ell_i$ for all $j\neq i.$
Hence $\ell_i>\ell_j$ for all $j\neq i, $
so $i$ is the unique index with largest left endpoint.

Similarly, if $t_i>r_i$, then $r_i<t_i<r_j$ for all $j\neq i$, and therefore $i$ is the unique index with smallest right endpoint.

There can be at most one unique largest left endpoint and at most
one unique smallest right endpoint. Hence at most two indices can
possess such a private point.
\end{proof}

\paragraph{Step 4: Upper bound $\mathfrak s(\mathcal Fair)\leq 3$.}

Suppose that $\phi_1,\ldots,\phi_m$ form a star centered at $f_0$. By the definition of the star number, for every $i\in[m]$ there
exists an orientation $\theta_i$ such that
\begin{equation}
\label{eq:star-condition}
f_{\theta_i}(\phi_i)\neq f_0(\phi_i),
\qquad
f_{\theta_i}(\phi_j)=f_0(\phi_j)
\quad\text{for all }j\neq i.
\end{equation}
Equivalently,
\begin{equation}
\label{eq:agreement-star}
\theta_i
\in
\bigcap_{j\neq i}A_{\phi_j}
\setminus A_{\phi_i}.
\end{equation}

We distinguish cases according to the number of discriminatory probes.

\begin{itemize}
    \item \emph{Case 1: There are no discriminatory probes.}

Then every probe is fair:
\[
A_{\phi_i}
=
I_i\cup(I_i+\pi).
\]
Since this family is invariant under translation by $\pi$, we may
project the parameter circle modulo $\pi$. Under this projection,
each agreement set becomes a single interval $I_i$ of length
$\pi/2$ containing the reference point $0$.

Condition~\eqref{eq:agreement-star} therefore reduces exactly to the
private-point condition in Lemma~\ref{lem:interval-private}.
Consequently, $m\leq2.$
\item \emph{Case 2: There is exactly one discriminatory probe.}

Let this probe have index $q$. For every $i\neq q$, a witness
$\theta_i$ satisfying~\eqref{eq:agreement-star} must in particular
belong to $A_{\phi_q}=I_q.$
Since $I_q$ is the local interval of length $\pi/2$ containing $0$,
$\theta_i$ cannot lie in the antipodal component
$I_j+\pi$ of any Fair agreement set. Hence, for all $i\neq q$,
the star condition reduces locally to
\[
\theta_i
\in
\bigcap_{j\neq i} I_j
\setminus I_i.
\]
By Lemma~\ref{lem:interval-private}, at most two of the Fair probes
can possess such witnesses.

Since there is only one discriminatory probe, this yields $m\leq 1+2=3.$

\item \emph{Case 3: There are at least two Discriminatory probes.}

For every $i$, the intersection
\[
\bigcap_{j\neq i}A_{\phi_j}
\]
contains a Discriminatory agreement interval, except possibly when $i$ is
one of the Discriminatory probes and there is exactly one other Discriminatory probe.
In either case, because there is at least one Discriminatory agreement set
among the sets indexed by $j\neq i$, every witness $\theta_i$ is
confined to the local component containing $0$.

Thus all relevant agreement sets may be replaced by their local
intervals $I_j$, and condition~\eqref{eq:agreement-star} again
reduces to
\[
\theta_i
\in
\bigcap_{j\neq i}I_j
\setminus I_i.
\]
Lemma~\ref{lem:interval-private} then implies $m\leq2.$
\end{itemize}

Combining the three cases gives $\mathfrak s_2(\mathcal \cF_{\cH,R})\leq3.$

\paragraph{Step 5: Lower bound $\mathfrak s_2(\mathcal F_{\cH,R})\geq3$.}

It remains to exhibit an explicit $3$-star on the 2-context class. To this end, consider the reference orientation $\theta_0=0$ and choose the three probe angles

\[
\phi_1=\frac{\pi}{20},
\qquad
\phi_2=\frac{21\pi}{40},
\qquad
\phi_3=\frac{23\pi}{40}.
\]

Using~\eqref{eq:f-angular}, their labels under the reference
orientation are
\[
\left(
f_0(\phi_1),
f_0(\phi_2),
f_0(\phi_3)
\right)
=
(+1,0,0).
\]

Now choose the three witnessing orientations
\[
\theta_1=\frac{4\pi}{5},
\qquad
\theta_2=\frac{3\pi}{80},
\qquad
\theta_3=\frac{25\pi}{16}.
\]
Direct substitution into~\eqref{eq:f-angular} gives
\[
\left(
f_{\theta_1}(\phi_1),
f_{\theta_1}(\phi_2),
f_{\theta_1}(\phi_3)
\right)
=
(-1,0,0),
\]
\[
\left(
f_{\theta_2}(\phi_1),
f_{\theta_2}(\phi_2),
f_{\theta_2}(\phi_3)
\right)
=
(+1,+1,0),
\]
and
\[
\left(
f_{\theta_3}(\phi_1),
f_{\theta_3}(\phi_2),
f_{\theta_3}(\phi_3)
\right)
=
(+1,0,-1).
\]
Therefore,
\[
\left\{
j:
f_{\theta_1}(\phi_j)\neq f_0(\phi_j)
\right\}
=
\{1\},
\]
\[
\left\{
j:
f_{\theta_2}(\phi_j)\neq f_0(\phi_j)
\right\}
=
\{2\}
\]
and
\[
\left\{
j:
f_{\theta_3}(\phi_j)\neq f_0(\phi_j)
\right\}
=
\{3\}
\]
Hence $\mathfrak s(\mathcal \cF_{\cH,R})\geq3.$ Together with the upper bound, we get $\mathfrak s_2(\mathcal \cF_{\cH,R})=3.$

Finally, since $\mathfrak s(\mathcal H)=\infty,$
we obtain the desired information-theoreticseparation.
\end{proof}

\begin{remark}[The protected groups domain restriction is essential]
\label{rem:domain-essential}
The finite-star conclusion on the 2-context class relies essentially on restricting the
allowable cross-group probes to
\[
\mathcal Z_R=\{(x,Rx):x\neq0\}.
\]
It does not hold for the unrestricted product-space class
\[
\widetilde{\mathcal F}
=
\left\{
(x_1,x_2)
\mapsto
h(x_1)-h(x_2)
:
h\in\mathcal H
\right\}.
\]
Indeed, for every homogeneous linear classifier and every $x$ away
from its decision boundary,
\[
h(-x)=-h(x),
\]
and hence
\[
f_h(x,-x)
=
h(x)-h(-x)
=
h(x) \footnote{Here, we reorder the labels: instead of $\{-2,0,2\}$ ($h$ takes labels $\{0,1\}$) we use labels $\{-1,0,1\}$ ($h$ takes labels $\{-1,1\}$))}.
\]
Thus the restriction of $\widetilde{\mathcal F}$ to pairs of the
form $(x,-x)$ contains an exact copy of $\mathcal H$, implying
\[
\mathfrak s_2(\widetilde{\mathcal F})
\geq
\mathfrak s(\mathcal H)
=
\infty.
\]
Therefore the separation in Proposition~\ref{prop:starlinear} arises
from the combination of relational predictions and the structured
protected groups domain, rather than from taking prediction
differences alone.
\end{remark}

\newpage

\section{Protection against Model Extraction}\label{app:protextraction}

\computeLPN*

\begin{proof}
We reduce from the Minimum Test Cover problem (Definition \ref{def:testcover}), known to be NP-complete.

We begin by showing that $B$-\textsc{Bounded-Relational-Extraction} is NP-hard via a polynomial-time reduction from \textsc{Minimum-Test-Cover}, even when the multi-context class and the set of admissible queries are finite and every query uses two points that occur in no other admissible query.

Membership in NP is immediate: given a candidate $\mathcal Q'\subseteq\mathcal Q$, one can verify $|\mathcal Q'|\leq B$ and, for each of the at most $\binom{m}{2}$ pairs of hypotheses, check whether some query in $\mathcal Q'$ distinguishes the pair. By assumption, every relational response is polynomial-time computable. For NP-hardness, we reduce from \textsc{Minimum Test-Cover}.

Let $(U,\mathcal T,K)$ be an instance, where $ U=\{u_1,\ldots,u_m\}, \qquad \mathcal T=\{T_1,\ldots,T_r\}.$ A subcollection $\mathcal T'\subseteq\mathcal T$ is a test cover if for every pair $u_i\neq u_j$, there exists $T_\ell\in\mathcal T'$ containing exactly one of $u_i$ and $u_j$. 

First, we can observe that if some pair $u_i\neq u_j$ is not distinguished by any test in $\mathcal T$, then the \textsc{Test-Cover} instance is necessarily a NO instance. Such instances may therefore be mapped in polynomial time to any fixed NO instance of the target problem. Hence, for the remainder of the reduction, we may assume that every pair of objects is distinguished by at least one test. For every test $T_\ell$, introduce two fresh domain points $a_\ell,b_\ell$. All such points are distinct, so $\{a_\ell,b_\ell\} \cap \{a_{\ell'},b_{\ell'}\} = \varnothing,$ for $\ell\neq\ell'$.

For every object $u_j\in U$, define a hypothesis $h_j:\mathcal X\to\{0,1\}$ by $h_j(a_\ell)=1$ and $h_j(b_\ell) = 1-\mathds{1}\{u_j\in T_\ell\}. $ Since every pair of objects is distinguished by at least one test, the resulting hypotheses $h_1,\ldots,h_m$ are pairwise distinct. 

For every $T_\ell$, define the relational query $q_\ell=(a_\ell,b_\ell)$ and let $f_h(q_\ell) = h(a_\ell)-h(b_\ell).$ Then 

$$ f_{h_j}(q_\ell) = h_j(a_\ell)-h_j(b_\ell) = \mathds{1}\{u_j\in T_\ell\}.$$

Consequently, for every $i\neq j$, \begin{align*} f_{h_i}(q_\ell)\neq f_{h_j}(q_\ell) &\iff \mathds{1}\{u_i\in T_\ell\} \neq \mathds{1}\{u_j\in T_\ell\} \\ &\iff T_\ell \text{ distinguishes }u_i\text{ and }u_j. \end{align*}

Now set $B=K$. For any subcollection $\mathcal T'\subseteq\mathcal T$, define $\mathcal Q' = \{q_\ell:T_\ell\in\mathcal T'\}.$ The correspondence $T_\ell\leftrightarrow q_\ell$ is one-to-one, so \[ |\mathcal Q'|=|\mathcal T'|. \] Moreover, by the equivalence above, \[ \mathcal T' \text{ distinguishes every pair in }U \iff \mathcal Q' \text{ distinguishes every pair in }\mathcal H_0. \] Therefore, \[ (U,\mathcal T,K)\in\textsc{Test-Cover} \iff (\mathcal H_0,\mathcal Q,B)\in \textsc{Bounded-Relational-Extraction}. \] The construction uses $2r$ domain points, $m$ hypotheses, and $r$ queries, and its explicit representation can be constructed in $O(mr)$ time. Hence the reduction is polynomial. Finally, since every $q_\ell$ uses the fresh pair $\{a_\ell,b_\ell\}$, distinct queries have disjoint supports. Therefore the NP-completeness result holds even under this restriction. 
\end{proof}

%% file: appendix/manipfree.tex
\section{Proofs for Manipulation-free Regime}

\subsection{Lower Bound on Sample Complexity}\label{app:SClb}
\scLB*

\begin{proof}

\textbf{Step 1: Active bound.}

Let $\alpha_\epsilon = \min \left\{ \mathfrak s_k, \left\lfloor\frac1{2\epsilon}\right\rfloor \right\}.$
Let $S= \{\bx_1,\ldots,\bx_{\alpha_\epsilon}\}$ be a $k$-star with center $f_0$ and witnesses $f_1,\ldots,f_{\alpha_\epsilon}$, and define the fixed coupling $\nu = \frac1{\alpha_\epsilon} \sum_{i=1}^{\alpha_\epsilon} \delta_{\mathbf x_i}$.

We have, $\nu(\{\mathbf x_i\}) = \frac1{\alpha_\epsilon} \ge 2\epsilon > \epsilon.$

Consider any randomized auditor using at most $q$ queries.
We begin by observing :

$$\underset{f^* \in \cF_k}{\sup}\prob \big[\underset{\bX \sim \nu}{\prob} [\widehat f(\bX) \neq f^*(\bX)]>\epsilon \big] \geq \underset{i \in [\alpha_\epsilon]}{\max} \prob \big[\underset{\bX \sim \nu}{\prob} [\widehat f(\bX) \neq f_i(\bX)]>\epsilon \big] $$
We proceed by coupling the audit process with targets $f_0,f_1,\ldots,f_{\alpha_\epsilon}$ using the same unlabeled stream and the same internal randomness for the audit.

Let $Q_0$ denote the distinct star points queried under target $f_0$, and if $$\prob \big[\underset{\bX \sim \nu}{\prob} [\widehat f_0(\bX) \neq f_0(\bX)]>\epsilon \big] >\delta$$
$f_0$ is already hard.

Otherwise, for all $i \in [\alpha_\epsilon]$ with probability at least $1-\delta$, we have 
\begin{equation}\label{eq:xsc}
    \widehat f_0(\mathbf x_i)=f_0(\mathbf x_i)
\end{equation}

Let $E_0$ denotes that event, that is

$$E_0 = \Big\{  \widehat f_0(\bx_i) = f_0(\bx_i): \forall i \in [\alpha_\epsilon] \Big\}$$

We have

\begin{equation}\label{eq:hh}
    \prob|E_0] \ge 1- \delta
\end{equation}

If $\bx_i\notin Q_0,$ then all queries lie in some $\bx_j$, where $j \neq i$.

By the star definition, we know that $f_i(\mathbf x_j)=f_0(\mathbf x_j)$, and a simple induction over the interaction wxith the model owner therefore gives $\widehat f_i=\widehat f_0.$

Hence, on the event in \ref{eq:xsc}, every non-queried $\bx_i$ produces $\widehat f_i$ such that $\underset{\bX \sim \nu}{\prob} [\widehat f_i(\bX) \neq f_i(\bX)]>\epsilon$.

Let $F_i$ denote the event
$$F_i = \Big \{ \underset{\bX \sim \nu}{\prob} [\widehat f_i(\bX) \neq f_i(\bX)] \Big\}$$

We have shown that $$E_0 \cap \{\bx_i \in Q_0\} \subseteq F_i$$

Therefore, 

\begin{equation}\label{eq:gho}
    \mathds{1}_{E_0} \mathds{1}_{\bx_i \notin Q_0}  \le \mathds{1}_{F_i} 
\end{equation}

On the other hand, by definition $|Q_0| \le q$. Since there $\alpha_\epsilon$ star points, 

\begin{equation}\label{eq:wer}
    \underset{i \in [\alpha_\epsilon]}{\sum} \mathds{1}_{\bx_i \notin Q_0}    \ge \alpha_\epsilon -q
\end{equation}

\begin{align*}
    \underset{i \in [\alpha_\epsilon]}{\max}  \prob \big[\underset{\bX \sim \nu}{\prob} [\widehat f_i(\bX) \neq f_i(\bX)]>\epsilon \big] &\ge \frac{1}{\alpha_\epsilon} \underset{i \in [\alpha_\epsilon]}{\sum}  \prob \big[\underset{\bX \sim \nu}{\prob} [\widehat f_i(\bX) \neq f_i(\bX)]>\epsilon \big] \\
    &= \frac{1}{\alpha_\epsilon} \underset{i \in [\alpha_\epsilon]}{\sum}  \E[  \mathds{1}_{F_i}] \\
    &\ge  \frac{1}{\alpha_\epsilon} \underset{i \in [\alpha_\epsilon]}{\sum}  \E[   \mathds{1}_{E_0} \mathds{1}_{\bx_i \notin Q_0} ] \\
    &\ge \frac{\alpha_\epsilon - q }{\alpha_\epsilon} \prob[E_0]\\
    &\ge \frac{(\alpha_\epsilon - q)(1- \delta) }{\alpha_\epsilon}
\end{align*}

Where step 3 follows from \ref{eq:gho}, step 4 follows from \ref{eq:wer} , and the last step follows from \ref{eq:hh}.

The right part is bigger than $\delta$ when $\frac{ \alpha_\epsilon-q }{ \alpha_\epsilon }(1-\delta) > \delta,$

which is equivalent to $q > \frac{1-2\delta}{1-\delta}\alpha_\epsilon.$

\textbf{Step 2: Passive bound.}  Let $m=\shwartz(\cF)$, $S=(\bx_1,\ldots,\bx_m)$ DS-shattered CGQs,
and $B \subseteq \cF[S]$ an $m$-dimensional pseudo-cube. Let $\nu$ the coupling be defined as: $\nu = \frac1m \sum_{i=1}^m \delta_{\mathbf x_i}.$

For the lower bound we strengthen the learner by granting it direct
membership query access to any of the $m$ $\nu$-support points. We can note that a lower bound in this stronger model is also a lower bound for the
pool-based auditor.

Let the target $f^*$ be uniformly random from $B$. By Yao's minimax principle \citep{yao1977probabilistic}, it suffices to analyze a deterministic auditor.

Suppose the auditor has made $r$ distinct queries, with $r\le q$.
Condition on an arbitrary resulting interaction history, which we denote $T$.

Let $B_T = \{ \bb\in B: b\text{ is consistent with } T \}.$ Because the prior on $B$ is uniform and the auditor is deterministic, the conditional target is uniform on $B_T$.

Fix an non-queried coordinate $i\in[m].$ For every $\bb\in B_T$, the pseudo-cube property gives an $i$-neighbor
$\bb'\in B$ satisfying $b'_i\neq b_i$, and $b'_j=b_j$ for $ j\neq i$ and since coordinate $i$ has not been queried, $\bb'$ agrees with $\bb$ on every
queried coordinate hence $\bb'\in B_T$.

For $\bb =(b_1,b_2, \cdots, b_{i-1},b_i, b_{i+1}, \cdots,b_m)$, let $\pi_i(\bb) = (b_1,b_2, \cdots, b_{i-1}, b_{i+1}, \cdots,b_m)$ denote the projection removing the $i^{\text{th}}$ coordinate from $\bb$. For $g \in \pi_i(B_T)$, we define a fiber $F_g = \{\bb \in B_T: \pi_i(\bb) = g\}$

Partition $B_T$ into these fibers. Every such fiber contains at least two traces and within one fiber, all labels at coordinate $i$ are distinct;
if two traces had the same value at coordinate $i$ and agreed on all
other coordinates, they would be identical.

Therefore, for any label $\by_i$ predicted by the auditor at
$\bx_i$, in every fiber at most one trace has $b_i=\by_i.$

By definition of the pseudo-cube, every fiber has cardinality at least two and since conditioned on $T$ , the target $f^*$ is uniform on $B_T$, it is also uniform within each fiber conditioned on membership of that fiber, by total law of probability we obtain

$$\prob \left[ \widehat f(\bx_i)\neq f^*(\bx_i) \mid T \right] \ge \frac12.$$

There are at least $m-r\ge m-q$ non-qeried coordinates.
Hence
\begin{align*}
\E
\left[
\underset{\bX \sim \nu}{\prob} [\widehat f(\bX) \neq f^*(\bX)
\mid T
\right]
&=
\frac1m
\sum_{i=1}^m
\prob
\left(
\widehat f(\mathbf x_i)\neq f^*(\mathbf x_i)
\mid T
\right)
\\
&\ge
\frac{m-q}{2m}.
\end{align*}

Taking expectation over history,

\begin{equation}\label{eq:had}
    \E
\left[
\underset{\bX \sim \nu}{\prob} [\widehat f(\bX) \neq f^*(\bX)
\right]
\ge
\frac{m-q}{2m}.
\end{equation}
Now suppose the auditor were $(\epsilon,\delta)$-PAC for every target in
the pseudo-cube.
Averaging over the uniform prior on $B$ gives

$$ \prob \left\{ \underset{\bX \sim \nu}{\prob} [\widehat f(\bX) \neq f^*(\bX)]>\epsilon \right\} \le \delta$$
Since
\[
0\le \prob \left\{ \underset{\bX \sim \nu}{\prob} [\widehat f(\bX) \neq f^*(\bX)]>\epsilon \right\} \le \delta)\le1,
\]
we then have
\begin{align*}
\mathbb E
[\underset{\bX \sim \nu}{\prob} [\widehat f(\bX) \neq f^*(\bX)]]
&\le
\epsilon
\prob\{\underset{\bX \sim \nu}{\prob} [\widehat f(\bX) \neq f^*(\bX)]\le\epsilon\}
+
\prob\{\underset{\bX \sim \nu}{\prob} [\widehat f(\bX) \neq f^*(\bX)]>\epsilon\}
\\
&\le
\epsilon(1-\delta)+\delta
\\
&=
\epsilon+\delta(1-\epsilon).
\end{align*}
Combining with \ref{eq:had},
\[
\frac{m-q}{2m}
\le
\epsilon+\delta(1-\epsilon).
\]

Therefore $$q \ge m\left[1-2\epsilon-2\delta(1-\epsilon)\right]$$
\end{proof}

\newpage

\subsection{Proof of \textsc{ALeBi} Upper Bounds}\label{app:ubmanipfree}

\alebiUB*

\begin{proof}
We organize the proof as follows: we begin by showing the active terms in the upper bound by adapting the proof of \cite{hanneke2014theory} to the multi-class learning problem, since  the probe learning problem reduces to multiclass learning where the size of labels depend on the cardinal of groups $k$.

\paragraph{Step 1: The passive term.}

Here, the passive learner is allocated failure probability $\delta/2$.
Fix $n = \left\lceil \frac{\shwartz +\log(4/\delta) }{ \epsilon } \right\rceil .$

By Theorem~\ref{thm:pabbaraju-passive}, the passive term  derives directly by drawing $n$ CGQ queries $\cQ_{1:m} = (\bX_1,\ldots,\bX_n)$ from the fixed coupling $\nu$ and obtaining their probe labels via $\modelowner$, which guarantees:

By the choice of $n$, the multiclass passive algorithm proposed in \cite{pabbaraju2026optimal} outputs $\hat{f}_{\cQ_{1:n}}$ such that: 

$$\underset{\cQ_{1:n}}{\prob}[\underset{\bX \sim \nu}{\prob} [\widehat f_{\cQ_{1:n}}(\bX) \neq f^*(\bX)] > \epsilon] \le \frac{\delta}{2} < \delta$$

In the following, fix $S \triangleq \{\bx_1, \bx_2, \cdots, \bx_n\}$ as the sample set used for passive learning, with $n = \left\lceil \frac{\shwartz +\log(4/\delta) }{ \epsilon } \right\rceil $.

\paragraph{Step 2: First active term.}

We assume that pool of CGQ queries is index by some set $I$. 
If $I\subseteq[N]$ denotes the set of indices that $\auditor$ have seen in the history of interaction with $\modelowner$, the corresponding version space is given by $\cV_I = \left\{ f\in\cF_k: f(\bx_j)=f^\star(\bx_j), \forall j\in I \right\},$ where $f^\star = f_{h^\star}$, $h^\star$ being the black-box model under audit.

For a nonempty set $I$, we say that $i\in I$
is an effective (informative) index if it exhibits disagreement within the multi-context class with respect to $f^ \star$, meaning there exists $f_i\in\cF_k$ such that for all $j\in I\setminus\{i\}$, $f_i(\bx_j) = f^\star(\bx_j)$ while $f_i(\bx_i)
\neq
f^\star(\bx_i).$ Let $E(I)$ denote such set of effective coordinates.

It is straightforward to see that $|E(I)| \le \min\{\mathfrak s_k,|I|\}$

Let $f_0, \{f_i\}_{i \in I}$ be such that they exhibit this disagreement within $\cF_k$, where $f_0 = f^\star$

For $i \neq j\in E(I)$, we have $f_i(\bx_j)=f^\star(\bx_j)$ while $f_i(\bx_i)\neq f^\star(\bx_i)$

  If $\bx_i=\bx_j$ for $i\neq j$, then a
witness for coordinate $i$ would have to both agree and disagree with
$f^\star$ at the same CGQ. Therefore the CGQs indexed by the effective set $E(I)$ must be distinct.

which shows that $\{\bx_i:i\in E(I)\}
$ is a multi-star set for $\cF_k$ centered at $f_0 = f^\star$.
By definition of $\mathfrak s_k$, $|E(I)|\le\mathfrak s_k$.

For an integer $n$ \footnote{$n$ here is arbitrary, we'll later use $n$ as the sample complexity for passive learning probes.}, let $s_n \triangleq \min\{\mathfrak s_k,n\}.$
For $I\subseteq[n]$, let $N(I)$ be the cardinal of CGQ queries when the
coordinates of $I$ are processed in a uniformly random order, and let
$\Psi(I) \triangleq \E \left[ e^{\lambda N(I)} \,\middle|\, S \right], $ where $ \lambda\ge0$.

We show by induction on $m=|I|$ that the following holds $$\Psi(I)\le\prod_{r=1}^m\left[1+(e^\lambda-1)\min\left\{1,\frac{s_n}{r}\right\}\right]$$

For $m=0$ the resulting identity is trivial.

Let $|I|=m\ge1$.  In a uniformly random ordering of $I$, the last
coordinate is uniformly distributed over $I$.

Conditionned on coordinate $i$ being last.  After all coordinates in
$I\setminus\{i\}$ have been processed, the current version space is $$ \left\{ f: f(\bx_j)=f^\star(\bx_j) \ \forall j\in I\setminus\{i\} \right\}$$
Therefore the final CGQ cooreponding to index $i$ is queried if and only if $i\in E(I)$.  Hence

$$N(I) = N(I\setminus\{i\}) + \mathds 1\{i\in E(I)\}$$

Averaging over the uniformly random last coordinate yields

$$\Psi(I) = \frac1m \sum_{i\in I} e^{\lambda\mathbf 1\{i\in E(I)\}} \Psi(I\setminus\{i\}).$$

Applying the induction hypothesis to each
$I\setminus\{i\}$ yields

$$\Psi(I) \le \Big(\prod_{r=1}^{m-1} \left[ 1+ (e^\lambda-1) \min\left\{1,\frac{s_n}{r}\right\} \right]\Big)  \Big( \frac1m \sum_{i\in I} e^{\lambda\mathbf 1\{i\in E(I)\}}\Big) $$

On the other hand, 
$$ \frac1m \sum_{i\in I} e^{\lambda\mathbf 1\{i\in E(I)\}} = 1+ (e^\lambda-1)\frac{|E(I)|}{m}$$
and since $|E(I)| \le \min\{\mathfrak s_k,|I|\}$, we have $\frac{|E(I)|}{m}\le\min\left\{1,\frac{s_n}{m}\right\}$
we conclude the induction.

For $I=[n]$ with the fact that $1+t\le e^t,$ we have $$\mathbb E \left[ e^{\lambda N_n} \,\middle|\, S \right] \le \exp \left( (e^\lambda-1)\sum_{r=1}^{n} \min\left\{1,\frac{s_n}{r}\right\} \right),$$
where $N_n \triangleq N([n])$.

For $\lambda=1$, and by Markov's inequality,
\[
\begin{aligned}
\prob
\left\{
N_n>b
\,\middle|\,
S
\right\}
&=
\prob
\left\{
e^{N_n}>e^b
\,\middle|\,
S
\right\}
\\
&\le
\exp
\left(
(e-1)\min\left\{1,\frac{s_n}{r}\right\} -b
\right).
\end{aligned}
\]

Taking $b=(e-1)\min\left\{1,\frac{s_n}{r}\right\}+\log\frac{2}{\delta}$, for every realized pool $S$,
\[
\prob
\left\{
N_n>b
\,\middle|\,
S
\right\}
\le
\frac{\delta}{2}
\]

The auditor $\auditor$ runs $k$-ALeBi for a budget of $b$ under the condition that if it is about to make query number $b+1$, it halts. Thus the truncated procedure makes at most
$b$ oracle queries on every execution.

If truncation does not occur, $k$-ALeBi has reconstructed every response in the original i.i.d. sample $S$ used for the probe passive leaning, and hence has recovered exactly the target probe.
The passive term in \textbf{Step 1} is then applied to this completed labeled sample and requires no additional oracle queries.

The failure events are therefore $Z_{\text{alebi}} = \{\text{$k$-ALeBi is truncated}\}$ for interactive setting

and $Z_{\mathrm{pass}} = \left\{ \underset{\bX \sim \nu}{\prob} [\widehat f_{S}(\bX) \neq f^*(\bX)] >\epsilon \right\}$ for passive setting.

In step 1, we have shown that $\prob(Z_{\mathrm{pass}})\le\frac{\delta}{2}$
And in step 2, we have shown that: $\prob(Z_{\mathrm{alebi}})\le\frac{\delta}{2}$
Therefore, by the union bound,
$$\underset{\cQ_{1:n}}{\prob}[\underset{\bX \sim \nu}{\prob} [\widehat f_{\cQ_{1:n}}(\bX) \neq f^*(\bX)] > \epsilon]  < \delta$$

Finally $1\le s_n\le n,$
then
\[
\begin{aligned}
\sum_{r=1}^{n} \min\left\{1,\frac{s_n}{r}\right\}
&=
\sum_{m=1}^{s_n}1
+
\sum_{m=s_n+1}^N\frac{s_n}{m}
\\
&\le
s_n
+
s_n\log\frac{n}{s_n}
\\
&=
s_n\log\frac{en}{s_n}.
\end{aligned}
\]

In particular, for $\mathfrak s_k\le n$, this yields a sample complexityb of 

\begin{equation}\label{eq:azkfgehz}
    (e-1)\mathfrak s_k
\log\frac{e n}{\mathfrak s_k}
+
\log\frac{2}{\delta}
+
1
\end{equation}

with $n= \left\lceil \frac{\shwartz +\log(4/\delta) }{ \epsilon } \right\rceil .$

\paragraph{Step 3: Second active term.}
We next obtain a deterministic exact-completion procedure on the same finite pool using the same fixed coupling over groups marginals. Let $\cV_S = \left\{ (f(\bx_1),\ldots,f(\bx_n)): f\in\cF_k \right\}$ denote the version space consistent with $S$.

Recall the definitions of specifying sets and extended teaching dimension in DEfinition \ref{def:xtd}. For an arbitrary labeling $g\in\mathcal Y_k^n,$ $S$ is a specifying set for $g$ with respect to $\cV_S$ if $\left| \{ v\in \cV_S:v|_S=g|_S \} \right| \le1.$ The extended teaching dimension is
\[
\operatorname{XTD}(\cV_S)
=
\max_{g\in\mathcal Y_k^n}
\min
\left\{
|S|:
S\text{ specifies }g\text{ with respect to }\cV_S
\right\}.
\]

We first establish the relation between XTD and the multiclass star
number that is needed in the algorithm:

$$\operatorname{XTD}(\cV_S)
\le
\min\{
\mathfrak s_k(\cV_S)+1,n
\}
\le
\min\{
\mathfrak s_k+1,n
\}$$

The bound by $n$ is trivial since all $n$ coordinates specify any
labeling with respect to $\cV_S$.

To prove the first inequality, fix an arbitrary $g\in\mathcal Y_k^n$
and let $S=\{i_1,\ldots,i_m\}$ be a specifying set of minimum cardinality.

If $m=0$ the bound terms become trivial. For every $r\in[m]$, minimality implies that $S\setminus\{i_r\}$
is not specifying.  Consequently there exist at least two distinct
traces in $\cV_S$ agreeing with $g$ on
$S\setminus\{i_r\}$.

Because $S$ itself is specifying, at most one of those traces can
also agree with $g$ at $i_r$.  We can therefore choose
$v_r\in \cV_S$ such that $v_r(i_r)\neq g(i_r)$
and for all $\ell\neq r$, $v_r(i_\ell)=g(i_\ell)$.

If $m\ge2$, let $v_1$ as a center and the
$m-1$ indices $i_2,\ldots,i_m.$

For every $r\ge2$, $v_r(i_r)\neq g(i_r)=v_1(i_r),$
while for every $\ell\ge2$ with $\ell\neq r$, $v_r(i_\ell)=g(i_\ell)=v_1(i_\ell).$

Hence $ i_2,\ldots,i_m$ is a multi-star set for $\cV_S$, centered at $v_1$, and consequently, $m-1\le\mathfrak s(\cV_S),$
Therefore $m\le\mathfrak s(\cV_S)+1.$

The same inequality is trivial for $m=1$.  Since $g$ was arbitrary $\operatorname{XTD}(\cV_S)
\le
\mathfrak s(\cV_S)+1.$ Finally, a star in $\cV_S$ lifts to a star on the corresponding CGQs in
$\mathcal F_k$, and therefore $\mathfrak s(\cV_S)\le\mathfrak s_k$

This proves 

$$\operatorname{XTD}(\cV_S)
\le
\min\{
\mathfrak s_k+1,n
\}$$

\medskip
\noindent

\begin{algorithm}[H]
\caption{\textsc{Probes Membership-Halving}}
\label{alg:rel-mh}
\begin{algorithmic}[1]

\Require Budget $m$, finite trace $\cV_S$.

\State Intialization:
\[
\cV \gets \cV_S.
\]

\While{$|V|>1$}

    \For{$i=1,\ldots,m$}

        \State Choose a maximizing response
        \[
        \phi_{\cV}(i)
        \in
        \arg\max_{y\in\mathcal Y_k}
        |\{v\in \cV:v(i)=y\}|.
        \]

        \State 
        \[
        a_i
        :=
        |\{v\in \cV:v(i)=\phi_{\cV}(i)\}|.
        \]

    \EndFor

    \State Choose a smallest specifying set $S$ for $\phi_{\cV}$ with respect to $V$.

    \State Choose
    \[
    i^\star
    \in
    \arg\max_{i\in S}
    (|V|-a_i).
    \]

    \State Query
    \[
    y^\star
    \gets
    f^\star(\mathbf X_{i^\star}).
    \]

    \State Update
    \[
    \cV
    \gets
    \{v\in \cV:v(i^\star)=y^\star\}.
    \]

\EndWhile

\State Let $v^\star$ denote the unique surviving trace in $\cV$.

\State \Return
\[
\left(
(\mathbf X_i,v^\star(i))
\right)_{i=1}^{n}.
\]

\end{algorithmic}
\end{algorithm}
Next we show that Algorithm \ref{alg:rel-mh} uses at most $m = 1+\operatorname{XTD}(\cV_S)\log|\cV_S|$ CGQs.

Let $V\subseteq \cV_S$ denote the current trace version space.
With the initialization step $\cV=\cV_S,$ for every index $i\in[n]$, Algorithm \ref{alg:rel-mh} chooses a maximizing response $ \phi_{\cV}(i)
\in
\arg\max_{y\in\mathcal Y_k}
|\{v\in \cV:v(i)=y\}|$
and let $a_i = |\{v\in \cV:v(i)=\phi_{\cV}(i)\}|.$

Since restricting a class cannot increase XTD, $\operatorname{XTD}(\cV)
\le \operatorname{XTD}(\cV_S).$

Hence there exists a set $S\subseteq[n]$ such that $
|S|\le\operatorname{XTD}(\cV_S)$
that specifies the labeling $\phi_{\cV}$.

At most one trace in $V$ agrees with $\phi_{\cV}$ on all coordinates in
$S$ thus every other trace disagrees with $\phi_{\cV}$ somewhere in
$S$, yielding $\sum_{i\in S}
(|\cV|-a_i)
\ge
|\cV|-1$

Thus there exists $i^\star\in S$ such that

\begin{equation}\label{eq:kuusd}
    |\cV|-a_{i^\star}
\ge
\frac{|\cV|-1}{\operatorname{XTD}(\cV_S)}.
\end{equation}

Algorithm \ref{alg:rel-mh} queries $y^\star = f^\star(\bx_{i^\star})$
and retain $\cV' = \{v\in \cV:v(i^\star)=y^\star\}.$

We distinguish between two cases;

If $y^\star=\phi_{\cV}(i^\star),$
then $|\cV'|=a_{i^\star}$
and inequality \ref{eq:kuusd} gives
\[
|\cV'|-1
\le
\left(
1-\frac1{\operatorname{XTD}(\cV_S)}
\right)(|\cV|-1)
\]

On the other hand, if $y^\star\neq\phi_{\cV}(i^\star)$
then the surviving response cell isn't a maximizing cell.
Since a maximizing cell has cardinality at least that of every other
cell $|\cV'|
\le
\frac{|\cV|}{2}$

When $\operatorname{XTD}(\cV_S)\ge2$,
\[
\frac{|\cV|}{2}-1
\le
\left(
1-\frac1{\operatorname{XTD}(\cV_S)}
\right)(|\cV|-1)
\]

If $\operatorname{XTD}(\cV_S)=1$, one specifying coordinate leaves at most one surviving
trace, so exact identification occurs after one query.

Thus, whenever $|\cV|>1$,
\[
|\cV'|-1
\le
\left(
1-\frac1{\operatorname{XTD}(\cV_S)}
\right)(|V|-1)
\]

After $q$ queries,
\[
|\cV_q|-1
\le
\left(
1-\frac1{\operatorname{XTD}(\cV_S)}
\right)^q
(|\cV_S|-1)
\le
e^{-q/\operatorname{XTD}(\cV_S)}
(|\cV_S|-1)
\]

If $q>\operatorname{XTD}(\cV_S)\log(|\cV_S|-1),$
then the right-hand side is strictly smaller than $1$.
Since $|\cV_q|-1$ is a nonnegative integer, it must equal zero.
Therefore $\cV_q$ contains exactly one trace.

Since the target trace is maintained via consistency with version space, that unique trace is exactly $(f^\star(\bx_1),\ldots,f^\star(\bx_n)).$

Hence the sample complexity of Algorithm \ref{alg:rel-mh} is bounded by $1+\operatorname{XTD}(\cV_S)\log|\cV_S|$, and since$ \operatorname{XTD}(\cV_S)
\le
\min\{
\mathfrak s_k+1,n
\}$, the sample complexity is bounded by $1+\min\{\mathfrak s_k+1,n\}\log|\cV_S|$

If $\Pi_{\mathcal F_k}(n)$ denotes the growth function over n CGQs, using the fact that $|\cV_S|
\le
\Pi_{\cF_k}(n),$ uniformly over every possible pool, we get a sample complecity of 

\begin{equation}\label{eq:jhfd}
    \cO(1+
\min\{\mathfrak s_k+1,n\}
\log\Pi_{\mathcal F_k}(n))
\end{equation}

ALgorithm \ref{alg:rel-mh} has no truncation failure as it reconstructs the
CGQ pool exactly. Thus the only failure event is that of
the passive multiclass learner whose probability is already at most $\delta/2$ from \textbf{step 1}.

A sharper result presented below is derived to improve the dependence on the extended teaching dimension by a logarithmic factor.

\begin{lemma}[Refined bounds for Algorithm \ref{alg:rel-mh}] \label{thm:refined-membership-halving} For every finite trace class $\cV \subseteq\mathcal Y^n$  there exists a deterministic query procedure that exactly identifies every target $v^\star\in \cV$ using $\cO\!\left( \frac{\operatorname{XTD}(\cV)} {1\vee\log\operatorname{XTD}(\cV)} \log|\cV| \right)$ queries. \end{lemma}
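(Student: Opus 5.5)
The plan is to adapt Hegedüs' refinement of the general membership-halving argument to our finite multiclass trace setting. The only changes needed are that labels range over $\mathcal Y_k$ rather than $\{0,1\}$ and that ``majority'' becomes ``plurality''. Throughout, write $d=\operatorname{XTD}(\cV)$ and assume $d\ge 3$; for $d\le 2$ the claimed bound coincides with the $1+\operatorname{XTD}(\cV)\log|\cV|$ bound already established for Algorithm~\ref{alg:rel-mh}. As in that algorithm, at each stage we work with the current version space $V$, the plurality labeling $\phi_V$, and a specifying set $S$ for $\phi_V$ with $|S|\le \operatorname{XTD}(V)\le d$, using that restriction does not increase XTD. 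For $i\in S$ set $r_i=(|V|-a_i)/|V|$. Specifiability gives $\sum_{i\in S} r_i\ge (|V|-1)/|V|$.

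\textbf{Phase structure.} Instead of spending one query per round, we run \emph{phases}. A phase queries the coordinates of $S$ sequentially, in decreasing order of $r_i$, and stops at the first answer differing from $\phi_V(i)$, or when $S$ is exhausted. Two facts bound the surviving fraction of $V$.
\begin{itemize}
\item A plurality answer at coordinate $i$ multiplies $|V|$ by $1-r_i$.
\item A non-plurality answer at $i$ leaves at most $\min\{\tfrac12,r_i\}|V|$ traces. The survivors lie in a non-maximal response cell, which is both at most half of $V$ and disjoint from the plurality cell.
\end{itemize}
So if the phase stops at its $j$-th query, the surviving fraction is at most $\min\{\tfrac12,r_{(j)}\}\prod_{i<j}(1-r_{(i)})$. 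If it exhausts $S$, at most one trace survives by the specifying property, and we are done.

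\textbf{Amortised accounting.} The key step is to show that a phase of $j$ queries removes $\Omega\big(\tfrac{j}{d}\log d\big)$ bits, i.e.
\[
\log\frac{|V|}{|V'|}\;\gtrsim\; \frac{j\log d}{d},
\]
or else it is one of $O(1)$ terminal phases. Summing over phases then gives total cost $O\big(\tfrac{d}{\log d}\log|\cV|\big)$. I would split on $j$.
\begin{itemize}
\item \emph{Small phases, $j\le d/\log d$.} The non-plurality answer alone gives a factor $\le\tfrac12$, i.e. one bit. This is at least $j\log d/d$ up to constants, as required.
\item \emph{Long phases, $j> d/\log d$.} The ordering gives $r_{(i)}\ge r_{(j)}$ for every $i<j$, so the survival fraction is at most $r_{(j)}(1-r_{(j)})^{j-1}\le r_{(j)}e^{-(j-1)r_{(j)}}$. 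If $r_{(j)}\ge \log d/d$, then $e^{-(j-1)r_{(j)}}\le e^{-(j-1)\log d/d}$, which already yields $\Omega(j\log d/d)$ bits. If instead $r_{(j)}<\log d/d$, then the factor $r_{(j)}$ itself is at most $\log d/d$, giving $\log(d/\log d)=\Omega(\log d)$ bits. In that subcase we must also charge $j\le |S|\le d$ queries, so the bits-per-query rate is at least $\log d/d$ as well.
\end{itemize}

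\textbf{Main obstacle.} I expect the hardest step to be making this calculus uniform, in particular controlling phases that reach deep into $S$ while every $r_i$ is tiny. There the crude bound $j\le d$ costs $d$ queries for only $\Theta(\log d)$ bits. This is exactly the claimed rate, but it leaves no slack, and the constants must be checked against the $\tfrac12$ cap.

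\textbf{Fallback.} If this direct case split is not tight enough, I would follow Hegedüs more closely. Truncate each phase after $\ell=\lceil d/\log d\rceil$ queries. An all-plurality truncated phase then removes a fraction $\ge 1-e^{-\sum_{i\le \ell} r_{(i)}}$, and since the top $\ell$ ratios carry at least an $\ell/d$ share of the total mass, this is $\ge 1-e^{-\ell/d}$. Such a phase is paid for by its progress on $\log|V|$, and a potential argument on $\log|V|$ closes the bound.
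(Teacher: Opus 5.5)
There is a genuine gap in the long-phase subcase $r_{(j)}\ge\log d/d$. That subcase rests on your first bullet: that a plurality answer at coordinate $i$ multiplies $|V|$ by $1-r_i$. This holds only for the first query of a phase. Your $r_i$ are fractions of the phase-start $V$, and the elimination sets $B_i=\{v\in V: v(i)\neq\phi_V(i)\}$, $i\in S$, can overlap heavily. After plurality answers on the first $j-1$ queried coordinates, the surviving fraction is $1-|B_{(1)}\cup\cdots\cup B_{(j-1)}|/|V|$, where $B_{(l)}$ is the set for the $l$-th queried coordinate. This can be close to $1-r_{(1)}$ rather than $\prod_{i<j}(1-r_{(i)})$. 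Likewise, $\min\{\tfrac12,r_{(j)}\}$ is a fraction of the phase-start $V$, so multiplying it by that product double counts.

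The defect is in the static ordering itself, not just the bookkeeping. Take binary labels and $S=\{1,\dots,m\}$ with $m\ge3$. Let $V$ consist of: one trace that is $0$ on $S$; traces $e_1,\dots,e_{m-1}$, with $e_l$ equal to $1$ on $S$ only at $l$; a block of $m$ traces equal to $1$ on $\{1,\dots,m-1\}$ and $0$ at $m$; and a block $B'$ of $m$ traces equal to $0$ on $\{1,\dots,m-1\}$ and $1$ at $m$. Traces inside a block are separated by nonzero binary codes on $O(\log m)$ extra coordinates, and all other traces are $0$ there, so $m\le\operatorname{XTD}(V)\le m+O(\log m)$. Then $\phi_V\equiv0$ and $S$ is a minimum specifying set for $\phi_V$. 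Moreover $r_l=(m+1)/(3m)>r_m=1/3$, so coordinate $m$ is queried last. If the target lies in $B'$, the phase answers $0$ on $1,\dots,m-1$, contradicts at $m$, and keeps exactly $B'$, a third of $V$. That is $m$ queries for $\log3$ bits. Your formula predicts a surviving fraction of at most $\tfrac13(\tfrac23)^{m-1}$, and your amortisation needs $\Omega(\log m)$ bits.

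Your fallback relies on the same multiplicativity. Even granting it, an all-plurality truncated phase is only guaranteed to remove a fraction of about $\ell/d\approx1/\log d$. That gives $O(1/\log d)$ bits for $\ell\approx d/\log d$ queries, a rate of $1/d$ bits per query, which only reproduces the unrefined $\operatorname{XTD}(\cV)\log|\cV|$ bound.

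The missing idea is the adaptive choice that drives Heged\"us' argument, and it is what the paper's proof uses. Keep $\phi_V$ fixed during the phase, but pick each next coordinate of $S$ to maximise $|\{v\in V_{\mathrm{cur}}: v(i)\neq\phi_V(i)\}|$ over the \emph{current} version space. Let $A_j$ be the set so eliminated at step $j$. The $A_j$ are pairwise disjoint subsets of $V$, and $|A_1|\ge|A_2|\ge\cdots$ because $A_{j+1}$ lies inside a set that was a candidate at step $j$; hence $j|A_j|\le|V|$.

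A first non-plurality answer at step $j$ leaves survivors inside $A_j$ and inside a non-plurality cell of $V$, so $|V'|\le\min\{\tfrac12,\tfrac1j\}|V|$. An all-plurality phase of $j$ queries ends with a single trace after removing at least $j$ traces. Either way, $\log(|V|/|V'|)\ge c\,(1\vee\log j)$. Since $j\le d$ and $t\mapsto t/(1\vee\log t)$ is nondecreasing, $j\le\frac{d}{1\vee\log d}(1\vee\log j)$, and summing over phases telescopes to the lemma. In the example above this ordering queries coordinate $m$ second.

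This also makes your case split on $r_{(j)}$ unnecessary. Your small-phase case and the subcase $r_{(j)}<\log d/d$ are correct as written, since they use only $|V'|\le\min\{\tfrac12,r_{(j)}\}|V|$.
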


\begin{proof}

For arbitrary label spaces $\operatorname{XTD}(\cV_S)
\le
\mathfrak s(\cV_S)$
and restricting on consitency with $S$ doesn't increase the star number $\mathfrak s(\cV_S)
\le
s_k.$
Since $\operatorname{XTD}(\cV_S)\le n$, then $\operatorname{XTD}(\cV_S)\le\bar s_k.$

We now apply the refined multiclass Membership Halving procedure. At the beginning of interaction, let $\cV$ be the current version space and
let $\phi_{\cV}(i)
\in
\arg\max_y
|\{v\in \cV:v(i)=y\}|$
be its maximizing labeling.

Fix a specifying set $S$ for $\phi_{\cV}$ of size at most $\operatorname{XTD}(\cV_S)$, query the remaining coordinate maximizing $|\{v\in V:v(i)\neq\phi_{\cV}(i)\}|,$
keeping $\phi_{\cV}$ fixed throughout the phase.

Suppose a phase starts with $n$ traces and the first contradiction to
the fixed plurality labeling occurs on its $q$-th query.

Let $A_j$ be the set of traces eliminated by a maximizing-consistent answer at
the $j$-th queried coordinate.

The sets $A_1,\ldots,A_q$
are pairwise disjoint, and by construction of algorithm \ref{alg:rel-mh}, we end up constraining a chain $|A_1|\ge|A_2|\ge\cdots\ge|A_q|.
$
Therefore $q|A_q|
\le
\sum_{j=1}^q|A_j|
\le n.$

After the contradiction, the surviving exact response cell is a
subset of $A_q$, therefore $|\cV'|
\le
\frac nq.$

If no contradiction occurs on the entire specifying set, the target
trace is uniquely identified.

Suppose  algorithm \ref{alg:rel-mh} performs $P$ phases.  For phase $p$, let
$n_{p-1}$
be the size of the version space at the beginning of the phase,
let $n_p$
be its size at the end of the phase, and let $q_p$
be the number of queries made during that phase.Then there exists a universal
constant $c>0$ such that
\begin{equation}\label{eqnnnn}
    \log\frac{n_{p-1}}{n_p}
\ge
c\,\operatorname{Log}(q_p),
\end{equation}
where $\operatorname{Log}(t) \triangleq 1\vee\log t.$

Moreover, since the phase queries only coordinates from a specifying
set,
\[
q_p
\le
\operatorname{XTD}(\cV_{p-1})
\le
\operatorname{XTD}(\cV_S)
\]

The function $t\longmapsto
\frac{t}{\operatorname{Log}(t)}$
is nondecreasing for $t\ge1$. Hence, from $q_p\le \operatorname{XTD}(\cV_S)$,
\[
\frac{q_p}{\operatorname{Log}(q_p)}
\le
\frac{\operatorname{XTD}(\cV_S)}{\operatorname{Log}(\operatorname{XTD}(\cV_S))}.
\]
Equivalently,
\[
q_p
\le
\frac{\operatorname{XTD}(\cV_S)}{\operatorname{Log}(\operatorname{XTD}(\cV_S))}
\operatorname{Log}(q_p).
\]

With the result of \ref{eqnnnn},
\[
q_p
\le
\frac{1}{c}
\frac{\operatorname{XTD}(\cV_S)}{\operatorname{Log}(\operatorname{XTD}(\cV_S))}
\log\frac{n_{p-1}}{n_p}.
\]

Summing over all $P$ phases gives sample complexity of 
\[
\begin{aligned}
\sum_{p=1}^{P}q_p &\le
\frac{1}{c}
\frac{\operatorname{XTD}(\cV_S)}{\operatorname{Log}(\operatorname{XTD}(\cV_S))}
\sum_{p=1}^{P}
\log\frac{n_{p-1}}{n_p}
\\
&=
\frac{1}{c}
\frac{\operatorname{XTD}(\cV_S)}{\operatorname{Log}(\operatorname{XTD}(\cV_S))}
\log\frac{n_0}{n_P}.
\end{aligned}
\]
The algorithm starts from $n_0=|\cV_S|$
and terminates when a unique trace remains, that is $n_P=1.$
Therefore a sample complexity of
$\frac{\operatorname{XTD}(\cV_S)}
{\operatorname{Log}\operatorname{XTD}(\cV_S)}
\log|\cV_S|$ suffices

\end{proof}

Applying lemma~\ref{thm:refined-membership-halving} with $|\cV_S| \le \Pi_{\mathcal F_k}(n), $ and therefore a set of size $ O\!\left( \frac{ \min\{\mathfrak s_k,n\} }{ 1\vee \log\min\{\mathfrak s_k,n\} } \log\Pi_{\mathcal F_k}(n) \right)$ suffices. 

On the other hand, a binary point $v=(v_1,\ldots,v_k)$ from the cube $\{0,1\}^k$ induces probe response $T(v)=(v_i-v_j)_{i<j}$

If for two points $v,w$, one has $T(v)=T(w)$, then for all $i<j$, $v_i-w_i=v_j-w_j$, making $v-w$ constant.  The only distinct pair
with this property is $(v,w) =(0^k,1^k)$. Hence the CGQ response alphabet has size $2^k-1.$

We have $\shwartz(\cV_S)\le\min\{\shwartz(\cF),n\}$, applying the multiclass Sauer lemma to the finite class $\cV_S\subseteq[2^k-1]^N$ yields
\[
|\cV_S|
\le
\sum_{j=0}^{\min\{\shwartz,n\}}
\binom nj
(2^k-1-1)^j
=
\sum_{j=0}^{\min\{\shwartz,n\}}
\binom nj
(2^k-2)^j.\]

Consequently,
for $\min\{\shwartz,n\}\ge1$,
\[
\log|\cV_S|
\le
\min\{\shwartz,n\}
\log
\left(
\frac{
en(2^k-2)
}{
\min\{\shwartz,n\}
}
\right)
\]

Implement this result over \ref{eq:jhfd} gives the sample complexity of 
\[
\min\{\mathfrak s_k,n\}
\min\{\shwartz,n\}
\log
\left(
\frac{
en(2^k-2)
}{
\min\{\shwartz,n\}
}
\right)
\]

With $n= \left\lceil \frac{\shwartz +\log(4/\delta) }{ \epsilon } \right\rceil .$

Combining the bounds derived in step1,2 and 3 yields the result.

\end{proof}

\newpage

%% file: appendix/manip.tex
\newpage
\section{Proofs for Fairness-aware Manipulation Regime} \label{app:robust-proofs}

We begin by showing the following lemma

\begin{lemma}[Accuracy of one neighborhood simulation]
\label{lem:local-majority}
Under Assumptions~\ref{ass:local-replicability} and~\ref{ass:attack},
\[
\underset{q \sim \nu}{\prob}\!\left(
\widehat f(q)\neq f^{\star}(q)
\right)
\le
\exp\!\left(
-\frac{R(1-2\beta)^2}{2}
\right).
\]
\end{lemma}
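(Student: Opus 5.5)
Fix a requested CGQ $q=\bX_t$ and let $\bY_1,\ldots,\bY_R$ be the responses returned by \textsc{LocalLabel} on the $R$ local draws $\bX'_r\sim\cD_Q(\cdot\mid\cJ(q))$. The plan is to show that each response equals the clean label $f^{\star}(q)$ with conditional probability at least $1-\beta>1/2$ given the past, and then apply an Azuma--Hoeffding bound to the count of bad responses. The majority vote can be wrong only if the number of responses different from $f^{\star}(q)$ is at least $R/2$: if strictly more than half the responses equal $f^\star(q)$, that value is the unique plurality. It therefore suffices to bound $\prob\bigl(\sum_{r=1}^R B_r\ge R/2\bigr)$, where $B_r=\mathbf 1\{\bY_r\neq f^{\star}(q)\}$.

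\textbf{Step 1 (per-draw error).} Let $\bbH_{r-1}$ be the full history, including the previous local draws and responses. A bad response requires either a local mismatch $M_r=\mathbf 1\{f^{\star}(\bX'_r)\neq f^{\star}(q)\}$ or an attack $A_r=1$. Since $\bX'_r$ is drawn independently of $\bbH_{r-1}$ and $\cJ$ is fixed before any interaction, Assumption~\ref{ass:local-replicability} applied to $f=f^\star\in\cF$ gives $\prob(M_r=1\mid \bbH_{r-1})\le\rho$. Definition~\ref{ass:attack} gives $\prob(A_r=1\mid \bbH_{r-1},\bX'_r,f^{\star}(\bX'_r))\le p$. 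On $\{M_r=0\}$, a bad response needs $A_r=1$, so by the tower rule
\[
\prob(B_r=1\mid\bbH_{r-1})\le \prob(M_r=1\mid\bbH_{r-1})+\E\bigl[\mathbf 1\{M_r=0\}\,p\mid\bbH_{r-1}\bigr]\le \rho+(1-\rho)p=\beta .
\]
The last inequality holds because $m+(1-m)p$ is nondecreasing in $m$ when $p\le1$.

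\textbf{Step 2 (concentration).} Set $D_r=B_r-\prob(B_r=1\mid\bbH_{r-1})$. This is a martingale difference sequence taking values in an interval of length $1$. Then $\sum_r B_r\ge R/2$ implies $\sum_r D_r\ge R(1/2-\beta)$. The Azuma--Hoeffding inequality with bounded ranges gives the bound $\exp(-2R^2(1/2-\beta)^2/R)=\exp(-R(1-2\beta)^2/2)$. This is exactly the claimed bound because $\beta<1/2$ by Definition~\ref{ass:attack}. If one wants the statement averaged over $q\sim\nu$, integrate this conditional bound, which holds uniformly in $q$.

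\textbf{Main obstacle.} The delicate point is Step 1, because the adversary is adaptive. The attack decision may depend on the previous responses and on the auditor's current disparity estimate. For this reason, independence cannot be assumed and a plain Hoeffding bound does not apply. The argument must pass through conditional probabilities given $\bbH_{r-1}$.

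It also has to be checked that the replicability bound $\rho$ still holds conditionally on the history. This uses the facts that the cell map is chosen before interaction and that the local draws are fresh and independent, so conditioning on the past does not change the law of $\bX'_r$.

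If one prefers to avoid martingales, an alternative is to dominate $\sum_r B_r$ stochastically by a $\mathrm{Binomial}(R,\beta)$ variable through a coupling with uniform variables, and then apply the standard Hoeffding bound.
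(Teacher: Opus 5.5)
Your proposal is correct and follows the paper's proof essentially step for step. The steps coincide with the paper's: the conditional per-draw bound $m_r+(1-m_r)p\le\rho+(1-\rho)p=\beta$ (from the freshness of the local draw and the conditional attack bound), the inclusion of the majority-vote error in $\{\sum_r Z_r\ge R/2\}$, and a conditional Hoeffding (Azuma) bound on the martingale differences with $t=R(1/2-\beta)$.

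One caveat, which you inherit from the paper's text: $\beta<1/2$ does not follow from $p<1/2$ and $\rho<1/2$. For example, $\rho=p=0.4$ gives $\beta=0.64$, and then the lemma can fail. So $\beta<1/2$ should be stated as an explicit hypothesis, which is exactly what your concentration step needs for $t\ge 0$.
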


\begin{proof}
Fix $q$ a CGQ query and define $Z_r \triangleq
\mathds 1\{\widetilde Y_r\neq f^{\star}(q)\}.$
\paragraph{Step 1: bound the conditional error probability.}
Let $m_r \triangleq
P\!\left(
f^{\star}(q'_r)\neq f^{\star}(q)
\,\middle|\,
\bbH_{r-1}
\right)$
Since $q'_r$ is a fresh draw from $P_Q(\cdot\mid J(q))$,
Assumption~\ref{ass:local-replicability} gives $m_r\le\rho.$

If the clean local response differs from $f^{\star}(q)$, the prob error is bounded by $1$. Otherwise an error can occur only through
an attack, whose conditional probability is at most $p$. Hence
\[
\begin{aligned}
\mathbb E[Z_r\mid\bbH_{r-1}]
&\le
m_r+(1-m_r)p \\
&\le
\rho+(1-\rho)p
=
\beta
\end{aligned}
\]

\paragraph{Step 2: Failure mode vs the number of errors.}
If $\sum_{r=1}^R Z_r<\frac R2$
then strictly more than half of the responses equal $f^{\star}(q)$.
Therefore $f^{\star}(q)$ is the unique mode. Thus
\[
\{\widehat Y(q)\neq f^{\star}(q)\}
\subseteq
\left\{
\sum_{r=1}^R Z_r\ge\frac R2
\right\}
\]

\paragraph{Step 3: Concentration of bad events.}
Fix $\mu_r:=\mathbb E[Z_r\mid\bbH_{r-1}]$ and $D_r:=Z_r-\mu_r.$
We have 
\[
\mathbb E[D_r\mid\bbH_{r-1}]=0,
\]
and, conditionally on $\bbH_{r-1}$, $D_r$ lies in an interval of
length one. The conditional Hoeffding lemma therefore gives
\[
\mathbb E\!\left[
e^{\lambda D_r}
\,\middle|\,
\bbH_{r-1}
\right]
\le
e^{\lambda^2/8}.
\]
Iterating this bound and applying Chernoff yields
\begin{equation}\label{eq:robi}
    \prob\!\left(
\sum_{r=1}^R D_r\ge t
\right)
\le
\exp\!\left(-\frac{2t^2}{R}\right)
\end{equation}

Since $\mu_r\le\beta$,
\[
\left\{
\sum_{r=1}^R Z_r\ge\frac R2
\right\}
\subseteq
\left\{
\sum_{r=1}^R D_r
\ge
R\left(\frac12-\beta\right)
\right\}.
\]
Replacing $t=R\left(\frac12-\beta\right)$
in \ref{eq:robi} yields
\[
\prob\!\left(
\sum_{r=1}^R Z_r\ge\frac R2
\right)
\le
\exp\!\left(
-\frac{R(1-2\beta)^2}{2}
\right).
\]
Combining this with step 2 proves the result.
\end{proof}
\robustalebiub*

\begin{proof}
Fix $p>0$ and set $N=N_0(\epsilon,\delta/2).$

Consider the $t$-th CGQ requested by \textsc{$k$-ALeBi}, conditionned on
all previous CGQ responses being uncorrupted. By
Lemma~\ref{lem:local-majority} and the definition of $R$,
\[
\prob\!\left(
\widehat Y_t\neq f^{\star}(q_t)
\,\middle|\,
\text{all previous CGQs are clean}
\right)
\le
\frac{\delta}{2N}
\]
There are at most $N$ requested CGQs. Therefore
\[
\prob\!\left(
\text{at least one CGQ response is corrupted}
\right)
\le
\frac{\delta}{2}
\]

When running \textsc{Robust $k$-ALeBi} and \textsc{$k$-ALeBi} with the same unlabeled sample and
the same internal randomization, on the event that every CGQ
response is correct, and both executions receive the same interaction history with $\modelowner$
and therefore return the same output.

Let $F_0$ be the failure event of this clean execution. By assumption, we have $\prob(F_0)\le\frac{\delta}{2}.$
Hence the failure event of the robust algorithm is contained in $\{\text{there exists CGQ response that was corrupted}
\}
\cup F_0.$
A union bound proves the $(\epsilon,\delta)$-PAC guarantee of learning the probe.

Finally, by definition of $N$, at most $N$ clean CGQ requests are sent to $\modelowner$, each using exactly $R$ oracle attacks. This proves the upper bounds sample complexity. Each accepted local CGQ
requires at most $1/\gamma$ unlabeled draws in expectation, which gives
a sample complexity of $\left\lceil
\frac{2N_0(\epsilon,\delta/2)}{\gamma(1-2\beta)^2}
\log
\frac{
2N_0(\varepsilon,\delta/2)
}{\delta}
\right\rceil$.

\end{proof}

\robustLB*

\begin{proof}

We first prove the lower bound when the $\modelowner$ is honest.

We fix a fairness-effective star of size $m$ with center $f_0$, witnesses
$f_1,\ldots,f_m$, and support $q_1,\ldots,q_m$, and let $\nu
=
\frac1m
\sum_{j=1}^m\delta_{q_j}$.

Since $f_0$ is zero on the entire support, $\mu(f_0)=0.$
For every $i$, $f_i$ is zero except at $q_i$. Since
$f_i(q_i)\neq\mathbf 0$ and individual coordinates belong to
$\{-1,0,1\}$, at least one coordinate has one in abs. Thus $\mu(f_i)=\frac1m.$
By the choice of $m$, $\frac1m\ge4\varepsilon.$

Now fix $i$ and couple executions under $f_0$ and $f_i$ using the same unlabeled sample and the same internal randomness. Until $q_i$ is queried, the two histories are identical. Let $A_i$ be the event that $q_i$ is never queried. On $A_i$, both
executions return the same estimate. But $\frac1m\ge4\varepsilon$, thus $\epsilon$-accuracy
intervals around $0$ and $1/m$ are disjoint. Therefore the
common estimate cannot be correct under both targets.

Since each execution fails with probability at most $\delta$, $\nu_{f_0}(A_i)\le2\delta.$
Thus $$\nu_{f_0}(q_i\text{ is queried})
\ge
1-2\delta$$

Summing over queries, $\mathbb E_{f_0}
\left[
|\{\text{distinct queried support points}\}|
\right]
\ge
m(1-2\delta).$
The number of distinct queried CGQs is at most the total number of
oracle requests, proving the desired result.

To prove the resulting lower bound in the fairness aware adversarial regime, one can observe that the first term follows from the same indistinguishability argument as in
Lemma~\ref{lem:local-majority}. We prove the attack-dependent term.

\paragraph{Step 1: construct two attacked environments.}
Fix $i\in[m]$. Under $P_0$, the target is $f_0$. On $J_i$, the clean
response is $\mathbf 0$, and the attack returns $f_i(q_i)$ with probability
$p$.

Under $P_i$, the target is $f_i$. On $J_i$, the clean response is
$f_i(q_i)$, and the attack returns $\mathbf 0$ with probability $p$.
Outside $J_i$, use the same observation law under both environments.

After identifying the two possible responses on $J_i$ with $\{0,1\}$,
one observation from $C_i$ has law
\[
\operatorname{Bern}(p)
\quad\text{under }P_0,
\qquad
\operatorname{Bern}(1-p)
\quad\text{under }P_i.
\]

\paragraph{Step 2: compute the history divergence.}
Let $N_i$ be the number of oracle requests in $J_i$. Since the two
environments differ only there, the chain rule for adaptive KL divergence
gives

\begin{equation}\label{eq:iuzehf}
    D_{\rm KL}(P_0\|P_i)
=
\mathbb E_0[N_i]D_{\rm KL} \left( \operatorname{Bern}(p) \,\middle\|\, \operatorname{Bern}(1-p) \right)
\end{equation}

\paragraph{Step 3: Distinguish the environments.}
As proved previously, $\mu(f_0)=0,$ and $\mu(f_i)=\frac1m\ge4\varepsilon.$

Thresholding $\widehat\mu$ at $1/(2m)$ gives a binary test between $P_0$
and $P_i$ with both error probabilities at most $\delta$. By data
processing,
\[
D_{\rm KL}(P_0\|P_i)
\ge
D_{\rm KL}
\left(
\operatorname{Bern}(\delta)
\,\middle\|\,
\operatorname{Bern}(1-\delta)
\right)
\]
With \ref{eq:iuzehf} gives
\[
\mathbb E_0[N_i]
\ge
\frac{D_{\rm KL}
\left(
\operatorname{Bern}(\delta)
\,\middle\|\,
\operatorname{Bern}(1-\delta)
\right)}{D_{\rm KL} \left( \operatorname{Bern}(p) \,\middle\|\, \operatorname{Bern}(1-p) \right)}.
\]

Summing over cells for $i=1,\ldots,m$,
\[
\mathbb E_0[T]
\ge
m\frac{D_{\rm KL}
\left(
\operatorname{Bern}(\delta)
\,\middle\|\,
\operatorname{Bern}(1-\delta)
\right)}{D_{\rm KL} \left( \operatorname{Bern}(p) \,\middle\|\, \operatorname{Bern}(1-p) \right)}.
\]
Combining this with the indistinguishability lower bound proves
$$\mathbb E[T]
\ge
m
\max\left\{
1-2\delta,
\frac{D_{\rm KL}
\left(
\operatorname{Bern}(\delta)
\,\middle\|\,
\operatorname{Bern}(1-\delta)
\right)}{D_{\rm KL}
\left(
\operatorname{Bern}(p)
\,\middle\|\,
\operatorname{Bern}(1-p)
\right)}
\right\}$$

$$\mathbb E[T]
\ge
m
\max\left\{
1-2\delta,
\frac{(1- 2 \delta) \log \frac{1- \delta}{\delta}}{(1- 2 p) \log \frac{1- p}{p}}
\right\}$$.
\end{proof}

As the corruption increases and $p$ gets closer to $1/2$,
\[
D_{\rm KL} \left( \operatorname{Bern}(p) \,\middle\|\, \operatorname{Bern}(1-p) \right)
=
2(1-2p)^2
+
O((1-2p)^4),
\]
while for $\delta$ close to $O$
\[
D_{\rm KL}
\left(
\operatorname{Bern}(\delta)
\,\middle\|\,
\operatorname{Bern}(1-\delta)
\right)
=
\Theta(\log(1/\delta))
\]
THus, in the high-noise regime and high-confidence regime, the sample complexity becomes comparable to the one derived in Theorem \ref{th:SCLB} $\Omega\!\left(
\min\left\{
\mathfrak s_\mu(\mathcal F),
\frac1\varepsilon
\right\}
\frac{\log(1/\delta)}
{(1-2p)^2}
\right)$

%% file: appendix/malliscious.tex
\newpage
\section{Auxiliary Definitions and Lemmas}

\begin{theorem}[Distirbution-free realizable multiclass learning \small{\cite{pabbaraju2026optimal}}]
\label{thm:pabbaraju-passive}
Let $\cF_k$ be a multiclass hypothesis class of Daniely-Shwartz dimension $\shwartz$.
Then there exists a learning algorithm $\cA$ such that, a sample complexity of $\left(\frac{\shwartz+\log(1/\delta)}{\epsilon}\right)$ is sufficient to learn $\cF_k$ in the distirbution-free setting under realizability.
\end{theorem}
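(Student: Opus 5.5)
The statement at the very end is Theorem~\ref{thm:pabbaraju-passive}: realizable multiclass learning with sample complexity $\cO\big((\shwartz+\log(1/\delta))/\epsilon\big)$. This is an imported result, so the plan is not to rederive the optimal bound from scratch. Instead I would reduce our setting to the hypotheses of the cited multiclass theorem and check that they hold.

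\textbf{Step 1: place the probe class in the standard framework.} View $\cF_k$ as a class of functions $\cX\to\cY_k$, where the label set is the CGQ response alphabet. As computed in the proof of Theorem~\ref{prop:ManipfreeUB}, this alphabet has size $2^k-1$. The sampling distribution is the fixed coupling $\nu$, and the learner receives i.i.d.\ examples $(\bX_t, f^\star(\bX_t))$. Realizability holds automatically because $f^\star=f_{h^\star}\in\cF_k$. Finally, note that Definition~\ref{def:shwartz} coincides with the usual DS dimension of this class.

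\textbf{Step 2: produce the learner.} I would follow the one-inclusion-graph route of Daniely--Shwartz and Brukhim et al.
\begin{itemize}
\item Finite DS dimension bounds the orientation out-degree of the one-inclusion hypergraph on any $n+1$ points by $\cO(\shwartz)$. This gives a transductive learner with expected error $\cO(\shwartz/n)$.
\item The leave-one-out argument converts this into an expected-risk bound.
\item To obtain high-probability, $\log(1/\delta)$-additive bounds without extra $\log(1/\epsilon)$ factors, apply the sample-compression/boosting-and-aggregation scheme of Pabbaraju.
\end{itemize}

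\textbf{Main obstacle.} The delicate step is removing the logarithmic factors to reach the optimal $(\shwartz+\log(1/\delta))/\epsilon$ rate. List-learning reductions naturally lose polylog factors. For this step I would simply invoke the cited optimal result rather than reprove it. I would also confirm that its constants do not depend on $|\cY_k|$, which matters since that size grows with $k$.
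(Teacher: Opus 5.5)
Your proposal ends up where the paper does, but the sketch in your Step~2 should not be presented as a proof route. The paper gives no proof of Theorem~\ref{thm:pabbaraju-passive}: it imports the result from the cited work. It uses it only in Step~1 of the proof of Theorem~\ref{prop:ManipfreeUB}, where it draws $n$ i.i.d.\ CGQs from the fixed coupling $\nu$ and labels them by $f^\star=f_{h^\star}\in\cF_k$. Your Step~1 is exactly the hypothesis check this use needs: realizability, i.i.d.\ sampling from $\nu$, Definition~\ref{def:shwartz} being the standard DS dimension, and a bound that does not depend on the $2^k-1$ response labels. Reading the bound with an $\cO(\cdot)$ is also right. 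The paper's statement drops the constant and later takes $n=\lceil(\shwartz+\log(4/\delta))/\epsilon\rceil$ literally, which the citation justifies only up to a universal factor.

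The problem is the first bullet of Step~2. An $\cO(\shwartz)$ bound on the out-degree of orientations of one-inclusion hypergraphs of classes with DS dimension $\shwartz$ is not a result you can quote from Daniely--Shalev-Shwartz or from Brukhim et al. Brukhim et al.\ route through list learning, compression and boosting, and obtain only suboptimal rates, precisely because no such bound was available.

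The sketch is also inconsistent with your own ``main obstacle'' paragraph. Suppose that bullet held. Then a transductive learner with error $\cO(\shwartz/n)$, combined with a leave-one-out-to-high-probability aggregation (the Aden-Ali--Cherapanamjeri--Shetty--Zhivotovskiy framework), would already give $\cO\big((\shwartz+\log(1/\delta))/\epsilon\big)$. There would be no list-learning reduction and no logarithmic factors to remove. So the only load-bearing content of your proposal is the citation plus the Step~1 check, which matches the paper. Either drop the Step~2 sketch or flag it explicitly as heuristic, and state that the out-degree bound is assumed, not established.
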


\begin{definition}[Extended Teaching Dimension]
\label{def:xtd}
Let $\cV\subseteq \mathcal Y^m$
be a finite hypothesis class over the finite domain $[m]=\{1,\ldots,m\}.$ For an arbitrary labeling $g\in\mathcal Y^m,$
a set $S\subseteq[m]$ is said to specify $g$ with respect to $\cV$ when
\[
\left|
\left\{
v\in \cV:
v|_S=g|_S
\right\}
\right|
\le 1.
\]

The extended teaching dimension of $V$ is
\[
\operatorname{XTD}(\cV)
\triangleq
\max_{g\in\mathcal Y^m}
\min
\left\{
|S|:
S\subseteq[m]
\text{ specifies }g\text{ with respect to } \cV
\right\}.
\]
\end{definition}

\begin{lemma}[Freedman's inequality \small{\cite{freedman1975tail, tropp2011freedman}} ]
\label{lem:freedman}
Suppose that $D_t\le b$ almost surely for every $t$. Then, for every $s\geq 0$ and $v>0$,
\[
\prob\left(
S_T\geq s,\;
V_T\leq v
\right)
\leq
\exp\left(
-\frac{s^2}{2(v+bs/3)}
\right).
\]
More generally, the original Freedman inequality is uniform over
time..
\end{lemma}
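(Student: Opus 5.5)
The plan is the standard exponential-supermartingale argument. Throughout, $(D_t)_{t\ge1}$ is a martingale difference sequence adapted to a filtration $(\bbH_t)$, with $S_T=\sum_{t\le T}D_t$ and predictable quadratic variation $V_T=\sum_{t\le T}\E[D_t^2\mid\bbH_{t-1}]$. The lemma leaves these objects implicit; this is the reading under which it is applied, as in the concentration step of Lemma~\ref{lem:local-majority}.

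\textbf{Step 1 (one-step mgf bound).} Fix $\lambda\ge0$ and set $g(\lambda)=(e^{\lambda b}-1-\lambda b)/b^2$. The function $x\mapsto(e^{\lambda x}-1-\lambda x)/x^2$ (extended by continuity at $0$) is nondecreasing, so for $x\le b$ we have $e^{\lambda x}\le 1+\lambda x+g(\lambda)x^2$. Apply this to $x=D_t$, take conditional expectations, and use $\E[D_t\mid\bbH_{t-1}]=0$ together with $1+u\le e^u$. This gives
\[
\E\bigl[e^{\lambda D_t}\mid\bbH_{t-1}\bigr]\le\exp\bigl(g(\lambda)\,\E[D_t^2\mid\bbH_{t-1}]\bigr).
\]

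\textbf{Step 2 (supermartingale and stopping).} By Step 1, $M_t=\exp\bigl(\lambda S_t-g(\lambda)V_t\bigr)$ is a nonnegative supermartingale with $M_0=1$. On the event $\{S_T\ge s,\,V_T\le v\}$ we have $M_T\ge e^{\lambda s-g(\lambda)v}$, so Markov's inequality gives
\[
\prob(S_T\ge s,\,V_T\le v)\le\exp\bigl(-\lambda s+g(\lambda)v\bigr).
\]
For the time-uniform version mentioned after the lemma, apply the same bound at the stopping time $\tau=\inf\{t:S_t\ge s,\ V_t\le v\}$ via optional stopping (or Ville's maximal inequality), using that $V_t$ is nondecreasing.

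\textbf{Step 3 (optimize and relax).} Choose $\lambda=b^{-1}\log(1+bs/v)$. This yields the Bennett form $\exp\bigl(-\tfrac{v}{b^2}\,h(bs/v)\bigr)$ with $h(u)=(1+u)\log(1+u)-u$. Then use the elementary inequality $h(u)\ge \tfrac{u^2}{2(1+u/3)}$ for $u\ge0$. One way to prove it is to compare derivatives: both sides vanish at $0$, together with their first derivatives, so it suffices to compare second derivatives. Alternatively, bound $g(\lambda)\le\tfrac{\lambda^2}{2(1-\lambda b/3)}$ for $\lambda b<3$ directly via the series $\sum_k\lambda^kb^{k-2}/k!\le\tfrac{\lambda^2}{2}\sum_j(\lambda b/3)^j$, and take $\lambda=s/(v+bs/3)$. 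Either route gives $\exp\bigl(-s^2/(2(v+bs/3))\bigr)$.

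\textbf{Main obstacle.} The only nonroutine point is Step 1. The monotonicity of $x\mapsto(e^{\lambda x}-1-\lambda x)/x^2$ is what makes the argument use only the one-sided bound $D_t\le b$. I would verify it by writing the function as $\lambda^2\int_0^1(1-\theta)e^{\lambda\theta x}\,d\theta$, which is visibly nondecreasing in $x$. The rest is bookkeeping with the supermartingale.
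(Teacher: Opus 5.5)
Your proof is correct and follows the standard route. The paper gives no proof of its own here; it only cites Freedman and Tropp, and your argument (the exponential supermartingale $\exp(\lambda S_t-g(\lambda)V_t)$, Bennett-form optimization, then Bernstein relaxation) is Freedman's original argument and the scalar case of Tropp's. One small correction to your framing: the concentration step of Lemma~\ref{lem:local-majority} uses the conditional Hoeffding lemma, not this inequality, and the paper never invokes Freedman anywhere, so the definitions of $D_t$, $S_T$, $V_T$ and the implicit assumption $b>0$ are yours to supply, as you did.
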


\begin{definition}[Minimum Test Cover \small{\citep{crowston2012parameterized,garey1990guide}}]\label{def:testcover}
Let $V = \{v_1,\ldots,v_n\}$ be a set of objects and let
$\cT = \{T_1,\ldots,T_m\}$ be a collection of subsets of $V$,
called tests. A test $T \in \mathcal{T}$ separates two distinct items
$v_i,v_j \in V$ if exactly one of them belongs to $T$, i.e., $|\{v_i,v_j\}\cap T| = 1.$
A subcollection $\mathcal{T}' \subseteq \mathcal{T}$ is called a
\emph{test cover} if every pair of distinct items in $V$ is separated
by at least one test in $\mathcal{T}'$.

The \emph{Minimum Test Cover} problem consists of finding a test cover
$\mathcal{T}'$ of minimum cardinality: $\min_{\mathcal{T}' \subseteq \mathcal{T}} |\mathcal{T}'|$
subject to
\[
    \forall\, v_i,v_j\in V,\ i\neq j,\quad
    \exists\,T\in\mathcal{T}' :
    |\{v_i,v_j\}\cap T|=1.
\]
\end{definition}

\begin{theorem}
The Minimum Test Cover problem is NP-hard.
\end{theorem}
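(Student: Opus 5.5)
The decision version asks whether a test cover of size at most $K$ exists. It lies in NP: a certificate $\mathcal T'\subseteq\mathcal T$ can be checked against all $\binom{n}{2}$ item pairs in polynomial time. For hardness I would not build a reduction from scratch. I would follow the classical route of \citet{garey1990guide} (problem SP6), which reduces from \textsc{3-Dimensional Matching}, or equivalently from \textsc{Exact Cover by 3-Sets} (X3C). The step by step plan below is the reduction I would write out. The gadget details are my reconstruction, not something I have already checked against \citet{garey1990guide}, so the hardest step (the reverse direction) could still fail.

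\emph{Construction from X3C.} The input is $U$ with $|U|=3q$ and a family $\mathcal C$ of $3$-subsets. The items are the elements of $U$ plus a distinguished dummy item $d$. Each $S\in\mathcal C$ becomes a test $T_S=S$. Such a test separates $d$ from every $u\in S$, so any test cover must cover $U$. These tests cannot separate two elements lying in the same chosen triple. To handle this, I would add a fixed family of \emph{auxiliary} tests that are useless for separating $d$ from $U$. Their job is to separate every pair of elements of $U$ and every pair of elements inside any triple of $\mathcal C$. One concrete choice uses about $\lceil\log_2(3q)\rceil$ tests encoding the binary index of each element. Each such test must avoid $d$, so no auxiliary test contains $d$. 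Each auxiliary test is then duplicated, or paired with a fresh companion item, so that every auxiliary test is forced into any optimal cover. Set $K=q+A$, where $A$ is the number of forced auxiliary tests.

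\emph{Correctness, forward direction.} If an exact cover of size $q$ exists, the $q$ corresponding tests separate $d$ from all of $U$. The auxiliary tests separate everything else, so a test cover of size $K$ exists.

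\emph{Correctness, reverse direction (main obstacle).} Suppose a test cover of size at most $K$ exists. After discounting the $A$ forced auxiliary tests, at most $q$ triple tests remain. These must separate $d$ from all $3q$ elements, so they must cover $U$. Since each triple has size $3$, covering $3q$ elements with at most $q$ triples forces them to form an exact cover. The delicate points are twofold. First, the auxiliary tests must be genuinely forced and must never help separate $d$ from $U$; excluding $d$ from all of them, and making each one the unique separator of some fresh item pair, handles this. Second, the gadget must stay polynomial in size. I would first try the binary-indexing gadget with one fresh item per auxiliary test, and then verify the counting argument above.

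Both directions give a polynomial-time reduction, so \textsc{Minimum Test Cover} is NP-hard, and with membership in NP it is NP-complete.
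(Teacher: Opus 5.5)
\paragraph{The gap: your auxiliary tests cannot avoid separating $d$ from $U$.}
The reverse direction fails at exactly the step you flagged, and no tuning of the gadget fixes it. A test that does not contain $d$ separates $d$ from every element it contains. So excluding $d$ does the opposite of what you need: a test is useless for the pair $\{d,u\}$ only if it contains both or neither. More generally, any test that separates two elements $u,u'\in U$ separates $d$ from one of them, whether or not it contains $d$. Hence any auxiliary family that separates all pairs of $U$ gives the elements of $U$ pairwise distinct membership vectors, and at most one of these coincides with the vector of $d$. The forced auxiliary tests therefore already separate $d$ from all but at most one element of $U$. Your binary-index gadget shows this concretely: $d$ is separated from every $u_i$ with $i\neq 0$. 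The triple tests are then needed only to separate $d$ from $u_0$, so the budget $K=q+A$ no longer encodes exact cover. The premise of your counting argument, that the remaining $\le q$ triple tests must separate $d$ from all $3q$ elements, is false. Separately, duplicating a test never forces it into a cover. Only being the unique separator of some pair does that.

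\paragraph{The missing idea: one dummy per class.}
The auxiliary tests should separate only across a partition into classes, with one dummy item per class. Then the auxiliary tests cannot separate a dummy from its own class. This is why the standard reduction starts from 3DM rather than X3C; the paper itself gives no proof and just cites Garey and Johnson (SP6).

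Given $W,X,Y$ of size $q$ and $M\subseteq W\times X\times Y$, take:
\begin{itemize}
\item items $W\cup X\cup Y\cup\{w_0,x_0,y_0\}$;
\item tests $\{w,x,y\}$ for $(w,x,y)\in M$, together with $W\cup\{w_0\}$ and $X\cup\{x_0\}$;
\item budget $q+2$.
\end{itemize}
The two class tests are forced, since they are the unique separators of $\{w_0,y_0\}$ and $\{x_0,y_0\}$. They separate every cross-class pair, including the three elements of any triple, but no within-class pair.

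So the at most $q$ remaining triple tests must contain every $w\in W$, to separate it from $w_0$, and likewise every element of $X$ and of $Y$. Each triple meets each class exactly once, so these triples form a perfect matching. Conversely, a perfect matching plus the two class tests is a cover of size $q+2$.

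Your NP-membership argument is fine. Replace your gadget with this tripartite one.
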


%% file: appendix/experiments.tex
\section{Additional Experiments Results and Details}\label{secapp:exp}

\subsection{Fairness-aware Adversarial Regime}\label{app:robust}

The finite-pool implementation follows the neighborhood mechanism above,
but constructs the cells using a concrete target-independent pseudometric.

Let $G$ be a fixed reference panel of candidate probe hypotheses, selected
without knowledge of the deployed target. Define
\[
d_G(q,q')
:=
\frac1{|G|}
\sum_{f\in G}
\mathbf 1\{f(q)\neq f(q')\}
\]
For every candidate CGQ $q$, its cell $J(q)$ is fixed before any oracle
interaction and consists of the $256$ nearest distinct CGQs under $d_G$.
Neither the current version space nor observed target responses are used
to construct these cells.

For $p>0$, the implementation in the experimental section uses
\[
\beta=p+\rho-p\rho,
\qquad
\rho=0.05,
\qquad
\delta=0.05,
\]
and
$$R(p)
=
\left\lceil
\frac{2}{(1-2\beta)^2}
\log\frac{2N}{\delta}
\right\rceil,$$
where $N$ bounds the number of clean CGQ decisions allowed by budget?

When \textsc{Robust $k$-ALeBi} requests $q_t$, it also samples $R(p)$ distinct
CGQs from $J(q_t)$ without replacement and receives response $\widetilde Y_{t,1},\ldots,\widetilde Y_{t,R}.$ Then , $\widehat Y_t
=
\operatorname{mode}
\{
\widetilde Y_{t,1},\ldots,\widetilde Y_{t,R}
\},$
and uses this as the estimated response of $q_t$.

If the remaining exposure budget cannot pay for all $R(p)$ local queries,
 \textsc{Robust $k$-ALeBi} abstains rather than using a smaller block. When $p=0$,
the cell size is consummed, so \textsc{Robust $k$-ALeBi} coincides 
with  \textsc{ $k$-ALeBi}.

The implemented attacks use independent Bernoulli-$p$ attack,
which are a special case of Assumption~\ref{ass:attack}. When an attack
occurs, the experiment chooses an incorrect relational response that
increases the current absolute statistical-parity estimation error.

\subsection{Auditors Baselines}\label{app:baselines}

The supplied \cite{yan2022active} implementation \textsc{DiamAudit} performs a
search in the current linear version space for hypotheses with high and low statistical-parity values and queries a point on which they disagree. The Yan curves use the original version-space average statistical-parity
error computed by a hit-and-run evaluation protocol on the model class. For \textsc{$k$-ALeBi}, we use a Monte-Carlo approximation of the same linear-model family. A  CGQ contains one individual from each group and
therefore may cost up to two exposed individual predictions per coordinate $i \le k$. The horizontal axis measures these individual predictions exposed allowing model extraction attacks.

The original implementation of \cite{yan2022active} baseline is not run on German Credit. The supplied
implementation assumes a binary protected attribute, whereas our German
Credit setup contains four \texttt{status\_and\_sex} groups, for a total of $k=6$ groups. 

For the plotted comparison where $k=2$, \textsc{$k$-ALeBi} and \textsc{DiamAudit} are evaluated
using the same statistical-parity unfairness $\mu
=
\left|
P(h=1\mid A=1)
-
P(h=1\mid A=0)
\right|$ with absolute value estimation error $|\widehat\mu-\mu^\star|$. The original saved \cite{yan2022active} endpoint appears only in the validation table because it retains the evaluation convention of their implementation.

 All our computations are performed on an 11th Gen Intel® Core™ i7-1185G7 processor (3.00
GHz, 8 cores) with 32.0 GiB of RAM.